\newtheorem{theorem}{Theorem}[section]
\newtheorem{lemma}[theorem]{Lemma}
\newtheorem{definition}[theorem]{Definition}
\newtheorem{example}{Example}[section]
\newcommand{\cochain}[1]{\mathcal{C}^{#1}(G;\mathcal{F})}
\newcommand{\cohomology}[1]{H^{#1}(G;\mathcal{F})}
\newcommand{\sheaflap}{\Delta^{\mathcal{F}}}
\newcommand{\sheaflaparg}[1]{\Delta^{#1}}
\def\leqt{\ThisStyle{\mathrel{%
  \stackinset{r}{.75pt+.15\LMpt}{t}{.1\LMpt}{\rule{.3pt}{1.1\LMex+.2ex}}{\SavedStyle\leqslant}%
}}}
\journal{}
\begin{document}

\begin{frontmatter}

%% Title, authors and addresses

%% use the tnoteref command within \title for footnotes;
%% use the tnotetext command for theassociated footnote;
%% use the fnref command within \author or \address for footnotes;
%% use the fntext command for theassociated footnote;
%% use the corref command within \author for corresponding author footnotes;
%% use the cortext command for theassociated footnote;
%% use the ead command for the email address,
%% and the form \ead[url] for the home page:
%% \title{Title\tnoteref{label1}}
%% \tnotetext[label1]{}
%% \author{Name\corref{cor1}\fnref{label2}}
%% \ead{email address}
%% \ead[url]{home page}
%% \fntext[label2]{}
%% \cortext[cor1]{}
%% \affiliation{organization={},
%%             addressline={},
%%             city={},
%%             postcode={},
%%             state={},
%%             country={}}
%% \fntext[label3]{}

\title{On the use of graph models to achieve individual and group fairness}

%% use optional labels to link authors explicitly to addresses:
%% \author[label1,label2]{}
%% \affiliation[label1]{organization={},
%%             addressline={},
%%             city={},
%%             postcode={},
%%             state={},
%%             country={}}
%%
%% \affiliation[label2]{organization={},
%%             addressline={},
%%             city={},
%%             postcode={},
%%             state={},
%%             country={}}

\author[inst3]{Arturo P\'erez-Peralta}

\author[inst2,inst3]{Sandra Ben\'itez-Peña}
\author[inst2,inst3]{Rosa E. Lillo }

\affiliation[inst2]{organization={Department of Statistics. Universidad Carlos III de Madrid},%Department and Organization
%            city={Madrid},
 %           state={Community of Madrid},
            country={Spain}}
\affiliation[inst3]{organization={uc3m-Santander Big Data Institute. Universidad Carlos III de Madrid},%Department and Organization
%            city={Madrid},
 %           state={Community of Madrid},
            country={Spain}}

\begin{abstract}
%% Text of abstract
Machine Learning algorithms are ubiquitous in key decision-making contexts such as justice, healthcare and finance, which has spawned a great demand for fairness in these procedures. However, the theoretical properties of such models in relation with fairness are still poorly understood, and the intuition behind the relationship between group and individual fairness is still lacking. In this paper, we provide a theoretical framework based on \emph{Sheaf Diffusion} to leverage tools based on dynamical systems and homology to model fairness. Concretely, the proposed method projects input data into a bias-free space that encodes fairness constrains, resulting in fair solutions. Furthermore, we present a collection of network topologies handling different fairness metrics, leading to a unified method capable of dealing with both individual and group bias. The resulting models have a layer of interpretability in the form of closed-form expressions for their SHAP values, consolidating their place in the responsible Artificial Intelligence landscape. Finally, these intuitions are tested on a simulation study and standard fairness benchmarks, where the proposed methods achieve satisfactory results. More concretely, the paper showcases the performance of the proposed models in terms of accuracy and fairness, studying available trade-offs on the Pareto frontier, checking the effects of changing the different hyper-parameters, and delving into the interpretation of its outputs.
\end{abstract}

\begin{keyword}
%% keywords here, in the form: keyword \sep keyword
algorithmic fairness \sep machine learning \sep bias mitigation \sep sheaf diffusion \sep fair topology \sep topological deep learning
\end{keyword}

\end{frontmatter}

%% \linenumbers

%\begin{figure}[h]
%    \centering
%    \includegraphics[width=0.75\linewidth]{FSDIntro.pdf}
%    \caption{Encoding a fairness metrics as a linear vector $f$ allows us to design dynamical systems based on graph diffusion such that, in the time limit, the solution to the system will be aligned with $f$ and thus achieve fairness with respect to $f$.}
%    \label{fig:Intro}
%\end{figure}

%% main text
\section{Introduction}
\label{sec:Introduction}

In recent years, advances in Artificial Intelligence (AI) and Machine Learning (ML) have unleashed unprecendented improvements in automation, becoming a key tool in tasks like speech recognition \citep{SpeechRecognition}, computer vision \citep{VisionReview} and recommendation \citep{RecommenderReview}. However, the use of these automated systems in critical decision-making contexts such as personnel selection \citep{HIRING} or healthcare \citep{HEALTHCARE} raises concerns about the negative impact they might have on certain demographic groups  or individuals \citep{COMPAS}, leading to a widespread demand from both a social and legal perspective for fair models aware of these issues \citep{CRITICALSURVEY, PRESIDENT}. This is the fundamental goal of fair Machine Learning: to demonstrate that it is possible to build accurate models while mitigating harmful biases through careful data processing \citep{FAIRMLBOOK}. Significant progress has been made across multiple domains, including tabular data \citep{JASA_REVIEW}, images \citep{VISION}, text \citep{TEXT} and policy-making \citep{JASA_POLICY}. This work delves into the relation between relational data and bias motivated by the recent successes found in graph-based classifiers, whose foundation lies on the relationship between combinatorial data and true reasoning \citep{INDUCTIVEBIAS}, providing astonishing results in various tasks such as fraud detection \citep{FRAUD}, recommendation \citep{RECSYS}, and social modeling \citep{SOCIAL}.\\
In particular, this paper investigates the use of graph models in fair ML, offering a unified framework that encompasses both group and individual fairness and introducing tools with demonstrable practical effectiveness and deep theoretical implications as a consequence of the underlying object of study. In short, sheaves are algebraic geometric objects that provide a link between topology and abstract algebra, assigning algebraic objects to the open sets of a topological space. This work focuses on their application to combinatorial data in the form of cellular sheaves and Sheaf Diffusion (SD), which has led to the development of a rich theory capable of modeling complex phenomena such as opinion dynamics \citep{OPINION}, and provides a useful framework capable of answering certain fundamental questions about relational models linked to topics ranging from oversmothing to heterophily while achieving state-of-the-art results on common benchmarks \citep{NSD}. In light of these results, this paper defines sheaf models capable of addressing and mitigating bias, being, to our knowledge, the first study relating cellular sheaves to algorithmic prejudice.\\

\subsection{Contributions}
\label{subsec:Contributions}
We provide a unifying framework based on network topology and cellular sheaves capable of tackling both individual and group bias by codifying fairness constraints as a set of algebraic equations that determine the kernel of a linear map. Concretely, this paper defines a collection of graphs encoding fairness constraints from non-graph data, and sheaves which encourage the minimization of said metrics, both at the individual and group level. The resulting model is flexible enough to fulfill the role of pre-processor, in-processor, and post-processor, and is capable of tackling intersectional bias. The intuitions presented are backed by a series of theoretical results which we reproduce for easier availability. Furthermore, the resulting model has interesting interpretability properties in the form of closed-form SHAP values \citep{SHAP}, thus achieving another dimension of responsible AI. Finally, we achieve notable results both in terms of bias and performance on common fairness benchmarks, consolidating the approach introduced in the work.  Moreover, we perform an extensive analysis of the available trade-offs between fairness and accuracy by studying the Pareto frontiers resulting from a grid search, the effect of different hyper-parameters on the used metrics, and the difference between the SHAP values of the proposed models and the benchmark.\\

\subsection{Structure} 
\label{subsec:Structure}
This work is organized as follows: Section~\ref{sec:Related} begins with a review of related work in fairness and graph literature. This exposition is followed by Section~\ref{sec:Theory}, which presents the theoretical background, delving into the details of fair classification both at the group and individual level, and presenting the theory behind cellular sheaves and Sheaf Diffusion. Section~\ref{sec:SheafModels} introduces the concrete models for Fair Sheaf Diffusion (FSD), which are built over a collection of fairness-encouraging network topologies presented in Section~\ref{subsec:TopologicalFairness}. Finally, Section~\ref{sec:Experiments} explains the experimental setup and discusses the results, and Section~\ref{sec:Conclusion} recapitulates the most important ideas synthetizing our conclusions.

\section{Related work}
\label{sec:Related}
This section is devoted to introducing the most significant aspects of the state of the art in three topics that will be linked in this work: fairness, sheaf models, and network topologies.
\subsection{Fairness}
\cite{FAIRMLBOOK} provide a comprehensive compilation on group fairness on tabular data, explaining the motivation, metrics, and methods behind bias mitigation. In practical terms, \cite{EJORSurvey} compare fairness processors in the setting of credit scoring, analyzing profit-fairness trade-offs. By comparison, individual fairness is still relatively underdeveloped with debate over even the most fundamental metrics. \cite{LIPSCHITZ} attempted a first stab at this problem by formalizing a notion of bias at the individual level through the notion of the Lipschitz condition, while \cite{CONSISTENCY} propose a definition based on the $k$ nearest neighbors. Finally, we attempt to provide a unified perspective on these matters, which is similar to \cite{ENTROPY}, who propose a framework to address and quantify the individual-group bias trade-offs. \\
\subsection{Sheaf models}
Sheaf Neural Networks were first introduced by \cite{SNN} and then gained a certain degree of notoriety thanks to \cite{NSD}. Our approach is similar to \cite{SNN} in that we propose a set of \emph{hand-crafted} sheaves, leaving the question on how to extend our methodology to an end-to-end framework like \cite{NSD} for future research. Related work could have applications in fairness, including non-linear sheaf diffusion \citep{NONLINEAR}, using higher-order connection Laplacians \citep{HIGHERORDER}, or considering sheaf models based on the wave equation \citep{SHEAFSUR}. Finally, readers interested in more comprehensive treatments of sheaf theory are encouraged to consult Chapter 2 of \cite{VAKIL}, while \cite{CELLULAR} provides a detailed discussion on cellular sheaves.\\
\subsection{Network topologies}
The interaction of graph models and fairness has already been explored in the literature. \cite{GRAPHINDIVIDUAL} propose a message passing model to achieve a fair ranking at the individual level, while \cite{GRAPHGROUP} propose a fair architecture on graph neural networks to achieve group fairness. However, this work distinguishes itself by proposing graph configurations on tabular data which might not have a readily-available graph, thus gaining a certain degree of generality. One of the ideas presented in this paper revolves around imposing a global graph structure to achieve enhanced results, which can be traced back to the original PageRank paper by \cite{PAGERANK}, who avoid rank sinks by adding a global source of rank. \cite{FAIRPAGERANK} explore how this very structure lends itself to applications in fairness. 

\section{Theoretical background}
\label{sec:Theory}

This Section presents the theoretical background necessary in order to understand the work. First, we will explain the problem of classification with a sensitive variable and present both group and individual fairness metrics. Then, we will delve into the theory of cellular sheaves, exposing the basics of the tools borrowed from this field with the goal of understanding SD and the main results we build upon.

\subsection{Fair classification}
\label{subsec:Fair}

Suppose a dataset, $D = (X, Y, A)$, where $X$ is a set of $d$ attributes, $Y$ is the target variable, and $A$ represents sensitive information. In the general case, $A$ may comprise one or more sensitive attributes; for the sake of simplicity and without loss of generality, we focus on the common scenario where $A$ consists of a single binary variable, i.e., $A \in \{0,1\}$. In this setting, we assume that one of the two values of $A$ represents a privileged group, while the other represents a protected group particularly affected by prejudice. More concretely, we adopt the convention that $A=1$ signifies belonging to the privileged group while $A=0$ represents the protected group. For example, when modeling the gender wage gap one would assign $A=1$ to men and $A=0$ to women. Furthermore, we also assume that the target is binary, $Y \in \{0,1\}$. The goal of statistical classification is then to develop a model, $\hat{Y} : dom(X) \longrightarrow \{0,1 \}$, which receives the name of classifier, while maximizing its accuracy (or, equivalently, while minimizing its error rate),
\begin{equation}
    \text{max}\ \mathbb{P}(\hat{Y} = Y ) \equiv \text{min} \ \mathbb{P} (\hat{Y} \neq Y ),
\end{equation}
where $\mathbb{P}$ denotes the probability of an event. However, in the context of fairness it makes more sense to maximize the balanced accuracy, that is, the mean group accuracy,
\begin{equation}
\textit{balanced accuracy} = \frac{1}{|A|} \sum_{a\in A}  \mathbb{P}(\hat{Y} = Y | A = a).
\label{balacc}
\end{equation}
Other important performance quantities to take into consideration  are the entries of the confusion matrix:
\begin{itemize}
    \item \textbf{True positive ratio:} $TPR = \mathbb{P}(\hat{Y}  = 1 \mid Y = 1)$.
    \item \textbf{True negative ratio:} $TNR = \mathbb{P}(\hat{Y}  = 0 \mid Y = 0)$.
    \item \textbf{False positive ratio:} $FPR = \mathbb{P}(\hat{Y}  = 1 \mid Y = 0)$.
    \item \textbf{False negative ratio:} $FNR = \mathbb{P}(\hat{Y}  = 0 \mid Y = 1)$.
\end{itemize}
Usually, the classifier $\hat{Y}$ is obtained by thresholding a numeric model $f: dom(X) \longrightarrow \mathbb{R}$ that ranks the different observations according to their probability of success. Formally, a threshold $\tau$ is set and the classifier is built by thresholding the score $f$, that is, $\hat{Y} = 1$ if $f(X) > \tau$ and $\hat{Y} = 0$ otherwise. In this work, we consider that the predictive model $f$ can be represented in a parametric form as $f_{\theta}$, where $\theta \in \Theta \subset \mathbb{R}^{n_{\theta}}$ is a vector parameter, usually chosen by minimizing a loss function that smoothly approximates accuracy and is denoted by $\mathcal{L}(f;\theta)$. In the case of binary classification this loss is given by the negative log-likelihood:

\begin{equation*}
    \mathcal{L}(f_{\theta};\theta) = \sum_{i=1}^n y_i \log f_{\theta}(x_i) + (1-y_i) \log (1-f_{\theta}(x_i))
\end{equation*}

Finally, another dimension of trustworthy AI complementary to fairness is found in interpretability \citep{EXPLAIN}. One of the most powerful tools of interpretable ML is given by SHAP values \citep{SHAP}. This tool provides a general perspective on additive model explanations, crowning itself as the only interpretative model satisfying a series of desirable properties which further generalizes common explainability tools. The biggest shortcoming of SHAP values lies in their expensive computation. However, the use of linear models can overcome this weakness thanks to the existence of a closed-form expression for the SHAP values, which will become particularly relevant as we introduce our methodology. In particular, if a linear model with $d$ features is parametrized as $f(x) = \beta_0 + \sum_{j=1}^d \beta_j x_{j}$, the SHAP contribution of feature $k$ on individual $i$ is given by

\begin{equation}
    \phi_{ki} = \beta_k \left( x_{ik} - \mathbb{E}(x_{k}) \right),
    \label{SHAPEQ}
\end{equation}
where $\mathbb{E}(x_{j})$ denotes the expected value of feature $j$.

Now the only question that remains is how to measure bias and prejudice.

\subsection{Group Fairness metrics}
\label{subsec:Group}

Group Fairness metrics quantify discrimination against a protected group by aggregating model performance at the group level and checking if there are any notable differences. Although the literature is packed with metrics of this nature, most of them are equivalent to just three \citep{FAIRMLBOOK}: Independence, Separation, and Sufficiency; all of which can be understood as statements about the statistical independence of the classifier, $\hat{Y}$, the response, $Y$, and the sensitive attribute, $A$.

\subsubsection{Independence}
\label{subsubsec:Independence}

A classifier $\hat{Y} $ is said to satisfy independence at a threshold $\tau$ if

\begin{equation}
\mathbb{P}[\hat{Y}  = 1 \mid A = 1 ] = \mathbb{P}[\hat{Y}  = 1 \mid A = 0 ].
\label{INDEPENDENCE}
\end{equation}
That is, independence requires that the distribution of the classifier is independent of $A$. However, this assumption might be counterproductive if the underlying distribution of the data is not independent itself, in which case enforcing this metric may lead to an inaccurate model \citep{EJORSurvey}. In order to use this metric, some institutions propose to bound the quotient of the quantities in \eqref{INDEPENDENCE}, generally by $0.8$ which is commonly referred to as the \emph{80\% rule} \citep{EMPLOYEESELECTION}. In our case we will measure independence by the absolute value of the deviation in the above probabilities,

\begin{equation}
IND = | \mathbb{P}[\hat{Y} = 1 \mid A = 1 ] - \mathbb{P}[\hat{Y}  = 1 \mid A = 0 ]|.
\label{IND}
\end{equation}
Therefore, a positive value for $IND$ implies a divergence from the equality in \eqref{INDEPENDENCE}. Hence, the closer $IND$ is to zero, the lower the discrimination is.

\subsubsection{Separation}
\label{subsubsec:Separation}

A classifier $\hat{Y}$ is said to satisfy separation at a threshold $\tau$ if
\begin{subequations}
\begin{align}
    \mathbb{P}[\hat{Y}  = 1 \mid Y = 0, A = 1 ] = \mathbb{P}[\hat{Y}  = 1 \mid Y = 0, A = 0 ], \label{SEPARATION1} \\
   \mathbb{P}[\hat{Y}  = 1 \mid Y = 1, A = 1 ] = \mathbb{P}[\hat{Y}  = 1 \mid Y = 1, A = 0 ].
    \label{SEPARATION2}
\end{align}
\end{subequations}
That is, a score fulfils separation if all groups have equal error rates. The positive outcome is not assumed to be equally distributed, but ideally the error rate across population groups are equalized across the different values of the response; that is, a classifier that aims for separation does not try to improve global accuracy at the cost of misclassifying individuals from the minority group. This metric also receives the name \emph{equal odds}. A relaxation can be found in \emph{equal opportunity} in which only equation \eqref{SEPARATION1} is satisfied \citep{POSTEQODDS}.\\ 
Separation can be measured with the unweighted average of the absolute value of the deviation of the false positive and negative rates,
\begin{equation}
SEP = \frac{1}{2} | (FPR_{A = 1} - FPR_{A = 0} ) + (FNR_{A = 1} - FNR_{A = 0} )|.
\label{SP}
\end{equation}
$SEP$ has a similar interpretation to $IND$: the closer it is to zero, the closer the classifier is to achieve separation and the lower the discrimination.\\

\subsubsection{Sufficiency}
\label{subsubsec:Sufficiency}

A score $\hat{Y}$ is said to satisfy sufficiency at a threshold $\tau$ if
\begin{equation}
\mathbb{P}[Y = 1 \mid\hat{Y}  = 1, A = 1 ] = \mathbb{P}[Y = 1 \mid \hat{Y}  = 1, A = 0 ].
\label{SUFFICIENCY}
\end{equation}
That is, it requires that all the information of the target variable is contained in the classifier. This metric is related with the notion of calibration, that is, that the probabilities given by the model reflect actual probabilities. However, this metric has been heavily criticized because it fails to accurately measure bias and prejudice, see for instance \cite{EJORSurvey}.
In any case, sufficiency can be measured as the absolute value of the deviation of the above probabilities,
\begin{equation}
SUF = |\mathbb{P}[Y = 1 \mid \hat{Y}  = 1, A = 1 ] - \mathbb{P}[Y = 1 \mid \hat{Y}  = 1, A = 0 ] |.
\label{SF}
\end{equation}

\subsection{Individual Fairness Metrics}
\label{subsec:Individual}

On the other hand, Individual Fairness metrics are born from the notion that similar individuals should receive similar treatment. This change of philosophy shakes the very focus of the metrics, which must now accomodate a notion of similarity on both individuals and treatments, which is manifested through a pair of distance functions $d_X, d_Y$ on the response variable and covariates, respectively. The choice of distance functions is hardly trivial. Some works exploring the use of different metrics and their effects on Individual Fairness include \cite{CONSISTENCYCRITIC} and \cite{LIPSCHITZ}. In general, this choice requires deep domain knowledge of the use case to faithfully encode similarity between individuals. In our case, we have opted to use the total variation for the predictions and the euclidean metric for the covariates for simplicity. Our idea is to do an apples-to-apples comparison by checking the effect of the proposed processing on distance-dependent metrics under a constant choice of said distance. It is possible that processing the data or the model with a \emph{different} metric could lead to better results with the \emph{desired metric}, in a kind of apples-to-oranges situation. For example, creating network topologies based on the Mahalannobis distance might lead to a better consistency result with the euclidean metric. Nonetheless, this connection is not obvious to us and we relegate it to future work. 

Returning to the topic of individual fairness, despite the recency of the field, there are three promising metrics which have garnered certain degree of notoriety: Lipschitz Constants, Consistency and Generalized Entropy.

\subsubsection{Lipschitz constant}
\label{subsubsec:Lipschitz}

The notion that similar observations should yield similar results is related to ideas of continuity. In particular, \cite{LIPSCHITZ} formalize these concepts through the Lipschitz condition; that is, a function $f: X \longrightarrow Y$ between two metric spaces $X$ and $Y$ with respective metrics $d_X$ and $d_Y$ is said to be Lipschitz continuous if there exists a constant $L > 0$ such that

\begin{equation*}
    d_Y(f(x_1), f(x_2)) \leq L d_X(x_1, x_2)
\end{equation*}
for all $x_1, x_2 \in X$. A model $f$ is then said to be individually fair if the Lipschitz constant is less than or equal to one. The fact that we are dealing with finite datasets instead of (possibly) uncountable metric spaces means that in practise all proposed models have an effective Lipschitz constant. Effectively, assume $x_1 \neq x_2$:

\begin{multline*}
d_Y(f(x_1), f(x_2)) = \frac{d_Y(f(x_1), f(x_2))}{d_X(x_1, x_2)} d_X(x_1, x_2) \\ \leq \max_{x_1 \neq x_2} \left\{ \frac{d_Y(f(x_1), f(x_2))}{d_X(x_1, x_2)}\right\} d_X(x_1, x_2),
\end{multline*}
hence we can quantify how individually fair our model is by computing the maximum quotient of distances. However, we have opted instead to compute the $0.99$ quantile to obtain a more stable measurement:

\begin{equation*}
    LIP = Q\left(0.99, \left\{ \frac{|f(x_1) - f(x_2)|}{\|x_1 - x_2 \|_2} \mid x_1, x_2 \in X, x_1 \neq x_2 \right\}\right),
\end{equation*}
where we use $Q(p; X)$ to denote the quantile function of distribution $X$ at point $p$. In this conext, a model is individually fair if $LIP \leq 1$.\\
Another issue with Lipschitz constants is that they can become unwieldly to use, resulting in restrictive programs to achieve the desired debiasing. \cite{LOCALLIPSCHITZ} propose to relax this conditions by instead aiming for a model that is locally Lipschitz: that is, a function $f: X \longrightarrow Y$ is said to be locally Lipschitz if for all $x\in X$ there exists a neighbor $x \in B_x \subset X$ such that $f\mid_{B_x}$ is Lipschitz. This notion serves as a bridge between the more restrictive Lipschitz condition and a more tractable local notion of fairness, achieving a trade-off between fairness and efficiency.

\subsubsection{Consistency}
\label{subsubsec:Consistency}

Another local relaxation of individual fairness is found in consistency, which compares every observation with its $k$-Nearest Neighbors (kNN), resulting in a metric that is less computationally demanding. In particular, it measures the mean difference between the prediction for a given individual and the average prediction of its kNN. Formally, given a metric $d$ on the covariates $X$, let $\mathcal{N}^k_d(x)$ denote the set of the kNN of observation $x$ according to metric $d$. The consistency of a function $f$ is then expressed as

\begin{equation*}
     1 - \frac{1}{n} \sum_{i=1}^n \left|f(x_i) - \frac{1}{k} \sum_{z\in \mathcal{N}_d^k(x)} f(z)\right|.
\end{equation*}
Note that, expressed in this manner, consistency is the only direct measure of fairness we have presented; that is, the bigger it is the fairer the model. In order to stay consistent with our use of metrics we instead use the following measure of unfairness:

\begin{equation*}
     CON_{d}^{k} = \frac{1}{n} \sum_{i=1}^n \left|f(x_i) - \frac{1}{k} \sum_{z\in \mathcal{N}_d^k(x)} f(z)\right|,
\end{equation*}
where both $k$ and $d$ may be omitted if clear from context; in our case, we will use five neighbors and the euclidean distance. This way, the smaller the value of $CON_{d}^{k}$, the fairer the model is.
In spite of the conceptual and computational advantages associated with this metric it is not without its shortcomings. \cite{CONSISTENCYCRITIC} are concerned with how consistency may obscure individual fairness through the averaging, and its sensitivity to the underlying metric. We insist in using the metric due to its simplicity, emphazising the importance of understanding its pitfalls. 

\subsubsection{Generalized entropy}
\label{subsubsec:Entropy}

\cite{ENTROPY} propose a collection of metrics to quantify both individual and group bias, providing a unifying perspective on the matter. This collection is given by the family of Generalized Entropy indices which is parametrized by a real number $\alpha \neq \{0, 1\}$

\begin{equation*}
    \mathcal{E}^\alpha (\mathbf{b}) = \frac{1}{n\alpha(1-\alpha)} \sum_{i = 1}^n \left[ \left(\frac{b_i}{\mu}\right)^{\alpha} - 1 \right],
\end{equation*}
where $b_i$ represents the ``benefit'' perceived by individual $i$ and $\mu$ is the average benefit. In binary classification there are four possible values the benefits can take corresponding to true and false positives and negatives. This metric provides an information theoretic measurement of the mean deviation from the average $\mu$ weighted by an exponent $\alpha$, thus measuring individual fairness. The most interesting property of this metric is found in its additive decomposition into within-group and between-group components, measuring both individual and group bias at the same time and formalizing their trade-offs. Given a partition of the dataset into $G$ groups of size $n_g$ with mean benefit $\mu_g$ for $g\in G$ this decomposition is explicitly given by:
\begin{equation*}
    \mathcal{E}^{\alpha}(\mathbf{b}) = \mathcal{E}_{\omega}^{\alpha}(\mathbf{b}) + \mathcal{E}_{\beta}^{\alpha}(\mathbf{b}),
\end{equation*}
where 
\begin{equation*}
    \mathcal{E}_{\beta}^{\alpha}(\mathbf{b}) = \sum_{g\in G} \frac{n_g}{n\alpha (1-\alpha)}\left[\left(\frac{\mu_g}{\mu} \right)^{\alpha}- 1 \right]
\end{equation*}
denotes the between-group component, measuring group fairness, and
\begin{equation*}
    \mathcal{E}_{\omega}^{\alpha}(\mathbf{b}) = \sum_{g\in G} \frac{n_g}{n} \left(\frac{\mu_g}{\mu}\right)^{\alpha} \mathcal{E}^{\alpha}(\mathbf{b}^g)
\end{equation*}
is the within-group component, which measures individual fairness within each group.\\
Following \cite{ENTROPY} we opt for $\alpha = 2$, and thus the metric becomes half the coefficient of variation, which is a more widely-known metric of dispersion. However, we depart from previous work by computing benefits as $1$ for correct classifications and $0$ for misclassifications, therefore $b_i = \mathbbm{1}(y_i = \hat{y}_i)$ is simply the accuracy of prediction $i$. The resulting metric is:

\begin{equation*}
    ENT = \frac{1}{2n} \sum_{i = 1}^n \left[ \left(\frac{\mathbbm{1}(y_i = \hat{y}_i)}{\mu}\right)^{2} - 1 \right],
\end{equation*}
measuring the mean deviation from the average precision. We have now finished reviewing all relevant fairness metrics. A summary can be found in Table \ref{tab:metrics} which keeps track of all metrics, their expressions and whether they measure group or individual fairness.

\begin{table}[ht]
    \centering
    \begin{tabular}{ccc}
        \toprule
        Metric & Expression & Scope  \\
        \midrule
        IND & $| \mathbb{P}[\hat{Y} = 1 \mid A = 1 ] - \mathbb{P}[\hat{Y}  = 1 \mid A = 0 ]|$ & Group \\ \hdashline
        SEP & $\frac{1}{2} | (FPR_{A = 1} - FPR_{A = 0} ) + (FNR_{A = 1} - FNR_{A = 0} )|$ & Group \\ \hdashline
        SUF & $|\mathbb{P}[Y = 1 \mid \hat{Y}  = 1, A = 1 ] - \mathbb{P}[Y = 1 \mid \hat{Y}  = 1, A = 0 ] |$ & Group \\ \hdashline
        LIP & $Q\left(0.99; \left\{ \frac{|f(x_1) - f(x_2)|}{\|x_1 - x_2 \|_2} \mid x_1, x_2 \in X, x_1 \neq x_2 \right\}\right)$ & Individual \\ \hdashline
        CON & $\frac{1}{n} \sum_{i=1}^n \left|f(x_i) - \frac{1}{k} \sum_{z\in \mathcal{N}_2^5(x)} f(z)\right|$ & Individual \\ \hdashline
        ENT & $\frac{1}{2n} \sum_{i = 1}^n \left[ \left(\frac{\mathbbm{1}(y_i = \hat{y}_i)}{\mu}\right)^{2} - 1 \right]$ & Both \\
        \bottomrule  
    \end{tabular}
    \caption{Summary of used fairness metrics.}
    \label{tab:metrics}
\end{table}

\subsection{Cellular sheaf theory}
\label{subsec:SheafTheory}

This Section presents the core theoretical background necessary to understand the incoming methodology, broadly following \cite{CELLULAR, OPINION, NSD}. We will introduce the main definitions behind cellular sheaves with the goal of building towards Sheaf Diffusion. The exposition will conclude with a discussion of Sheaf Diffusion dynamics and the main results backing the following debiasing approach.

\subsubsection{Cellular sheaves}
\label{subsubsec:Cellular}

A \emph{cellular sheaf} is an algebraic-topological structure associated with a graph that assigns spaces to nodes and edges. Formally, assume an undirected graph $G = (V, E)$. Given $u,v \in V$, $e \in E$ we will adopt the notation $u \sim v$ to imply $(u,v) \in E$, and $v\leqt e$ to indicate that $v$ is an endpoint of $e$. A cellular sheaf $\mathcal{F}$ on an undirected graph $G$ is given by the following data:

\begin{enumerate}
    \item a vector space $\mathcal{F}(v)$ for each vertex $v$ of $G$.
    \item a vector space $\mathcal{F}(e)$ for each edge $e$ of $G$.
    \item a linear map $\mathcal{F}_{v \leqt e}: \mathcal{F}(v) \longrightarrow \mathcal{F}(e)$ for each incident vertex-edge pair $v \leqt e$ of $G$.
\end{enumerate}    
The vector spaces associated to each edge and vertex receive the name of \emph{stalks}, and the linear maps between vertex and edge stalks are called \emph{restriction maps}.
The vector space associated with the data vertices of $G$ is the space of \emph{0-cochains} valued in $\mathcal{F}$, formally

\begin{equation*}
    \cochain{0} \equiv \bigoplus_{v\in V} \mathcal{F}(v),
\end{equation*}
where $\bigoplus$ denotes the direct sum of vector spaces. To keep our notation straight, we will write $0$-cochains with lower case, $x \in \cochain{0} \cong \mathbb{R}^{\sum_{v}n_v}$, where $n_v$ is the dimension of the node stalk $v$. If all node stalks have the same dimension $d$ then $\cochain{0} \cong \mathbb{R}^{| V | \times d}$. Sometimes $0$-cochains will be indexed with a temporal label, which we will write as a superscript, $x^t$. The projection of the cochain onto a node stalk $\mathcal{F}(v)$ will be denoted by $x_v \in \mathcal{F}(v) \cong \mathbb{R}^{n_v}$ or, equivalently, by $x_i$ if we are talking about the node associated with observation $i$. Finally, each component of the projection onto the node stalk will be denoted with a double subscript by $x_{v,j}$ or $x_{i,j}$ depending on context.\\
Similarly, the space of data associated to edges is the space of \emph{1-cochains} valued in $\mathcal{F}$:

\begin{equation*}
    \cochain{1} \equiv \bigoplus_{e\in E} \mathcal{F}(e)
\end{equation*}
Given an edge $e = (u, v)$ we say that a choice of data $x_v \in \mathcal{F}(v), x_u \in \mathcal{F}(u)$ is \emph{consistent} over $e$ if $\mathcal{F}_{u\leqt e} x_u = \mathcal{F}_{v\leqt e} x_v$. Therefore, each edge imposes a constraint by restricting the space associated with two incident vertices. The subspace of $C^0(G;\mathcal{F})$ of signals satisfying all edge constraints is the space of \emph{global sections} of $\mathcal{F}$, and is denoted by $H^0(G; \mathcal{F})$. As a matter of fact, the space of global sections is given by the kernel of a linear map called \emph{coboundary}, $\delta: C^0(G; \mathcal{F}) \longrightarrow C^1(G; \mathcal{F})$. To define $\delta$ explicitly, an orientation must be chosen for each edge, in which case the for and edge $e=(u,v)$ the coboundary is given by $(\delta x)_e = \mathcal{F}_{v \leqt e} x_v - \mathcal{F}_{u \leqt e} x_u$ where $e = (u, v)$. However, we only care about the kernel of $\delta$, thus making the choice of orientation irrelevant. In any case, if $x \in \ker \delta$, then $\mathcal{F}_{u \leqt e} x_u  = \mathcal{F}_{v \leqt e} x_v$ for every edge $e$. From the coboundary operator we may construct the \emph{sheaf laplacian} $L^{\mathcal{F}} = \delta ^{\top} \delta$, which is a positive semidefinite linear operator on $C^0(G;\mathcal{F})$ with kernel equal to $H^0(G; \mathcal{F})$ by Hodge theorem \citep{HODGE}. Note that the difference in signs which appears in the coboundary operator implies a choice of orientation for the edges. However, this technical detail becomes irrelevant after introducing the sheaf laplacian, which is independent of this choice in undirected graphs.\\
The sheaf laplacian is a block matrix given by:
\begin{equation*}
\begin{split}
    L^{\mathcal{F}}_{vu} &= -\mathcal{F}_{v\leqt e}^{\top} \mathcal{F}_{u \leqt e}, \quad \text{if } e = (u,v)\in E, u\neq v, \\
        L^{\mathcal{F}}_{vv} &= \sum_{v\in e} \mathcal{F}_{v\leqt e}^{\top} \mathcal{F}_{v \leqt e}, \text{ otherwise}.
\end{split}
\end{equation*}
This operator is usually normalized in order to bound its spectrum for numerical stability:

\begin{equation*}
    \Delta^{\mathcal{F}} = D^{-1/2} L^{\mathcal{F}} D^{-1/2},
\end{equation*}
where $D$ is a block diagonal matrix whose elements are $L^{\mathcal{F}}_{vv}$. Therefore, it is symmetric semi-definite positive and its square root is well defined. If this matrix was not full rank, the negative power is implied to mean the Moore-Penrose pseudoinverse.

\subsubsection{Sheaf Diffusion}
\label{subsubsec:SheafDiffusion}

Sheaf Diffusion (SD) is a process analogous to the heat equation on a graph, with the sheaf laplacian playing the role of the usual laplacian operator on euclidean space. Basically, the heat equation models temperature exchange between points in a medium through the laplacian differential operator, while SD models information exchange between nodes through the sheaf laplacian. Intuitively, we can think that nodes store information encoded in a vector space $\mathcal{F}(v)$, usually $\mathbb{R}^{n_v}$. In our use case, this vector space represents either the inputs or outputs of a given model. Furthermore, nodes are allowed to exchange information with their neighbors through communication channels given edge stalks, $\mathcal{F}(e)$, which can have an identical or bigger dimension than node stalks (thus allowing the free flow of information) or a lower dimension (thus introducing a bottleneck). The information exchange is mediated by the restriction maps, $\mathcal{F}_{v \leqt (u,v)}$, which can let information flow freely or transform it through a linear map. This process develops until all nodes reach a global consensus, just as the heat equation develops until all points reach the same temperature. Consensus between nodes is achieved when $\mathcal{F}_{v \leqt (u,v)} x_v = \mathcal{F}_{u \leqt (u,v)} x_u$ or, equivalently, when $\delta x_{(u,v)} = 0$; and the space of global consensuses is given by $\cohomology{0}$, towards which the SD converges.\\
Opinion dynamics \citep{OPINION} provides a concrete mental model which cements this previous intuition. Basically, node stalks represent discourse spaces built from a series of base topics. The opinion of an individual represented by said node on any particular matter can be obtained as a combination of these base topics. Individuals may exchange opinions through a communication channel represented by and edge. The stalk of this edge represents the common discourse space for two agents, likewise composed of a series of common topics through which may differ from any individual core subject configuring each node stalk. The restriction maps represents the way each person translates his opinion into a concrete communication channel, and SD represents the process through which all agents communicate their opinions until global consensus is reached. Therefore, we can think of the SD process as a exchange of information until all agents involved agree on their publicly shared opinions even if they disagree in one concrete topic only present in their own private headspace represented by the node stalk.\\
Moving onto the formal discussion of SD, a signal $x^t\in \cochain{0}$ is said to evolve according to Sheaf Diffusion if it follows the ordinary differential equation (ODE):
\begin{equation}
    \frac{dx^t}{dt} = - \alpha \sheaflap x^t.
    \label{SheafDif}
\end{equation}
where $\alpha > 0$.\\
In analogy to the heat equation, sheaf diffusion has an associated energy in the form of the Dirichlet Energy of a signal $x \in \cochain{0}$:
\begin{equation*}
    E_{\mathcal{F}}(x) = x^{\top} \sheaflap x = \frac{1}{2} \sum_{(u,v) \leqt e} \| \mathcal{F}_{v\leqt e} x_v - \mathcal{F}_{u \leqt e} x_u \|,
\end{equation*}
which is a Lyapunov function for the dynamic process. As the diffusion process develops, the Dirichlet Energy of the initial signal is dissipated, decreasing until becoming zero in the time limit. This suggests an outline for creating a fair model: designing a sheaf laplacian whose energy codifies an unfairness metric or, equivalently, whose kernel codifies fairness constraints, and running the diffusion process should result in an increasingly fair solution. This notion is formalized by the following result by \cite{OPINION}:
\begin{theorem}
    Suppose a signal $x^t\in \cochain{0}$ under a sheaf diffusion process with initial condition $x^0$. In the limit  $t\longrightarrow \infty$ the signal $x^t$ tends to the orthogonal projection of $x^0$ onto $\ker \sheaflap$. 
\end{theorem}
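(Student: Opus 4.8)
The plan is to solve the linear autonomous ODE \eqref{SheafDif} in closed form and then read off the limit from the spectral decomposition of $\sheaflap$. Since $\sheaflap$ is a fixed matrix, the vector field $x\mapsto-\alpha\sheaflap x$ on $\cochain{0}$ is globally Lipschitz, so by standard linear ODE theory the initial value problem has the unique solution $x^t=\exp\!\big(-\alpha t\,\sheaflap\big)x^0$; the content of the theorem is thus entirely about the asymptotics of this matrix exponential.

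Next I would use that $\sheaflap$ is, as recorded above, symmetric and positive semidefinite, hence orthogonally diagonalizable: there exist an orthonormal basis $u_1,\dots,u_N$ of $\cochain{0}$ (with $N=\sum_v n_v$) and nonnegative eigenvalues ordered as $0=\lambda_1=\cdots=\lambda_k<\lambda_{k+1}\leq\cdots\leq\lambda_N$, such that $\sheaflap u_i=\lambda_i u_i$ and $\mathrm{span}\{u_1,\dots,u_k\}=\ker\sheaflap$. Expanding the initial datum as $x^0=\sum_{i=1}^N c_i u_i$ with $c_i=\langle x^0,u_i\rangle$ in the standard Euclidean inner product, and using $\exp(-\alpha t\,\sheaflap)u_i=e^{-\alpha t\lambda_i}u_i$, one gets
\begin{equation*}
    x^t=\sum_{i=1}^N c_i\,e^{-\alpha t\lambda_i}\,u_i .
\end{equation*}
Letting $t\to\infty$ term by term in this finite sum: the modes with $\lambda_i=0$ are constant, equal to $c_i u_i$, while each mode with $\lambda_i>0$ satisfies $c_i e^{-\alpha t\lambda_i}u_i\to0$ because $\alpha>0$; therefore $x^t\to\sum_{i=1}^k c_i u_i=\sum_{i=1}^k\langle x^0,u_i\rangle u_i$, which is by definition the orthogonal projection of $x^0$ onto $\ker\sheaflap$.

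There is no genuine obstacle here; the only point that warrants care is the identification of the surviving sum with the \emph{orthogonal} projection onto $\ker\sheaflap$ — rather than an oblique projection along some non-orthogonal complement — which is exactly why the orthonormality of the eigenbasis supplied by the spectral theorem is used. As an alternative avoiding diagonalization one could combine the Dirichlet-energy Lyapunov function $E_{\mathcal{F}}$ with LaSalle's invariance principle (along any trajectory not lying in $\ker\sheaflap$ one has $\tfrac{d}{dt}E_{\mathcal{F}}(x^t)=-2\alpha\,(x^t)^{\top}(\sheaflap)^2 x^t<0$, so trajectories approach the equilibrium set $\ker\sheaflap$), together with the conservation law $\tfrac{d}{dt}\langle x^t,u\rangle=-\alpha\langle x^t,\sheaflap u\rangle=0$ valid for every $u\in\ker\sheaflap$, which forces the limit to coincide with $x^0$ on $\ker\sheaflap$ and hence to equal its orthogonal projection. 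I would keep the explicit spectral computation as the main line since it is shorter and self-contained.
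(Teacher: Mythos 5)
Your argument is correct and follows essentially the same route as the paper, which likewise rests on the closed-form solution $x^t=\exp\left(-\alpha t\,\sheaflap\right)x^0$ and its asymptotics; you simply make explicit the spectral decomposition of the symmetric positive semidefinite operator $\sheaflap$ that the paper leaves implicit. Your observation that the exponential decay of the positive modes yields exponential convergence also matches the paper's remark on the speed of convergence.
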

\noindent The proof of this theorem is based on the analytical solution of the linear system \eqref{SheafDif}:
\begin{equation}
    x^t = \exp\left(-\alpha t \sheaflap \right) x_0,
    \label{SheafDifuSolution}
\end{equation}
which further shows that the convergence has exponential speed. In any case, this result formalizes our previous ideas and guarantees that the information lost to the diffusion process is minimal due to the fact that the projection onto the kernel is orthogonal.\\
To conclude this section, we discuss concrete implementations of the ODE \eqref{SheafDif}, namely, this ODE can be implemented by either discretizing o using the exact solution to \eqref{SheafDif}. Therefore, given a signal $x^0 \in \cochain{0}$  there are two different schemes for SD, one discrete and one continuous: 
\begin{equation*}
    \text{CONT}\equiv x^t = \exp\left(-t \alpha \sheaflap\right) x^0 \quad \quad \text{DISC}\equiv x^n = (1 - \alpha\sheaflap) x^{n-1}
\end{equation*}
This process is carried out until a given integration time $T$ or a maximum number of layers $N$. Despite their difference, both models have the same functional form, obtained by multiplying the initial signal by a diffusion matrix $\mathcal{D}$ given by either the exponential of the sheaf laplacian or its powers depending on whether we are dealing with the continuous or the discrete case. Hence, from here on out we will simply write:
\begin{equation*}
    x^t = \mathcal{D} x^0
\end{equation*}
This concludes the introduction to cellular sheaves and SD. We are now ready to integrate this framework in the setting of fair ML.

\section{Fair Sheaf Diffusion}
\label{sec:SheafModels}
This Section introduces the models that will integrate the ideas seen in the previous Section together. We begin by presenting two sheaves that fulfill different roles and can be integrated at various stages of the ML pipeline, illustrating the method’s flexibility. Next, we introduce a series of network topologies that enforce different fairness constraints and describe how to combine them to obtain a model satisfying all metrics.

\subsection{Sheaves}
\label{subsec:Sheaves}
We start by introducing two different hand-crafted sheaves which can be used to achieve fairness. Before delving into the concrete sheaves, it is important to keep in mind that we will always train logistic regression models on the covariates:
\begin{equation*}
    z_i = \beta_0 + \sum_{j = 1}^d \beta_j x_{i,j} = x_i \cdot \beta  \quad \text{and} \quad \hat{y}_i = \frac{1}{1+e^{-z_i}},
\end{equation*}
where $\hat{y}$ are the predictions, $z$ are the logits and $x$ are either the raw covariates or the transformed covariates after applying SD. We are now ready to explain the sheaves used.\\
First, consider the \textbf{identity sheaf} $\mathcal{F}^{\mathbbm{1}}$. This sheaf is obtained by assuming all nodes and edges stalks have the same dimension, $\mathcal{F}^{\mathbbm{1}}(v) \cong \mathcal{F}^{\mathbbm{1}}(e) \cong \mathbb{R}^m$, and all transition maps are proportional to the identity. That is, we allow the free exchange of information between nodes. This process can be applied on both the initial covariates as a way of pre-processing to obtain unbiased covariates or on the final predictions as a form of post-processing. In any case, the corresponding sheaf laplacian is given by $L^{\mathbbm{1}} =  L^w \otimes \mathbbm{1}$, where $L^w$ is the weighted graph laplacian with weights determined by restriction maps, and the dimension of the identity matrix is $d$ if the process is applied during pre-processing or $1$ if used during post-processing.\\
The choice between the pre- and the post-processing versions is trivial when using a linear regression model as we are, and it boils down to the specific logistics and necessities of practitioners involved. This is a consequence of the linearity of the operators involved. On the one hand, the expression for the pre-processing version is given by:
\begin{equation}
    x_{i}^{dif} = \mathcal{D} x_i = \sum_{j} \mathcal{D}_{ij} x_j \Rightarrow z_{i} = x_{i}^{dif} \cdot \beta = \beta_0 + \sum_{j, k} \mathcal{D}_{ij} x_{jk} \beta_k,
    \label{PRE}
\end{equation}
while in post-processing the linear function is applied first and it is the diffused:
\begin{multline}
    z_{i} = \beta_0 + \sum_{k} x_{ik} \beta_{k} \Rightarrow\\ z_{i}^{dif} =  \mathcal{D} z_i = \sum_{j} \mathcal{D}_{ij} z_j = \beta_0\sum_{j} \mathcal{D}_{ij} + \sum_{j,k} \mathcal{D}_{ij} x_{jk} \beta_k .
    \label{POST}
\end{multline}
Therefore, modulo the intercept, both versions have the same functional form, differing only in their dimensions. As a consequence, we will use the post-processing version due to its lower computational expense.\\
We now consider the \textbf{vector sheaf}, whose vector stalks now model the covariates, $\mathcal{F}(v) \cong \mathbb{R}^d$ with $d$ the number of covariates, while introducing a bottleneck on the common edge stalks, which are now univariate in order to model the output of a binary classification model, $\mathcal{F}(e) \cong \mathbb{R}$, and the restriction maps are given by the vector of coefficients $\beta \in \mathbb{R}^d$ of a linear model, $\mathcal{F}_{v\leqt e} x_v = w_e \beta \cdot x_v$. Therefore, the sheaf laplacian is given by $L^{\beta} =  L^w \otimes (\beta \beta^{\top})$. The idea behind this sheaf is to leverage the sheaf structure by introducing an asymmetry between nodes and edge stalks which serves as bottleneck restricting the exchange between nodes to information about the logits. Borrowing an analogy from opinion dynamics, each individual now only communicates their opinion on one singular topic of great importance, like a war or election, but they are able of changing their world-view due to the ideas of their peers, like one family member changing their mind after being singled out in a family dinner. This differs from the pre-processing identity sheaf, in which they shared their whole worldview, and from post-processing, in which they were only able to change their final scores. This procedure instead incorporates the linear model into the sheaf diffusion process, hence creating a unique in-processor.\\
Expanding on equations \eqref{PRE} and \eqref{POST} and in analogy to \eqref{SHAPEQ}, we can extract a closed-form expression for both the effect of a variable $k$ on an observation $i$, $\beta_{ki}^{ef}$ and its shapeley value $\phi_{ik}$:
\begin{equation}
    \beta_{ki}^{ef} = \beta_k \sum_{j} \mathcal{D}_{ij} \quad \text{and} \quad \phi_{ik} = \sum_{j} \mathcal{D}_{ij} \beta_k (x_{jk} - \mathbb{E}(x_k) ) .
    \label{IMPORTANCE}
\end{equation}
Although the interpretation of these quantities is not straightforward due to the presence of neighbors, one could explain this away by thinking that the effect of a variable on each observation must include information from its neighbors via an average weighted by the diffusion matrix. Another interpretation of the diffusion matrix elements could be that their inclusion represents a factor on the coefficients whose overall effect is of the order of $\frac{1}{n} \sum_{i,j} D_{ij}$, thus discounting or exacerbating the effect of a variable on a given observation. In any case, it is clear that another strength of the proposed models lies in their interpretability and explainability, fulfilling another objective of responsible AI.\\
Recapitulating, we have found ourselves with a model which can be integrated in any stage of the ML pipeline. The identity sheaf can be incorporated both during pre- and post-processing, while the vector sheaf introduces a new in-processor altogether. Moreover, the concatenation of sheaf diffusion processes is an inherently hybrid fairness method combining different kinds of processors, being, to our knowledge, the first of its kind which will be the object of future study. All relevant design choices, like whether to use a continuous or discrete model, or which hyper-parameters are available to the user, are summarized in Table \ref{tab:Choices}.

\begin{table}[ht!]
    \centering
    \begin{tabular}{cc}
    \toprule
         Choice & Description \\
         \midrule
         $\alpha$ & Strength of the sheaf diffusion process. \\
         $n$ or $t$ &  Number of layers or integration time. \\
         $\mathcal{F}^{\mathbbm{1}} $ vs. $\mathcal{F}^{\beta} $ & Type of sheaf considered. \\
         DISC vs. CONT & Using a discrete or continuous implementation.   \\
 \bottomrule
    \end{tabular}
    \caption{Considered design choices for Sheaf Diffusion Models.}
    \label{tab:Choices}
\end{table}

\subsection{Towards topological fairness}
\label{subsec:TopologicalFairness}

This Section introduces a collection of network topologies capable of modeling a myriad of fairness constraints in the kernels of their laplacians. In particular, we will remember the expression of the sheaf laplacian kernel and expand on this definition to incorporate different fairness metrics, thus completing the analysis of sheaf diffusion and fairness.

\subsubsection{Fairness induced by a graph}
\label{subsubsec:Intrinsic}
The most natural notion of fairness in graphs emerges from the underlying edges: when they encode similarity, it stands to reason that a fair model should produce similar predictions for neighboring nodes \citep{INDIVIDUALGRAPH}. As we have previously seen, SD encourages solutions close to the sheaf laplacian kernel, which satisfies the following equation:

\begin{equation*}
    \sum_{u \sim v} \mathcal{F}_{v\leqt e}^{\top} \mathcal{F}_{v\leqt e} x_u  - \mathcal{F}_{v\leqt e}^{\top} \mathcal{F}_{v\leqt e} x_v = 0,
\end{equation*}
if we apply, say, the identity sheaf during pre-processing:

\begin{equation*}
    \sum_{u \sim v} c_{u, e} x_u  - c_{v,e} x_v = 0,
\end{equation*}
where $c_{u,e}$ denotes the proportionality constant of the restriction map $\mathcal{F}_{u\leqt e} \equiv c_{u, e} \mathbbm{1}_d$. Now apply a linear function:
\begin{equation*}
    \sum_{u \sim v} c_{u, e} y_u  - c_{v,e} y_v = 0,
\end{equation*}
thus achieving similar results for neighboring nodes. This idea provides the basis for encoding fairness constraints in graph structures, and serves as a motivation for Definition \ref{def:fair_sheaf}.

\begin{definition}[Fair Sheaf Diffusion] Let $G$ be an undirected graph, $g_{fair}(f, D, A) = 0$ a set of fairness constraints possibly dependent on a sensitive attribute $A$. A \emph{fair sheaf} is a cellular sheaf $\mathcal{F}$ on $G$ whose kernel contains the fairness constraints. The diffusion process arising from such a sheaf will receive the name of \emph{Fair Sheaf Diffusion} (FSD).   
\label{def:fair_sheaf}
\end{definition} 
\noindent However, this notion is very limited due to the abundance of datasets without an underlying graph. Moreover, when dealing with disconnected graphs, like in Figure \ref{fig:EJEMPLO}, there can be separated communities without interaction, limiting the reach of the graph method. This why these ideas need to be extended to overcome these limitations.

\begin{figure}
    \centering
    \includegraphics[width=0.33\linewidth]{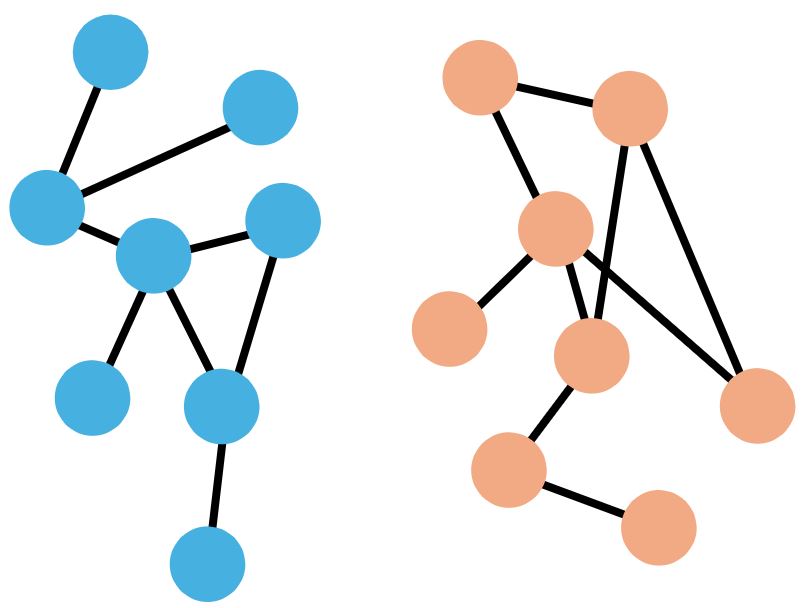}
    \caption{Graph with two disjoint communities. Sheaf Diffusion is unable to reconcile fairness metrics between individuals belonging to different communities.}
    \label{fig:EJEMPLO}
\end{figure}

\subsubsection{Subset topology}
\label{subsubsec:Subset}

To address both group fairness and isolated communities we propose the use of the subset topology, a global graph structures aimed at connecting different subsets of the dataset. This topology requires a collection of subsets and relations between them, adding a set of virtual nodes which serve as representatives of their respective subsets. All observations belonging to a given subset are connected to their representative, and the previously mentioned set of relationships between subsets indicates how representatives should be connected together, hence allowing communication between groups. Note that the relations between subsets represent which sectors of the population should have similar results, although in general we are interested in dense relationships between subsets, like it is the case for partitions derived from a single sensitive attribute whose influence we want to minimize. In analogy  Opinion Dynamics, one can think of the global topology as a mechanism by which members of a (possibly isolated) community are made aware of the current zeitgeist, like social media, news outlets or hearsay. These ideas are formalized through the following definition: 

\begin{definition} Let $D = (X, Y, A)$ be a dataset and $\mathcal{C = }\{D_i\}_{i=1}^n \subset \mathcal{P}(D)$ be a finite collection of subsets of $D$. A graph $G = (\mathcal{C}, E)$ on $\mathcal{C}$ induces a graph $G_{\mathcal{C}} = (V_{\mathcal{C}}, E_{\mathcal{C}})$ where $V_{\mathcal{C}} = D \cup \mathcal{C}$ and $E_{\mathcal{C}} = \{(v, D_j)| D_j\in \mathcal{C},  v \in D_j  \} \cup E$. 
\end{definition}
\noindent That is, the global subset topology is obtained by adding $n$ virtual nodes. The edges of the global topology are those of the graph on $C$, which represent relations between subsets, plus new edges that join every individual to its representative. A set of examples should clarify this notion, and the companion Figures \ref{fig:GLOBALS1} to \ref{fig:GLOBALS3} will help cement these concepts:

\begin{example}
Suppose a partition $P= \{P_i\}_{i=1}^n \subset \mathcal{P}(D)$, that is, $D$ is the union of elements of $P$ and all elements of $P$ are disjoint. Denote by $G^{full}_{P}$ the fully connected graph on $P$. Then the subset topology induced by $P$ is comprised of a set of virtual nodes that aggregate the information of all the observations of the subset they represent, and all virtual nodes communicate with each other. In this case we will talk about the \emph{partition topology or graph}. When this partition is determined by a sensitive variable we will prove it ensures independence with respect to said variable. Figures \ref{fig:GLOBALS1} and \ref{fig:GLOBALS3} show the partition topology for partitions of the underlying set into two and four groups. One could think of Figure \ref{fig:GLOBALS1} as the graph emerging from a binary sensitive variable, like gender, while Figure \ref{fig:GLOBALS2} arises from considering a more general categorical sensitive attribute, like race or religious identity.
\end{example}
\begin{example}
Let $P^j= \{P^j_i\}_{i=1}^n \subset \mathcal{P}(D)$ with $j=1,...,n_j$ be a set of partitions, $G_{\cup P^j}$ the the graph given by the disjoint union of fully connected graphs on each partition. Then the resulting diffusion process aims at making representations independent of each partition. For example, when we have a set of sensitive variables such as gender indentity, religion, race... each variable induces a partition on the original dataset. Applying SD on the global topology determined by the disjoint union of these partitions results in independence to all variables at the same time. This situation is ilustrated by Figure \ref{fig:GLOBALS3}, in which there are two binary sensitive attributes represented by the shape and color of the nodes. Each variable induces a partition of the underlying set, and all nodes are connected to two representatives.
\end{example}
\begin{example}
Given the total set $D = \{D\} \subset{\mathcal{P}(D)}$ the resulting global topology is the star graph. 
\end{example}
\begin{figure}[ht!]
    \begin{subfigure}{.49\linewidth}
    \centering
    \includegraphics[width=.5\linewidth]{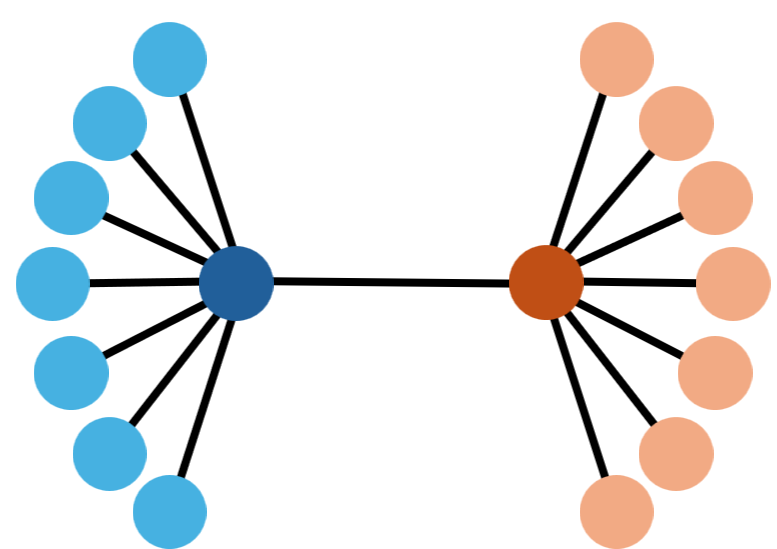}
    \caption{A partition with two subsets.}
    \label{fig:GLOBALS1}
    \end{subfigure}
    \begin{subfigure}{.49\linewidth}
    \centering
    \includegraphics[width=.5\linewidth]{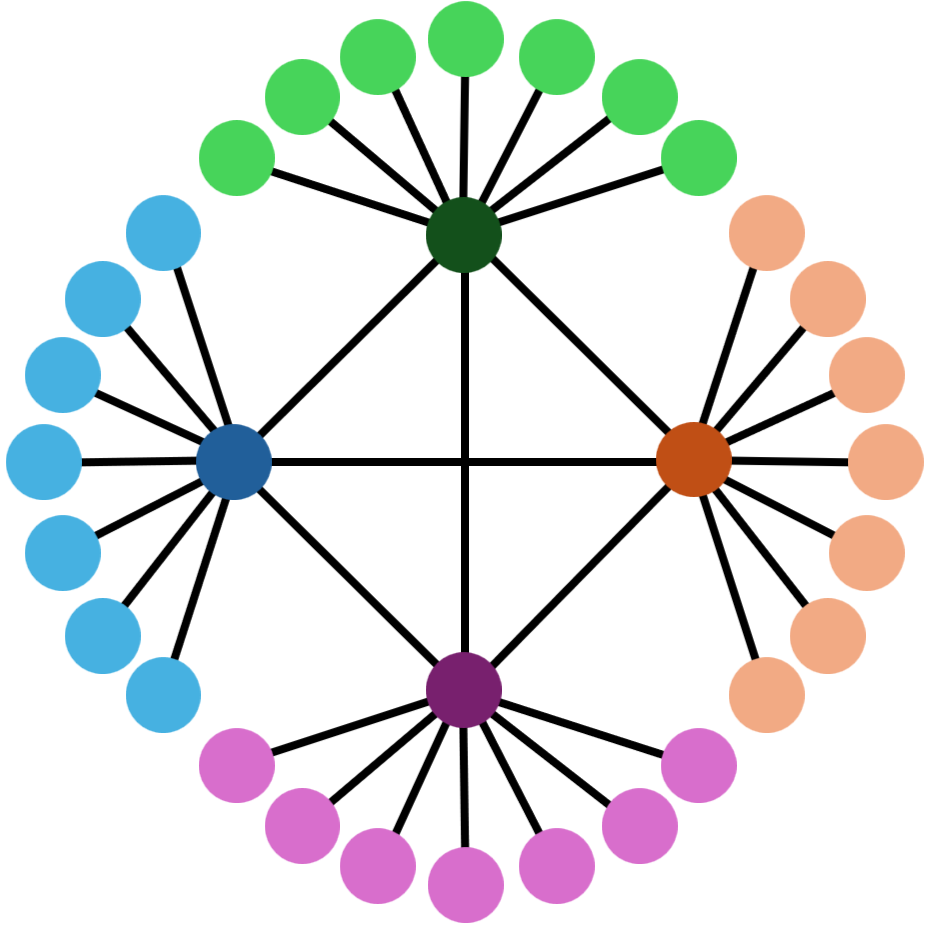}
    \caption{A partition with four subsets.}
    \label{fig:GLOBALS2}
    \end{subfigure}
    \\[1ex]
    \begin{subfigure}{\linewidth}
    \centering
    \includegraphics[width=0.25\linewidth]{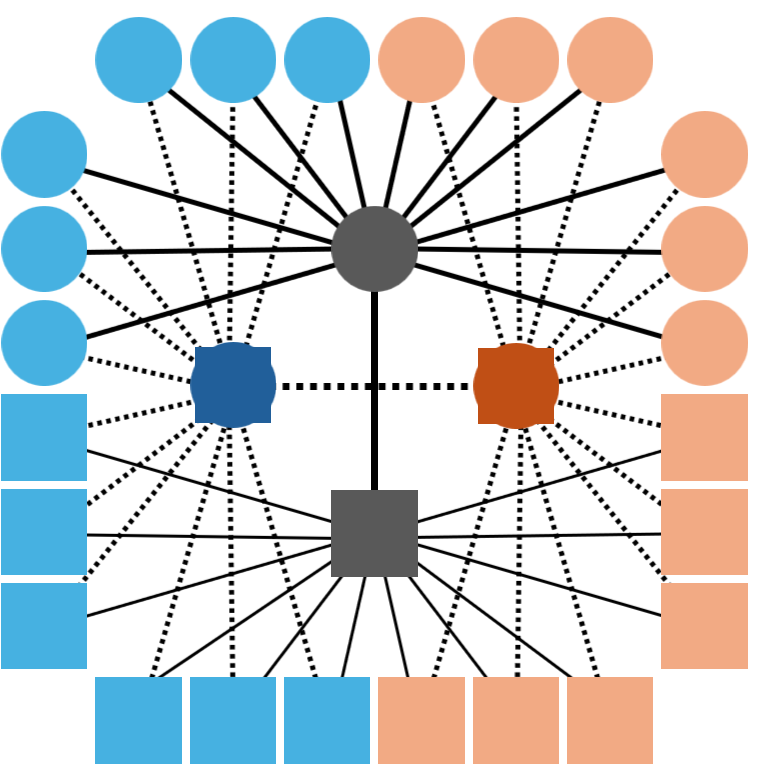}
    \caption{A partition two independent partitions with two subsets each.}
    \label{fig:GLOBALS3}
    \end{subfigure}
    \caption{Examples of the subset topologies. The top row represents two partition topologies with different number of elements, while the bottom row showcases a more exotic configuration with four pairwise independent agregators.}
    \label{fig:GLOBALS}
\end{figure}

\noindent Before starting with the theoretical analysis of the global topology, let us introduce the notation that will follow. Assume a dataset $D$, a collection of subsets $\mathcal{C} \subset \mathcal{P}(D)$, a graph $G$ on $\mathcal{C}$ and a sheaf $\mathcal{F}$ on $G_{\mathcal{C}}$. Signals $x\in \cochain{0}$ will be denoted by $x_v^j$ for a node $v$ in $D_j \in \mathcal{C}$, while the signal of the virtual node representing $D_j$ will be denoted by $x_j$. Finally, let $\mathcal{N}(v)$ denote the set of neighbors of $v$ in a graph $G$ and $\mathcal{N}(v,A)$ denote the set of neighbors of $v$ in a graph $G$ that lie on a set $A$; formally, $\mathcal{N}(v,A) = \mathcal{N}(v) \cap A$. Then the kernel of the sheaf laplacian on one of the virtual node representing $D_j$ is given by:

\begin{multline*}
    \sum_{v\in D_j} \mathcal{F}_{j\leq e_v}^{\top} \mathcal{F}_{j\leqt e_v} x_j - \mathcal{F}_{j\leqt e_v}^{\top} \mathcal{F}_{v\leqt e_v} x_v^j \\ + \sum_{D_i\in \mathcal{N}(D_j, \mathcal{C})} \mathcal{F}_{j\leqt e_i}^{\top} \mathcal{F}_{j\leqt e_i} x_j - \mathcal{F}_{j\leqt e_i}^{\top} \mathcal{F}_{i\leqt e_i} x_v^j = 0
\end{multline*}
The first term of the sum represents the aggregation of all the nodes that belong to $D_j$, while the second term encourages the similarity between neighboring representatives. Assume now identical restriction maps on edges given by a common linear map $\mathcal{F}_{\mathcal{C}}$ if they are in $E_{\mathcal{C}}$ or by $\frac{1}{\sqrt{|D_j|}} \mathcal{F}_{j}$ if they connect an observation in $D_j$ with its corresponding representative. Then the expression for the kernel becomes:

\begin{equation*}
    \mathcal{F}_{j}^{\top} \mathcal{F}_{j} \left( x_j - \sum_{v\in D_j} \frac{1}{|D_j|} x_v^j\right) + \mathcal{F}_{\mathcal{C}}^{\top} \mathcal{F}_{\mathcal{C}} \left( \sum_{D_i\in \mathcal{N}(D_j, \mathcal{C})}x_j - x_i\right) = 0
\end{equation*}
Finally, assuming the identity sheaf:
\begin{equation*}
    x_j - \frac{1}{|D_j|} \sum_{v\in D_j}x_v^j + \sum_{D_i\in \mathcal{N}(D_j, \mathcal{C})} (x_i - x_j) = 0
\end{equation*}
Which implies the following expression for the average signals on each subset:
\begin{equation*}
    \frac{1}{|D_j|} \sum_{v\in D_j}x_v^j = x_j  + \sum_{D_i\in \mathcal{N}(D_j, \mathcal{C})} (x_i - x_j) 
\end{equation*}
Consider two neighboring subsets $D_i, D_j$ and take the difference of the previous expression. The result is:

\begin{multline*}
        \frac{1}{|D_i|} \sum_{v\in D_i}x_v^j - \frac{1}{|D_j|} \sum_{v\in D_j}x_v^j \\ = x_i  + \sum_{D_k\in \mathcal{N}(D_i, \mathcal{C})} (x_k - x_i) - x_j -  \sum_{D_k\in \mathcal{N}(D_j, \mathcal{C})} (x_k - x_j)
\end{multline*}
Remember that the kernel of the sheaf laplacian coincides with the kernel of the coboundary operator, hence for the aggregator nodes we have:
\begin{equation*}
    \mathcal{F}_{i \leqt e} x_i = \mathcal{F}_{j \leqt e} x_j
\end{equation*}
In our particular case we have equality of signals for neighboring aggregator nodes, which implies equality of the subset average. When using a fully connected graph on the partition induced by a sensitive attribute and applying a linear model on the resulting signal, the subsequent scores are independent of the sensitive attribute. Therefore, both the identity and vector sheaves lead to independence when using the subset topology.\\
One noteworthy case is that of the star configuration. In this case the only virtual node stores the average signal in the time limit and the minimization of Dirichlet energy implies the minimization of the mean squared distance of the signal to its average; that is, its variance. This is close to minimizing squared entropy but not exactly. Furthermore, minimization of the signal does not necesarily imply minimization of the benefit function which is not lineal. It could be the case that non-linear SD might help with this goal, but this approach is outside the scope of the current work.

\subsubsection{k-Nearest Neighbor topology}
\label{subsubsec:kNN}

In order to build intuition for local graphs we might draw inspiration once more from Opinion Dynamics. Basically, local topologies represent organic communication channels that arise from similar persons. After all, it is easier for two particular individuals to communicate if they have something in common, like sharing the same hobby or job. Moreover, it is not hard to imagine that the opinion of a person might be more influenced by their inner circle, which is usually composed on some level of similar-minded people. In any case, the exchange of information in this situation is conditioned by similarity between agents, thus focusing more on individual fairness rather than group equity. In particular, the $k$-Nearest Neighbor graph is inspired by kNN and the consistency metric, attempting to encode it in its kernel:

\begin{definition} Let $D = (X, Y, A)$ be a dataset, $d$ a distance, $k$ a natural number. The $k$-Nearest Neighbor (kNN) graph is the graph $G_k = (V_k, E_k)$ where $V_k = D$ and $(v_1,v_2)\in E_k$ if and only if $v_1 \in \mathcal{N}_{d}^k(v_2)$ or $v_2 \in \mathcal{N}_{d}^k(v)$ where $\mathcal{N}_{d}^k(v)$ denotes the set of the closest $k$ observations to $v$ according to metric $d$. More succintingly, $E_k = \bigcup_{v\in D}\{ (v,u)| u\in\mathcal{N}_d^k(v)\}$.   
\end{definition}
\noindent It is straightforward to check that this metric encourages consistency. For a given node $v$ the kernel  takes the form:
\begin{equation*}
    \sum_{u \in \mathcal{N}_k^d(v)} \mathcal{F}_{v\leqt e_u}^{\top} \mathcal{F}_{v\leqt e_u} x_v - \mathcal{F}_{v\leqt e_u}^{\top} \mathcal{F}_{u\leqt e_u} x_u = 0 .
\end{equation*}
The lack of privileged nodes (aggregators) makes the analysis straightforward: The kNN configuration aggregates the signal over the $k$ closest individuals to $v$, thus encouraging consistency. In any case, assuming a constant restriction map:
\begin{equation*}
    \mathcal{F}^{\top} \mathcal{F}\left(x_v - \frac{1}{k}\sum_{u\in \mathcal{N}_d^k(v)} x_u\right) = 0
\end{equation*}
If the restriction map is proportional to the identity:

\begin{equation*}
    x_v - \frac{1}{k}\sum_{u\in \mathcal{N}_d^k(v)} x_u = 0
\end{equation*}
Therefore, the use of a linear model results in a consistent score function.

\subsubsection{Unit ball topology}
\label{subsubsec:UnitBall}
The last topology is also local in nature, although instead of smoothing the signal over the closest neighbors it instead aims to achieve equity inside of a ball of fixed radius $\delta$, the idea being to achieve a small local lipzchitz constant with the goal of minimizing the $LIP$ metric:

\begin{definition}[Unit ball graph] Let $D = (X, Y, A)$ be a dataset, $d$ a distance, $\delta > 0$. The unit ball graph with radius $\delta$ is the graph $G_{\delta} = (V_{\delta}, E_{\delta})$ where $V = D$ and $(v_1,v_2)\in E$ if and only if $d(v_1, v_2) < \delta$. More succintingly, $E_{\delta} = \bigcup_{v\in D} \{(v,u)| u\in \mathcal{N}_d(v, \delta)\}$ where $\mathcal{N}_d(v, \delta) = \{u\in D| d(x_u, x_v) < \delta\}$ is the set of observations whose signals are inside a ball of radius $\delta$ centered in $x_v$.
\end{definition}
\noindent This case is similar to the kNN graph. The kernel constraint for a node $v$ takes the form:
\begin{equation*}
    \sum_{u \in \mathcal{N}_d(v, \delta)} \mathcal{F}_{v\leqt e_u}^{\top} \mathcal{F}_{v\leqt e_u} x_v - \mathcal{F}_{u\leqt e_u}^{\top} \mathcal{F}_{u\leqt e_u} x_u = 0.
\end{equation*}
The interpretation is the same as in the previous local topology, although this time the aggregation takes place inside a ball of radius $\delta$. Assuming a constant restriction map:
\begin{equation*}
    \mathcal{F}^{\top} \mathcal{F}\left(x_v - \sum_{u\in \mathcal{N}_d(v, \delta)} x_u\right) = 0,
\end{equation*}
and further assuming that said map is the identity:
\begin{equation*}
    x_v - \frac{1}{|\mathcal{N}_d(v, \delta)|}\sum_{u\in \mathcal{N}_d(v, \delta)} x_u\ = 0,
\end{equation*}
hence serving as a smoothing mechanism inside the ball of radius $\delta$ and center $x_v$. Once again, if a linear function is applied the distance of the scores whose signals are inside said balls is minimized, which should reduce the local Lipschitz constant. Although the transition from the local Lipschitz constant to the global $LIP$ metric is not immediate, it should become effective in practise assuming compactness. After all, the denominator dominates for observations far apart, making $LIP$ small, while close observations should lie inside the same ball, in which case the numerator is minimized, thus reducing $LIP$. There is a trade-off involved in the choice of $\delta$: the bigger its value the more costly it is to compute the unit ball graph, but the smaller $\delta$ the less representative the balls. This is easily verified in the degenerate case where $\delta$ is less than the minimum distance observed, resulting in an empty graph. Furthermore, another pitfall of the method is that observations that lie close together are weighted equally as those that are on the boundary of the ball, although their effect on the Lipschitz constant is not the same. One possible remedy might be to to weight edges by the inverse of the distance between their endpoints. In fact, adopting this weighting scheme on a fully connected graph of the whole dataset should truly minimize $LIP$, but of course this procedure is unwieldly in practise for large datasets.\\
Before concluding the discussion on fair topologies, let us show how to combine different configurations to achieve combined fairness.

\subsubsection{Combining sheaves}
\label{subsubsec:Combining}
This Section has shown how different graph configurations can encode different fairness constraints. This result, while interesting, begs the question of how to combine them to achieve a compromise between the metrics entailed by different graphs. Our answer is inspired by the PageRank algorithm \citep{PAGERANK}, which allocates $\gamma$ probability to random jumps outside the neighborhood of a point of the graph (global process arising from a fully connected graph) and $1-\gamma$ to random graph diffusion (local process arising from the graph diffusion equation). Using a similar reasoning we arrive at the following definition:

\begin{definition} Let $D = (X, Y, A)$ be a dataset with a set of topologies, $\{G_i\} = \{(V_i, E_i)\}$ with compatible sheaves $\mathcal{F}_i$. and a set of strictly positive coefficients $w_1, ..., w_n > 0$. The linear combination of cellular sheaves is denoted by $\mathcal{F} = \sum_{i=1}^n w_i \mathcal{F}_i$ which is defined on the graph given by $G= (V, E)$ where $V = \bigcup_{i=1}^n V_i$, $E = \bigcup_{i=1}^n E_i$ and its restriction maps are given by:

\begin{equation*}
    \mathcal{F}_{u\leqt e} = \sum_{i=1}^n \sqrt{w_i} \tilde{\mathcal{F}}_{i, u \leqt e}
\end{equation*}
where $\tilde{\mathcal{F}}_{i, u \leqt e}$ is equal to $\mathcal{F}_{i, u \leqt e}$ if $e\in E_i$ and zero otherwise. The resulting sheaf laplacian is given by the linear combination of the sheaf laplacians:

\begin{equation*}
    \sheaflaparg{\mathcal{F}} = \sum_{i = 1}^n w_i\sheaflaparg{\mathcal{F}_i}
\end{equation*}
\end{definition}
\noindent The use of a linear combination of sheaves guarantees convergence to the intersection as per the following lemma:

\begin{lemma}
    Let $A_1,..., A_n$ be a set of square symmetric semipositive definite matrices. Consider a set of strictly positive coefficients $w_1,...,w_n > 0$. Then the linear combination $A = \sum_{i = 1}^n w_i A_i$ is symmetric semi-positive definite and $\ker A = \bigcap_{i= 1}^n \ker A_i$.
\end{lemma}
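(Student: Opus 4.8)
The plan is to prove the two assertions in turn: first that $A = \sum_{i=1}^n w_i A_i$ is symmetric and semi-positive definite, and then the kernel identity $\ker A = \bigcap_{i=1}^n \ker A_i$ by a double inclusion, the reverse one being the only nontrivial step.

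For the first part, symmetry is immediate, since a real linear combination of symmetric matrices is symmetric. For semi-positive definiteness I would simply evaluate the associated quadratic form on an arbitrary vector $x$: we have $x^\top A x = \sum_{i=1}^n w_i\, x^\top A_i x$, and since each $w_i > 0$ and each $A_i$ is semi-positive definite, every summand is nonnegative, so $x^\top A x \geq 0$.

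For the kernel identity, the inclusion $\bigcap_{i=1}^n \ker A_i \subseteq \ker A$ is trivial: if $A_i x = 0$ for all $i$, then $A x = \sum_i w_i A_i x = 0$. The reverse inclusion is the crux. Let $x \in \ker A$. Then $0 = x^\top A x = \sum_{i=1}^n w_i\, x^\top A_i x$ is a sum of nonnegative terms, hence each term vanishes, and since $w_i > 0$ this gives $x^\top A_i x = 0$ for every $i$. The key observation is that for a semi-positive definite matrix $A_i$ the condition $x^\top A_i x = 0$ forces $A_i x = 0$: writing $A_i = B_i^\top B_i$ (for instance via its symmetric square root, which exists precisely because $A_i$ is semi-positive definite) we get $0 = x^\top A_i x = \| B_i x \|^2$, so $B_i x = 0$ and therefore $A_i x = B_i^\top B_i x = 0$. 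Thus $x \in \ker A_i$ for all $i$, i.e.\ $x \in \bigcap_{i=1}^n \ker A_i$.

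The only place that requires a bit of care — and the natural candidate for the ``main obstacle'' — is that last implication, namely that a vanishing quadratic form for a semi-positive definite matrix implies the vector lies in its kernel; I would justify it through the square-root factorization as above, or equivalently by applying Cauchy--Schwarz to the positive semidefinite bilinear form $(u,v) \mapsto u^\top A_i v$. Everything else is elementary bookkeeping with finite sums of nonnegative quantities.
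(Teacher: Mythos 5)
Your proof is correct and follows essentially the same route as the paper: multiplying $Ax=0$ on the left by $x^{\top}$, using the strict positivity of the weights to force each quadratic form $x^{\top}A_i x$ to vanish, and concluding $x \in \ker A_i$ for every $i$. The only difference is that you explicitly justify the implication $x^{\top}A_i x = 0 \Rightarrow A_i x = 0$ via the square-root factorization, a step the paper's proof leaves implicit.
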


\begin{proof}
    The fact that $A$ is symmetric semipositive definite is immediate. Let us prove that its kernel is the intersection of the kernel of the other matrices. It is clear that $\bigcap_{i= 1}^n \ker A_i \subset \ker A$. Let us prove the reciprocal assertion: Suppose $x \in \ker A$. Then
    \begin{equation*}
        A x = 0 \Longrightarrow \sum_{i=1}^n w_i A_i x = 0,
    \end{equation*}
    multiplying by $x^{\top}$ on the left,
    \begin{equation*}
        \sum_{i=1}^n w_i x^{\top}A_i x = 0.
    \end{equation*}
    The fact that each matrix is semidefinite positive implies that all terms of the sum are greater than or equal to zero. Therefore, the only way of it becoming null is that each term is zero, that is
    \begin{equation*}
        x^{\top}A_i x = 0 \quad  \text{for all  } i = 1,...,n.
    \end{equation*}
    Hence, $x$ belong to the kernel of all matrices and hence to their intersection.
\end{proof}
\noindent Therefore, the diffusion process originating from the linear combination of a set of fair sheaves orthogonally projects the original signal onto the intersection of the kernels of all sheaf laplacians, hence satisfying the fairness constraints induced by all of them. This allows the treatment of different fairness problem with one single model.\\
This concludes the discussion for fairness-inducing graph configurations. A summary can be found in Table \ref{tab:Graphs} which keeps track of all the proposed topologies along with their defining parameters and the metrics they encode. 

\begin{table}[ht!]
    \centering
    \begin{tabular}{c c c}
    \toprule
         Graph & Hyper-parameter & Metric \\
         \midrule
         Subset & $\mathcal{C} \in \mathcal{P}(D)$ & IND  \\
         kNN &  $k\in \mathbb{N}$ & CON \\
         Unit ball & $\delta \in \mathbb{R}^{+}$ & LIP \\
         Mixed & $w_i \in \mathbb{R}^+$ & Multiple \\ \bottomrule
    \end{tabular}
    \caption{Proposed graph topologies.}
    \label{tab:Graphs}
\end{table}

\section{Experiments}
\label{sec:Experiments}

This Section provides an in-depth explanation of the methodology and experiments followed to explore the properties of FSD models. The analysis is preceded by a thorough discussion of both the methodology followed and the datasets used. Then, an in-depth discussion of the results follows, comprised by a simulation study intended to build intuition and consolidate the theoretical results, a comprehensive analysis of the effect of the different hyper-parameters, and a review of a list of relevant case studies, delving into the fairness-accuracy trade-offs through Pareto frontiers, the results obtained from a grid search, and the interpretability of the models through their SHAP values. The code used to implement this methodology can be found in a Github repository\footnote{\url{https://github.com/arturo-perez-peralta/HighOrderFairness}}.\\

\subsection{Experimental setup}
\label{subsec:Setup}
Before discussing the results, a quick review of the experimental setup ensues. The data generation process of the simulation study is based on the toy model by \cite{ADVERSARIAL} with some adjustements. The training sample will be denoted by $(x_i, y_i, a_i)_{i=1}^n$ where $x_i$ denotes the non-sensitive covariates, $y_i$ denotes the label and $a_i$ is the sensitive attribute. For each $i$, let $a^{i} \in \{0,1\}$ be picked at random with probability $p$ of drawing a one, and let $v_i \sim \mathcal{N}(a_i, 1)$ be an inaccurate measurement of the sensitive variable. Let $u_i, w_i \sim \mathcal{N}(v_i, 1)$ be independent, inaccurate measurements of $v_i$, where $w_i$ serves as a biased feature which contributes linearly to the response while $u_i$ is a proxy for $w_i$. Finally, let $t_i \sim \mathcal{N}(-0.5,1)$ represent a random independent covariate. The final dataset is given by $x_i = (u_i, t_i, a_i)$,  $y_i = \mathbbm{1}(\sigma(0.5 w_i + 0.5 t_i) > 0)$ where $\sigma(x) = 1/(1+e^{-x})$ represents the sigmoid function. The resulting proportion of individuals with a non-zero label are denoted by $q$. With this setting, the simulation is performed $100$ times.\\
On the other hand, we consider a set of usual datasets \citep{UCI} used as benchmarks in the fairness literature:
\begin{itemize}
    \item \textbf{German}\footnote{Source: \url{https://archive.ics.uci.edu/dataset/144/statlog+german+credit+data}}: This dataset is comprised of data belonging to $1000$ individuals who are labeled according to their credit risk. Following \cite{AGE}, we use age as a sensitive variable, considering an individual privileged if he or she is older than $25$.
    \item \textbf{Compas}\footnote{Source: \url{https://github.com/propublica/compas-analysis}}: This dataset provides a list of demographic data of criminal offenders being labeled by whether they recidivated within two years or not. Following \cite{COMPAS}, we use race as a sensitive variable, considering non African-American individuals privileged.
    \item \textbf{Adult}\footnote{Source: \url{https://archive.ics.uci.edu/dataset/2/adult}}: This dataset records demographic data labeled according to whether each individual has an income greater than $50$k USD or not. Following \cite{ADVERSARIAL}, we use gender as a sensitive attribute, considering male individuals privileged.
\end{itemize}
The relevant characteristics for each case are summarized in Table \ref{tab:Datasets}.

\begin{table}[ht!]
    \centering
    \begin{tabular}{c c c c c c}
    \toprule
         Dataset & Size & Features & Label rate & \makecell{Sensitive\\ variable}  & \makecell{Sensitive\\ label rate} \\
         \midrule
         Simulation & $5000$ & $3$ & $\approx0.5$ & $a$ & $\approx0.5$  \\ \hdashline
         German & $1000$ & $61$ & $0.30$ & Age & $0.15$ \\
         Compas & $6172$ & $95$ & $0.46$ & Race & $0.51$ \\
         Adult & $30162$ & $408$ & $0.25$ & Gender & $0.32$ \\ \bottomrule
         
    \end{tabular}
    \caption{Datasets used and their main characteristics.}
    \label{tab:Datasets}
\end{table}
\noindent Finally, the data pipeline starts by splitting a hold-out test set. Then, the rest of the dataset is divided into four train-validation folds on which to perform hyper-parameter tuning and to obtain measures of dispersion. Then, the models are evaluated against the hold-out test, for which the metrics seen in Section~\ref{sec:Theory} are reported, with emphasis on accuracy as a measure of performance and IND, CON and LIP as the measures of fairness targeted by the proposed topologies, with the idea of checking whether or not the use of Fair Sheaf Diffusion truly behaves as expected. 

\subsection{Simulation study}
\label{subsec:Simulation}

The analysis begins by following the simple simulation scheme outlined in Section \ref{subsec:Setup}, running it $100$ times with hyper-parameters shown in Table \ref{tab:HyperParamSimul} found in the \ref{app:training}, along with the hyper-parameter choices for the rest of the experiments. The results are shown in Figures \ref{fig:SimulBoxplots} and \ref{fig:InfluenceSimul}.\\
Let us start with the analysis of the boxplots in Figure \ref{fig:SimulBoxplots}, which compiles the results of $100$ simulations for the $IND$, $CON$ and $ACC$ metrics. Although unreported in this Section, the rest of the fairness metrics shown in Section \ref{sec:Theory} are showcased in Figures \ref{app:FirstBox} to \ref{app:LastBox} in \ref{subapp:simulation}. Starting with the analysis of performance, all models seem to compromise accuracy, with a relative decrease in the median performance ranging from around $13\%$ for the subset topology to $3\%$ for the unit ball graph. On the other hand, $CON$ is improved after using a graph model, with the most resounding success found in both kNN configurations, which results in a relative median decrease in the already low $CON$ of $33\%$, thus cementing the intuition behind these methods. Moving onto group fairness, the results are less satisfactory. For starters, the global topology is the biggest underperformer, despite its theoretical properties. In general, mixed models seem to compromise independence, while purely local topologies have similar readings to logistic regression.

\begin{figure}[!ht]
  \centering
\includegraphics[width=.49\textwidth]{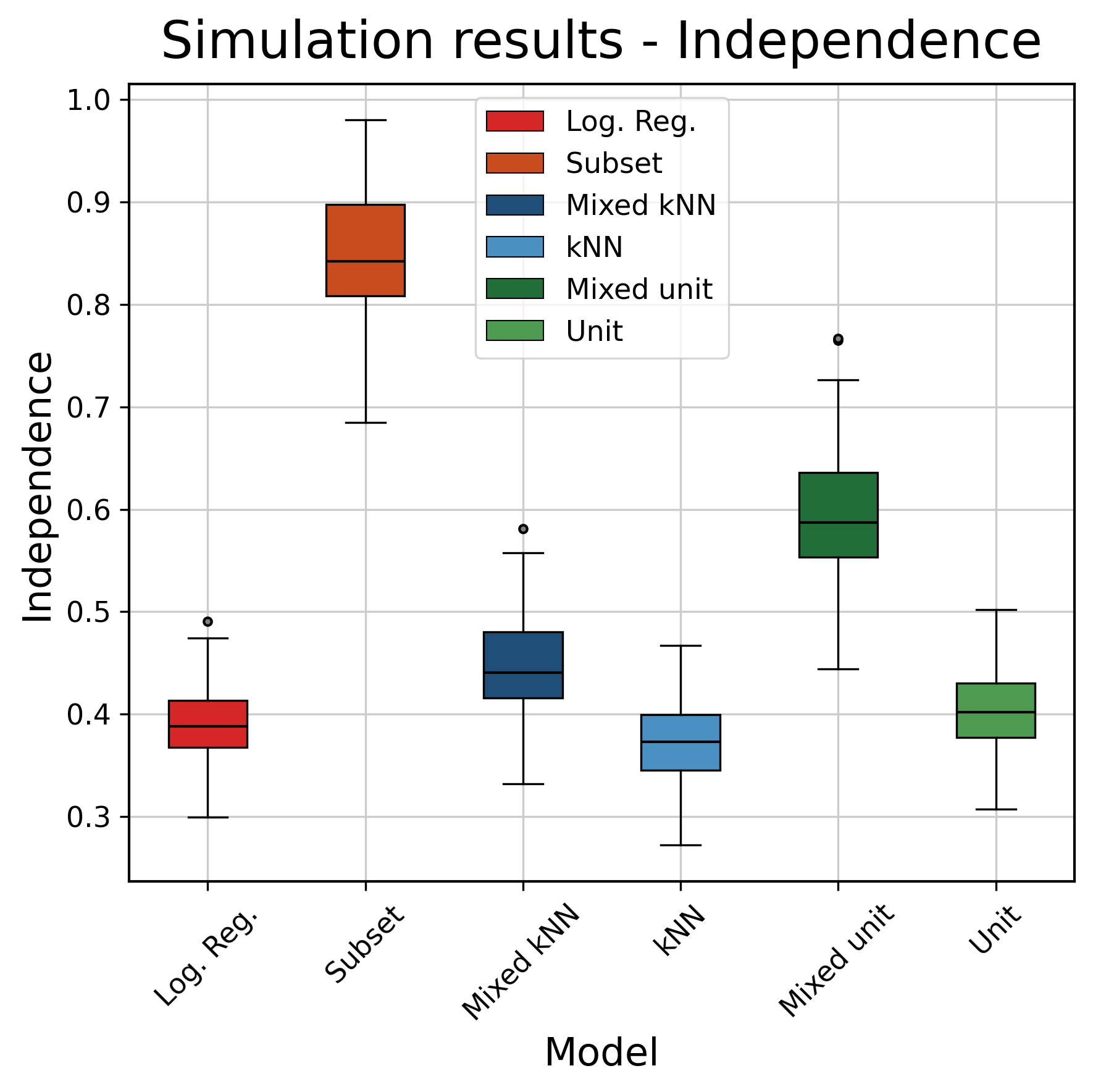} 
 \includegraphics[width=.49\textwidth]{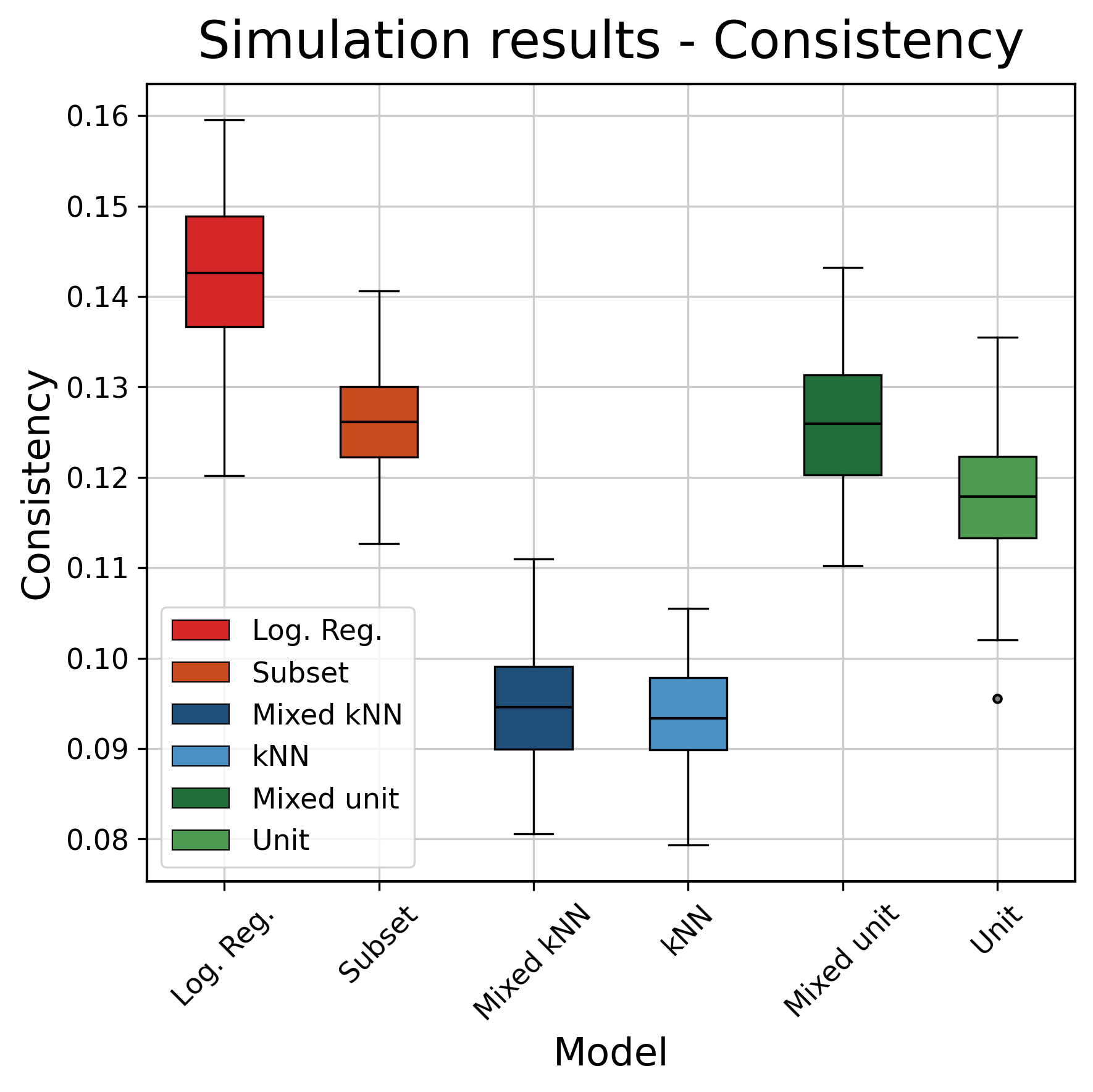}
  \includegraphics[width=.49\textwidth]{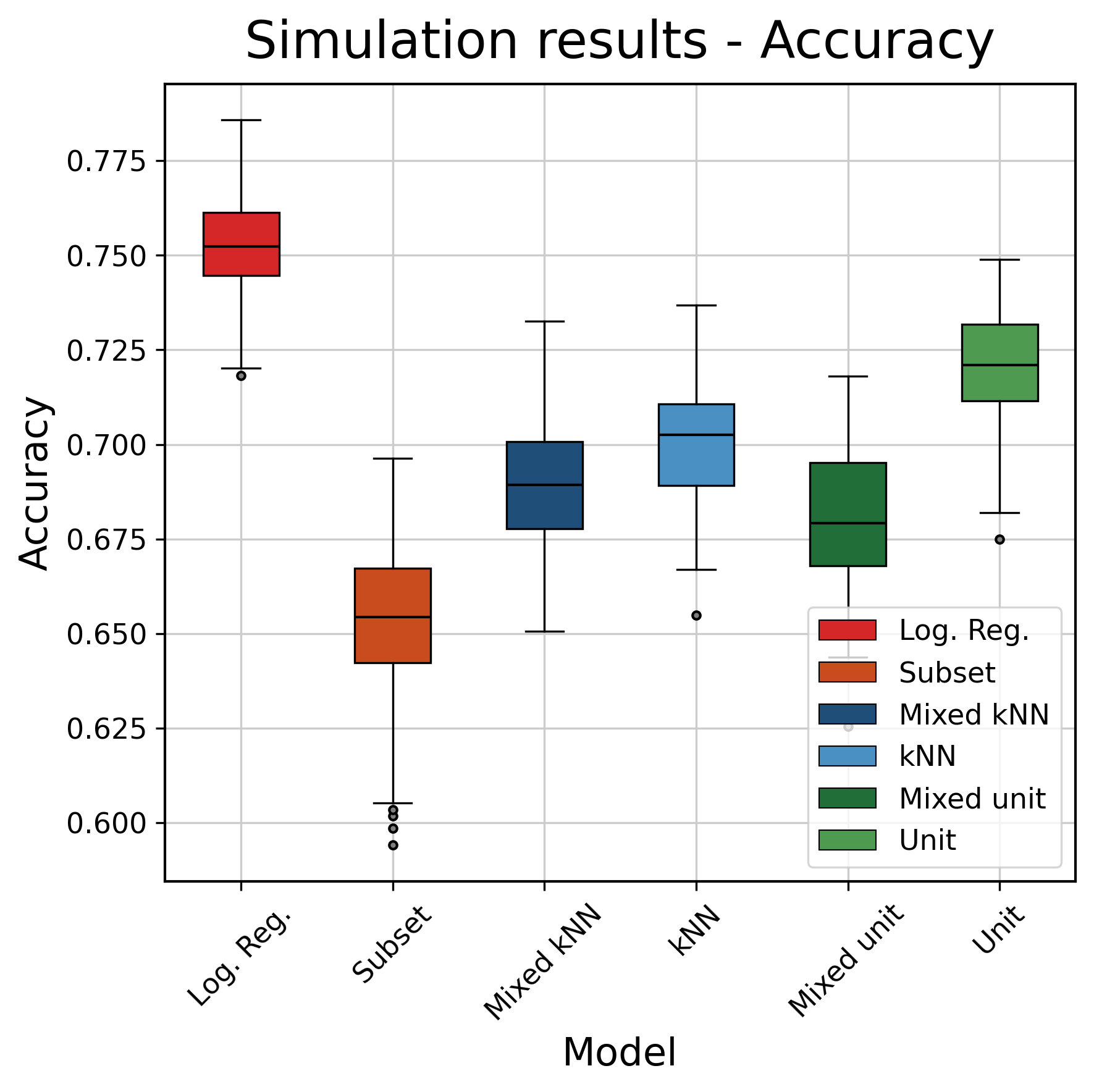}
  \caption{Boxplots with fairness metrics and accuracy for one hundred simulations. The top row displays fairness metrics while the bottom row shows accuracy.}
   \label{fig:SimulBoxplots}
\end{figure}

\begin{figure}[!ht]
    \centering
    \includegraphics[width=0.49\textwidth]{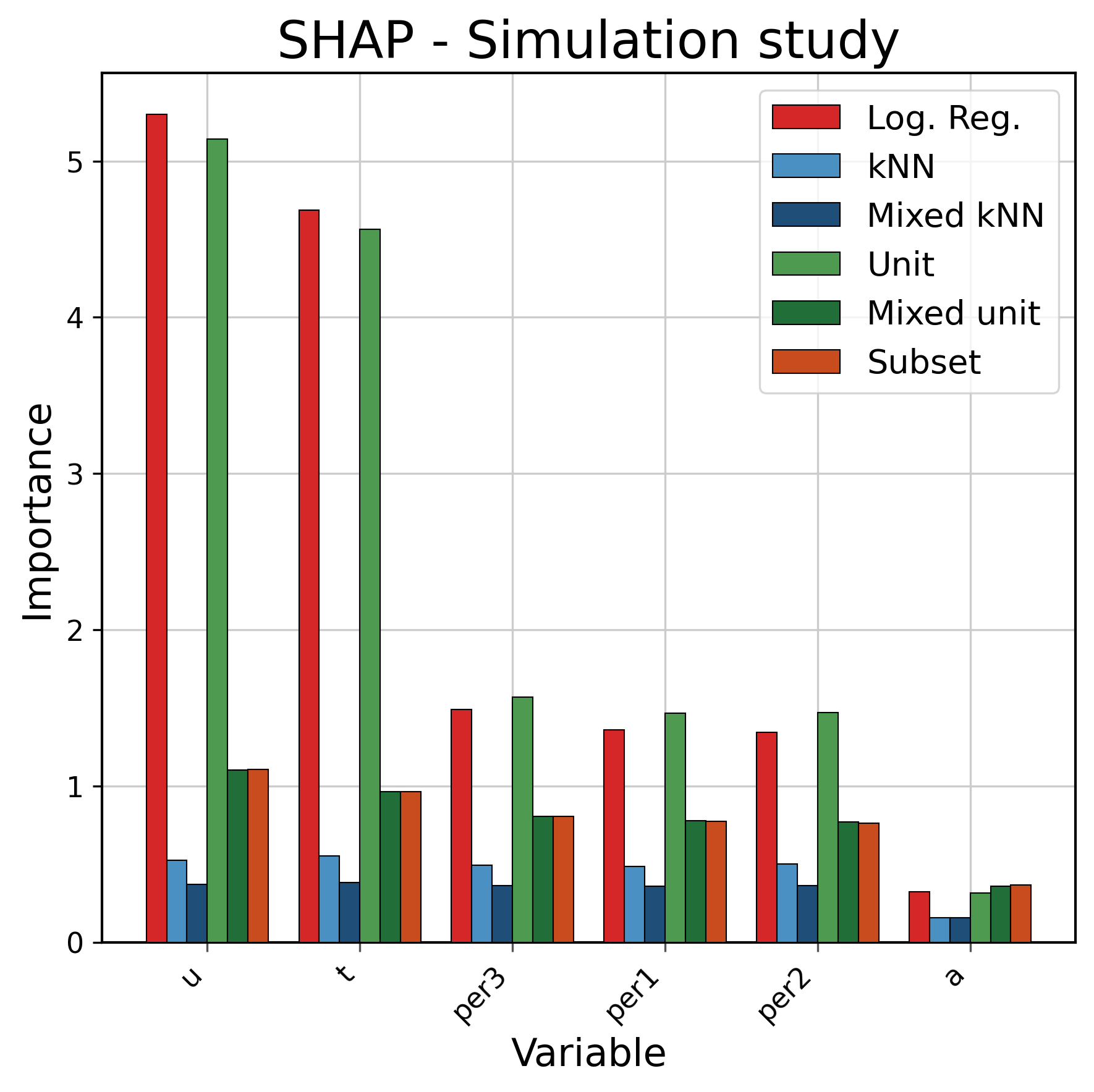}
    \includegraphics[width=0.49\textwidth]{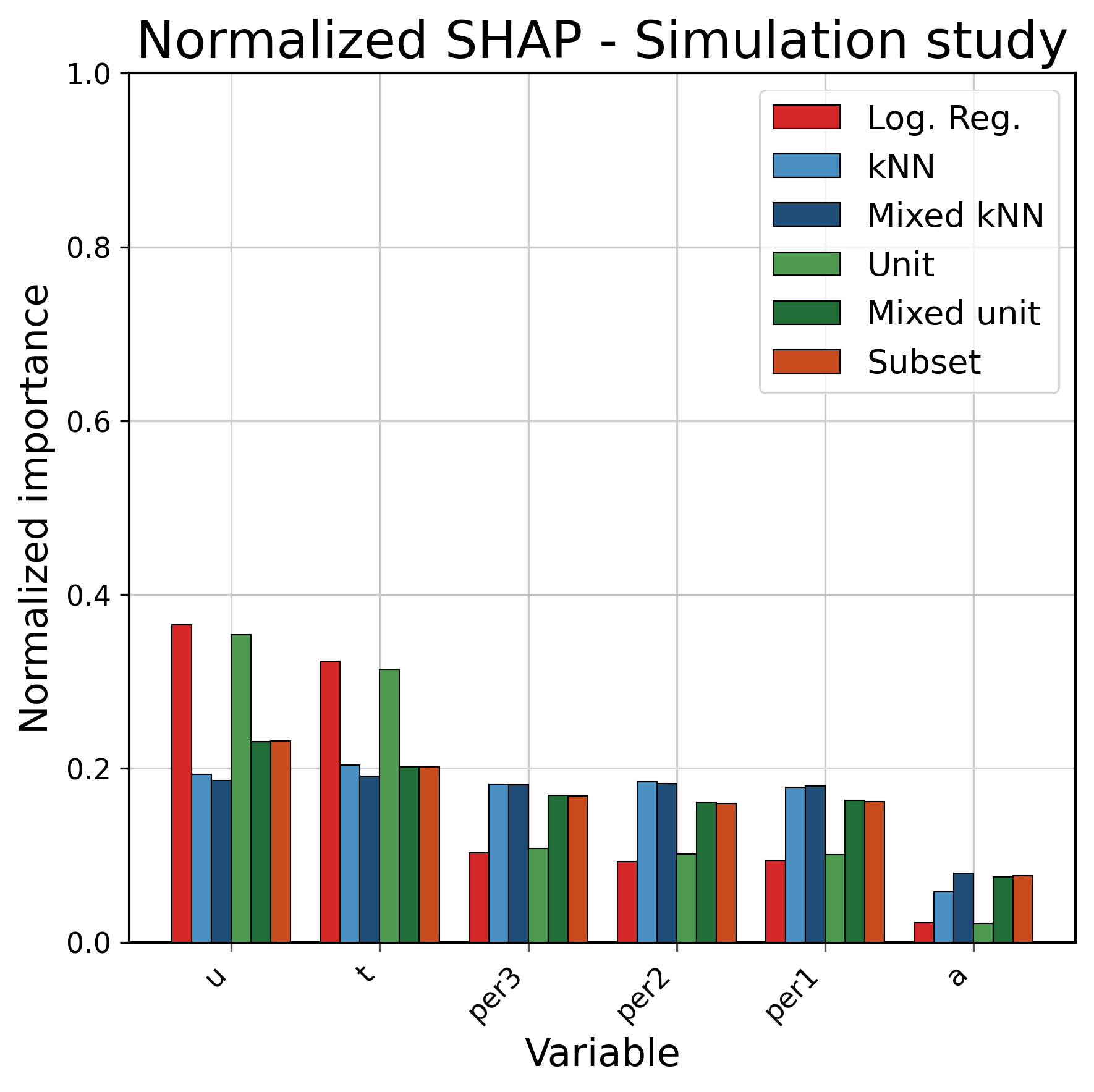}    
    \caption{Average SHAP variable influence of one hundred simulations. The left panel shows the average absolute SHAP contributions while the right panel influences are normalized.}
    \label{fig:InfluenceSimul}
\end{figure}

All in all, it is not a surprise that graph models perform better on individual fairness metrics due to their smoothing properties \citep{NSD}, although we will see in future experiments in Sections \ref{subsec:Design} to \ref{subsec:Cases} how this is a consequence of the underlying models and their associated graphs, whose topologies influence the different performance-fairness trade-offs. However, before delving into a more thorough discussion, a first-order approximation of the behavior of the models can be given by their SHAP values as seen in Figure \ref{fig:InfluenceSimul}, which compares the average variable influence of all simulations. First, it is clear that logistic regression gives more importance to $u$ and $t$ because they are directly related to the response variable, while neglecting the sensitive attribute $a$ and the personal covariates $per_i$. On the other hand, graph models assign less importance to all variables as suggested in Section \ref{subsec:Sheaves}. Moreover, they assign more weight to the personal features, which is key in achieving individual fairness as seen in the boxplots. This is similar to the toy model by \cite{ADVERSARIAL}, where the adversarial debiasing procedure assigned greater weight to the sensitive attribute in order to mitigate prejudice. However, the models fail at detecting group bias, and while they assign a greater importance to the sensitive attribute than the benchmark, it is not enough to mitigate prejudice as seen in the boxplots. Finally, note that the unit ball graph and the logistic regression model are really similar due to the sparsity of the underlying topology, hence cementing the relationship between topology and model performance. However, the results of Figure \ref{fig:SimulBoxplots} shows how the unit ball model achieves similar performance in the group metrics to logistic regression while achieving a great reduction in individual bias, hence becoming an attractive alternative to logistic regression while keeping similar interpretation.

\subsection{Effect of the hyper-parameters}
\label{subsec:Design}
After checking the overall capabilities of the model with the simulated data, we can now discuss the more concrete effects of modifying its hyper-parameters. This is done through a sensitivity analysis in which many different models are trained on the value ranges considered in Table \ref{tab:HyperParam} in the \ref{app:training}.\\
\begin{table}[ht!]
\centering
\begin{tabular}{*{8}{c}}
\toprule
Hyper-parameter & \multicolumn{7}{c}{Effect} \\
\cmidrule(lr){2-8}
 & ACC & IND & SEP & SUF & CON & LIP & ENT\\
\midrule
$\alpha$ & $\downarrow$ & $\downarrow$ & $\downarrow$ & $\downarrow$ & $\downarrow$ & $\rightarrow$ & $\downarrow$ \\
$n$ & $\downarrow$ & $\rightarrow$ & $\rightarrow$ & $\circlearrowleft$ & $\downarrow$ & $\downarrow$ & $\downarrow$ \\
$t$ & $\circlearrowleft$ & $\rightarrow$ & $\rightarrow$ & $\circlearrowleft$ & $\uparrow$ & $\circlearrowleft$ & $\circlearrowleft$ \\
$k$ & $\rightarrow$ & $\rightarrow$ & $\circlearrowleft$ & $\circlearrowleft$ & $\downarrow$ & $\downarrow$ & $\downarrow$ \\
$\delta$ & $\rightarrow$ & $\downarrow$ & $\rightarrow$ & $\circlearrowleft$ & $\downarrow$ & $\downarrow$ & $\rightarrow$ \\
$w_{subset} $ & $\uparrow$ & $\uparrow$ & $\uparrow$ & $\uparrow$ & $\downarrow$ & $\uparrow$ & $\uparrow$ \\
\bottomrule
\end{tabular}
\caption{Effect on each metric of increasing the hyper-parameters. The $\uparrow$ symbol means that the metric increases, $\downarrow$, that it decreases, $\rightarrow$, that it stays the same, and $\circlearrowleft$, that the effect is mostly unpredictable.}
\label{tab:HyperParamEffect}
\end{table}
The results are summarized in Table \ref{tab:HyperParamEffect}, which serves as a qualitative catalog of the broad effects on each metric that result from increasing each hyper-parameter. Basically, increasing the Sheaf Diffusion strength consistently compromises performance while improving fairness. The effect is similar with the number of layers of the discrete implementation, although in this case Group Fairness metrics are not improved, with IND and SEP staying the same while SUF varies unpredictably. Following with the integration time in continuous models, its effects are very chaotic. This, combined with longer training times, is why we recommend to avoid this implementation. Turning our attention to the defining parameters of the local topologies, in general Individual Fairness metrics are improved as $k$ and $\delta$ increase, while the effect is less clear on Group Fairness metrics and accuracy. This cements the intimate relationship between these topologies and Individual Fairness, and further suggests the robustness of these models with respect to their respective parameters. Finally, the weight of the global topology on mixed configurations improves accuracy while compromising all fairness metrics with the exception of consistency.\\
Although the sensitivity analysis is relegated to \ref{subapp:design}, we encourage interested readers to delve into this study in order to better understand the stability of the models when their hyper-parameters change. The main conclusions that can be drawn from the full analysis can be summarized in two points: the proposed global configurations are unstable and the results for continuous models exhibit chaotic behavior. Therefore, we encourage the use of local topologies with the discrete implementation.\\

\subsection{Case study}
\label{subsec:Cases}

The analysis concludes with the study of the performance of the proposed methods on a series of standard datasets used to benchmark fairness methods. The details of the datasets used are shown in Table \ref{tab:Datasets}. The discussion begins with a discussion of the performance-fairness trade-offs available with FSD models, followed by an analysis of the Pareto-optimal solutions of each topology and concluding with an examination of the variable importance and the interpretation of the methods. The hyper-parameter grids chosen for the grid search can be found in Table \ref{tab:Grid} in \ref{app:training}.

\subsubsection{The cost of fairness}
\label{subsubsec:Cost}

After running the grid search specified by Table \ref{tab:Grid}, a solution must be chosen. However, at the heart of fair ML lies a multi-objective optimization problem: balancing fairness with performance. In particular, we will measure group fairness using $IND$ and individual fairness through $CON$ because of their good behavior in both the simulation study and the hyper-parameter test. Meanwhile, we will use $ACC$ as a measure of performance. However, the three-fold choice we have to make is not straightforward and it can be aided by the visualization of available trade-offs in the Pareto-efficient frontier. Figure \ref{fig:Pareto2D} shows the Pareto fronts for independence and accuracy on the one hand, representing Group Fairness and performance trade offs, and the efficient frontier for consistency and accuracy, representing Individual Fairness and performance trade-offs. These objects allow us to quantify the cost of fairness. For example, in the German dataset we can compromise accuracy by just $2\%$ while achieving an improvement of nearly $50\%$ in independence or $33\%$ in consistency.

\begin{figure}[!t]
    \centering
    \includegraphics[width=0.49\textwidth]{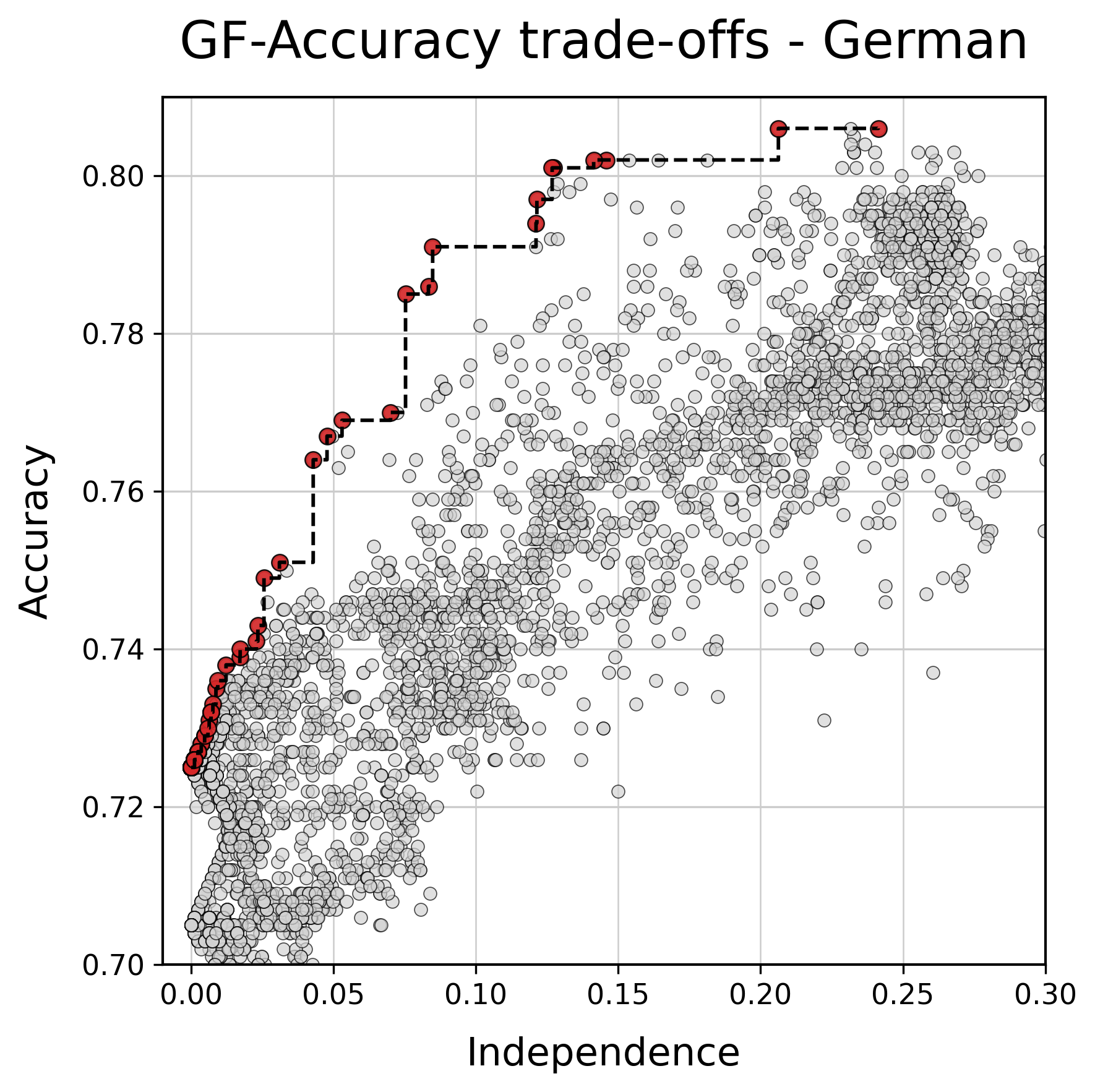}
    \includegraphics[width=0.49\textwidth]{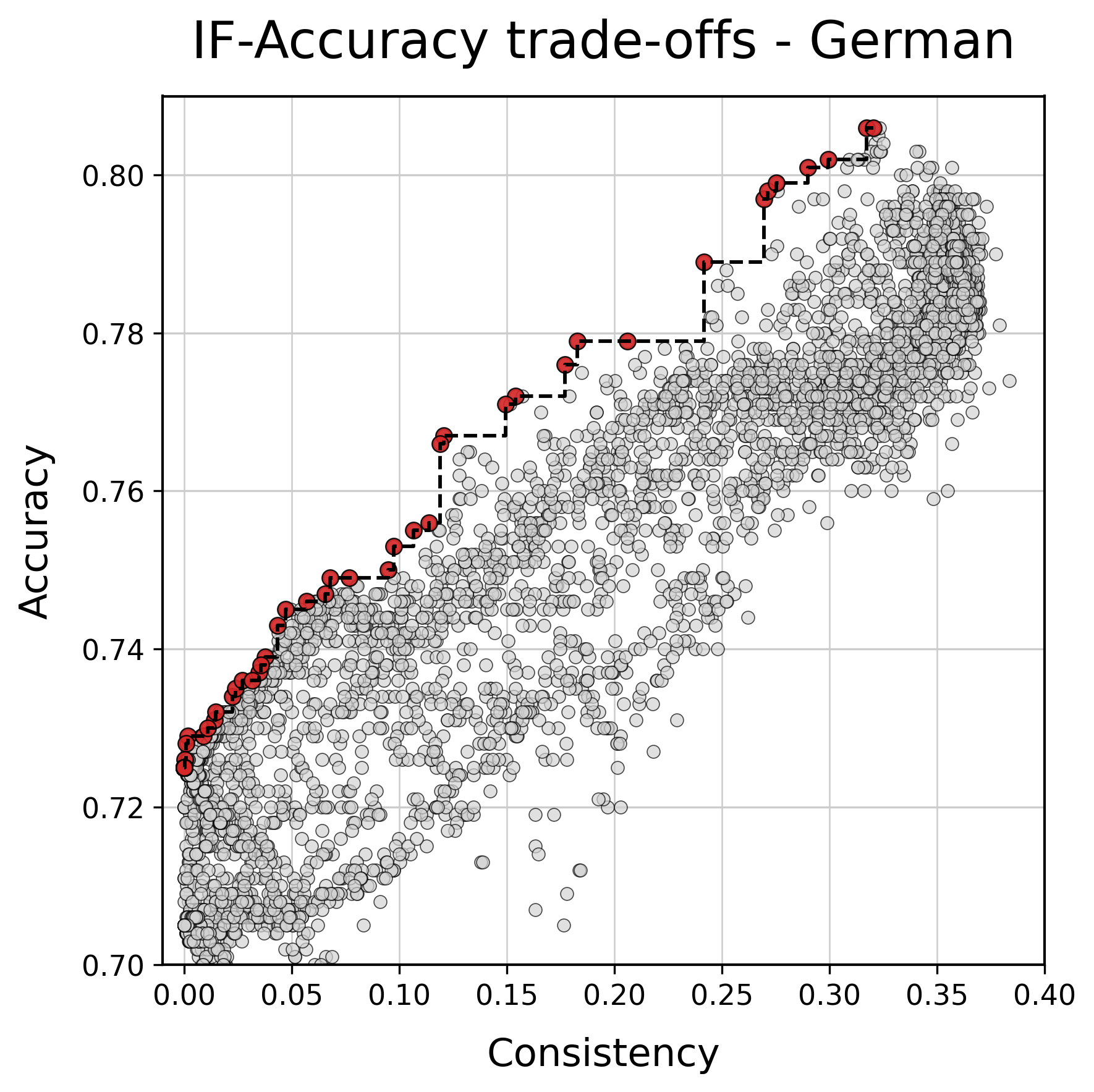}   
    \caption{Fairness-accuracy trade-offs in the Pareto frontier for the grid search on the German dataset. The left figure shows trade-offs between independence and accuracy, while the right displays consistency and accuracy.}
    \label{fig:Pareto2D}
\end{figure}

However, we are interested in giving a unified perspective between individual and group fairness. For this reason, Figure \ref{fig:Pareto3D} paints a more complete picture integrating both independence and consistency in a single plot, showcasing a three-dimensional Pareto quantifying the trade-offs between accuracy and both fairness metrics. This allows us to better understand the interaction between individual and group fairness metrics, thus making it easier to select an optimal solution. For example, for the German dataset, a compromise of just $2\%$ in accuracy leads to a combined improvement of $33\%$ in both independence and consistency.

\begin{figure}[!t]
    \centering
    \includegraphics[width=.75\textwidth]{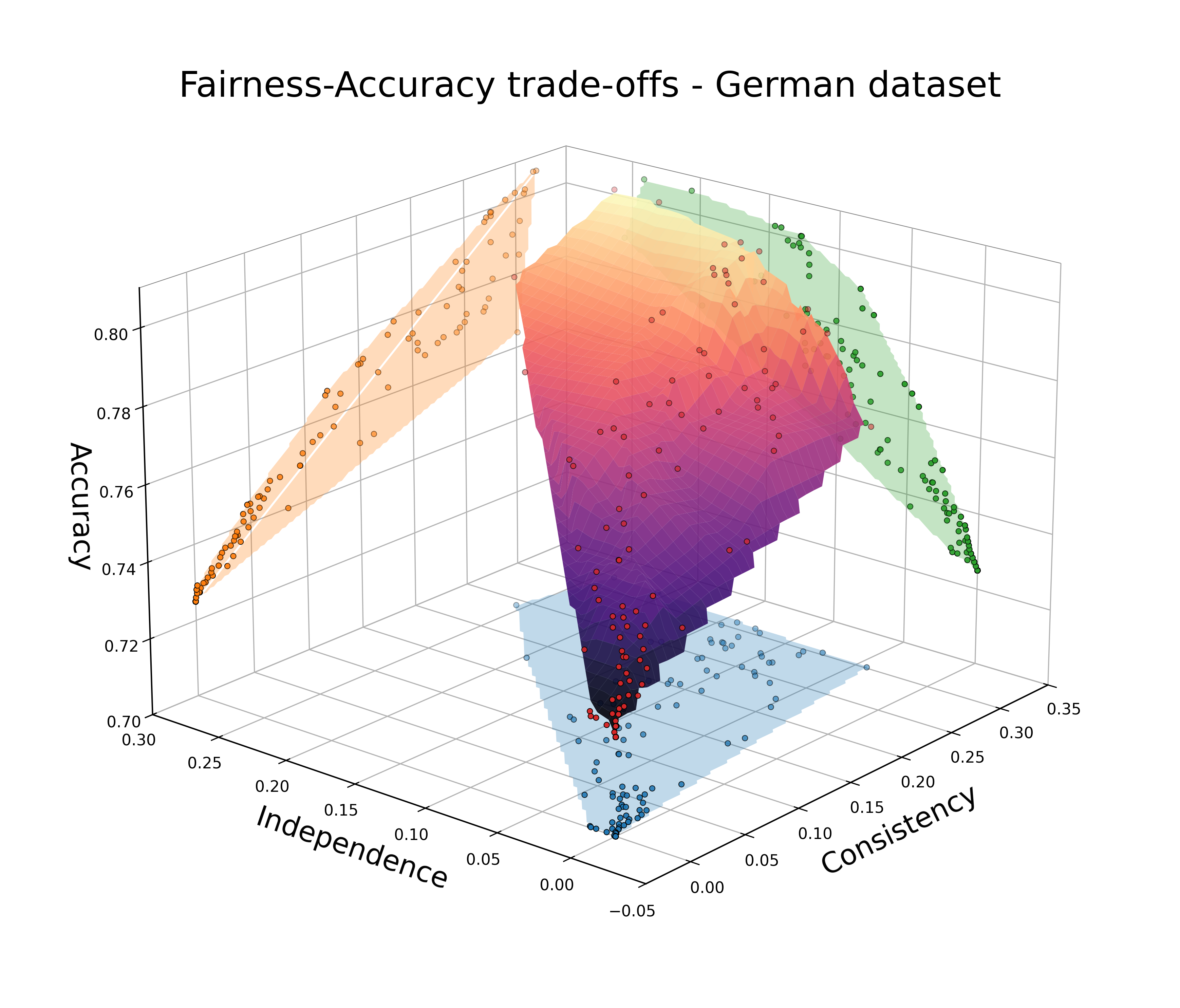}
    \caption{Fairness-accuracy trade-offs in the Pareto frontier accounting for both independence and consistency at the same time.}
    \label{fig:Pareto3D}
\end{figure}

With this picture in mind, we can now choose a Pareto efficient solution out of the grid search. In particular, for every model and its corresponding grid we will consider the model that maximizes $ACC - IND - CON$, thus giving equal weight to all three metrics. Finally, although this Section has only shown the efficient frontiers for the German dataset, readers itnerested in the trade-offs available for the rest of the datasets are refered to \ref{subapp:cost}.

\subsubsection{Results of the grid search}
\label{subsubsec:Results}

A grid search is run with the goal of maximizing the combined quantity $ACC - IND - CON$. The results for the optimal model for each dataset are then aggregated and shown in Table \ref{tab:DiffBenchmark}, while the results for each particular dataset can be found in \ref{app:results} in Tables \ref{tab:GermanResults} to \ref{tab:AdultResults}.

\begin{table}[ht!]
\centering
\resizebox{\linewidth}{!}{
\begin{tabular}{lccccccc}
\toprule
Model & ACC (\%) & IND (\%) & SUF (\%) & SEP (\%) & CON (\%) & LIP & ENT \\
\midrule
kNN & {\footnotesize -1.700}{\scriptsize $\pm$1.483} & \textbf{{\footnotesize -13.214}{\scriptsize $\pm$4.790}} & {\footnotesize -3.391}{\scriptsize $\pm$7.706} & \textbf{{\footnotesize -8.036}{\scriptsize $\pm$5.030}} & \textbf{{\footnotesize -4.480}{\scriptsize $\pm$2.466}} & \textbf{{\footnotesize -0.006}{\scriptsize $\pm$0.005}} & {\footnotesize -0.178}{\scriptsize $\pm$0.161} \\
Mixed kNN & {\footnotesize -1.600}{\scriptsize $\pm$1.233} & {\footnotesize 0.887}{\scriptsize $\pm$5.000} & {\footnotesize -0.665}{\scriptsize $\pm$7.577} & {\footnotesize 0.049}{\scriptsize $\pm$4.660} & {\footnotesize -2.700}{\scriptsize $\pm$3.185} & {\footnotesize 0.002}{\scriptsize $\pm$0.005} & {\footnotesize -0.171}{\scriptsize $\pm$0.135} \\
Unit & {\footnotesize -1.400}{\scriptsize $\pm$1.183} & {\footnotesize -5.774}{\scriptsize $\pm$4.292} & \textbf{{\footnotesize -3.892}{\scriptsize $\pm$7.330}} & {\footnotesize -7.142}{\scriptsize $\pm$5.522} & {\footnotesize -0.780}{\scriptsize $\pm$2.005} & {\footnotesize -0.002}{\scriptsize $\pm$0.003} & {\footnotesize -0.151}{\scriptsize $\pm$0.131} \\
Mixed unit & \textbf{{\footnotesize -0.100}{\scriptsize $\pm$1.192}} & {\footnotesize 6.350}{\scriptsize $\pm$4.909} & {\footnotesize -0.489}{\scriptsize $\pm$8.548} & {\footnotesize 2.537}{\scriptsize $\pm$6.158} & {\footnotesize -0.540}{\scriptsize $\pm$3.407} & {\footnotesize 0.002}{\scriptsize $\pm$0.004} & {\footnotesize -0.009}{\scriptsize $\pm$0.144} \\
Subset & {\footnotesize -2.700}{\scriptsize $\pm$1.095} & {\footnotesize -10.981}{\scriptsize $\pm$4.033} & {\footnotesize 1.817}{\scriptsize $\pm$7.482} & {\footnotesize -2.814}{\scriptsize $\pm$5.800} & {\footnotesize -1.680}{\scriptsize $\pm$7.368} & {\footnotesize -0.003}{\scriptsize $\pm$0.009} & \textbf{{\footnotesize -0.278}{\scriptsize $\pm$0.114}} \\
\bottomrule
\end{tabular}
}
\caption{Average differences between the proposed methods and logistic regression for all datasets.}
\label{tab:DiffBenchmark}
\end{table}

Starting with accuracy, it is easy to see that all models compromise performance, with the worst performing model being found in the subset configuration while the mixed unit ball topology seems to achieve the smaller reduction in accuracy. Mixing a local graph with the global configuration leads to enhanced performance, with the improvement being very small for the kNN graph but significant for the unit ball topology. Moving on to group fairness metrics, it seems like these metrics are compromised when using a mixed configuration, as seen in both independence and separation. The purely local and global topologies seem to perform better in this front, with kNN being the biggest winner in this respect both in terms of $IND$ and $SEP$. The reason behind this is found in the underlying topology: kNN manages to create diverse neighbors including members of both sensitive groups, thus encouraging group fairness. If this was not the case then group fairness might not improve; this is the case of the unit ball topology, whose area of influence is reduced due to the presence of a fixed radius, thus its neighbors might not be as diverse. Finally, sufficiency is improved across the board when using graph models, with the biggest winner being the unit ball configuration. However, this metric has been found inadequate in many contexts (see, for instance, \cite{EJORSurvey}), so these results should be taken with a grain of salt. Moving on to individual fairness metrics, consistency is improved across the board in all topologies as a consequence of the smoothing behavior of graph models (see, for instance, \cite{NSD}). However, the best performer in terms of $CON$ is the pure kNN topology, which aligns with our theoretical expectations. On the other hand, the lipschitz constant is barely changed after using graph models, the biggest improvement is given by the kNN topology once again, thus cementing its position as one of the best models for both individual and group fairness. Mixed configurations seem to perform worse, although in general changes in this metric are not significant. Finally, the generalized entropy is improved with all topologies, once again as a consequence of the smoothing capabilities of graph models. The best model in this front is given by the global configuration by virtue of its topology being the one that most strongly incorporates group fairness, which when combined with the smoothing properties of graph models leads to the biggest improvement in this unified metric.\\
All in all, the analysis suggests that graph models provide an attractive option for ML practitioners interested in mitigating individual and group bias in a combined effort, with the best models being those involving the kNN topology in general. Finally, readers interested in the concrete results for each dataset are refered to \ref{subapp:results}.

\subsubsection{Understanding fairness}
\label{subsubsec:Understanding}
The study of the models concludes with the analysis of the SHAP values, found in Figure \ref{fig:Influence}, which shows the normalized and absolute aggregated SHAP values for all predictions for the Compas dataset. Those interested in the analysis of all SHAP values are refered to \ref{subapp:understanding}.\\
Although graph models use the same underlying variables to make decisions as the reference found in logistic regression, the importance they assign can be dampened or exacerbated by the underlying topology, although this effect is less pronounced on sparse graphs like the unit ball configuration. One example of dampening happens in the \emph{c\_charge\_desc} variable, while one striking example of exacerbation is found in the race variable, whose relative importance is increased. However, this falls in line with the findings of \cite{ADVERSARIAL}, whose model also increases the influence of the sensitive variable in order to mitigate prejudice and is a manifestation of fairness through awareness. Finally, we find that FSD models tend to exacerbate low-variance variables, which contributes to the reduction in individual bias seen in the previous Subsection.

\begin{figure}[!t]
    \centering
    \includegraphics[width=0.49\textwidth]{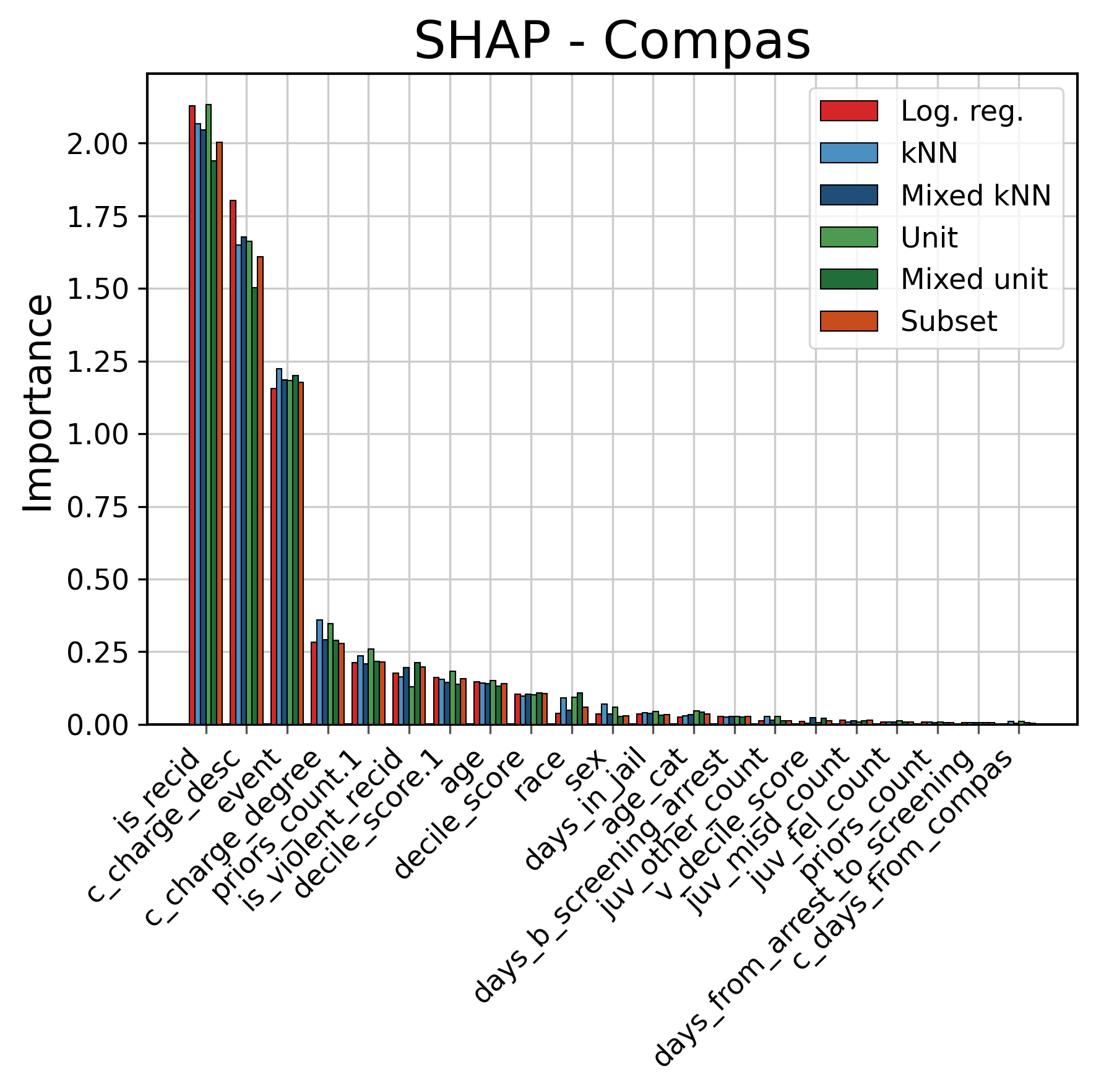}
    \includegraphics[width=0.49\textwidth]{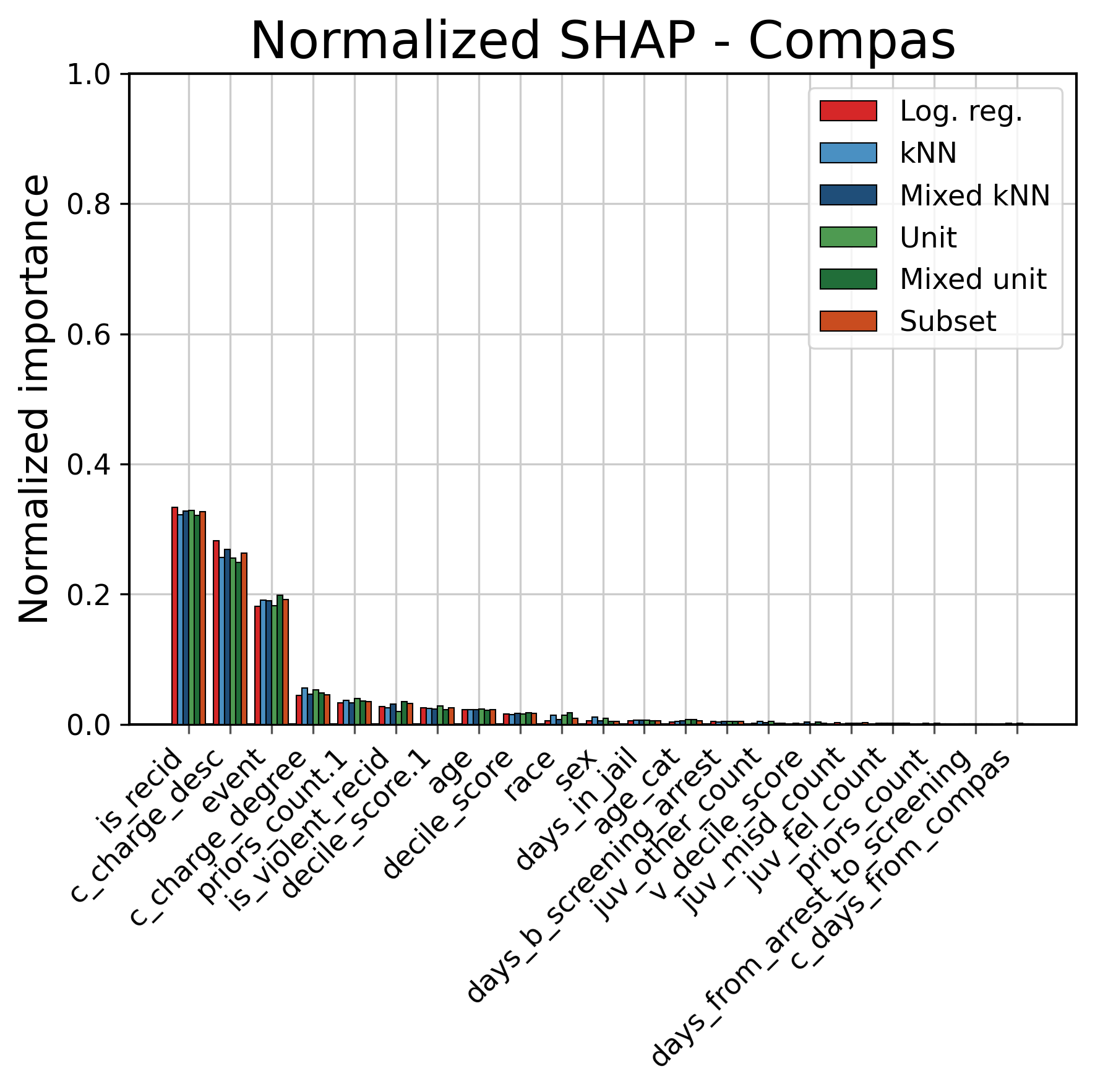} 
    \caption{SHAP variable influence. The left image shows average absolute SHAP importance, while the right image is normalized.}
    \label{fig:Influence}
\end{figure}

%\begin{figure}[!ht]
%    \centering
%    \includegraphics[width=\textwidth]{FigCompassInfluence.png}
%    \includegraphics[width=\textwidth]{FigCompassInfluenceNorm.png}    
%    \caption{SHAP variable influence for the Compass dataset. The left panel shows the average absolute SHAP contributions while the right panel influences are normalized.}
%    \label{fig:InfluenceCompass}
%\end{figure}

\section{Conclusion}
\label{sec:Conclusion}
This paper has introduced a novel technique in the realm of fair ML in Fair Sheaf Diffusion, which has opened the door for attractive innovations in the literature.\\
The main contribution is found in the modeling of fairness constraints through graph configurations even in the absence of traditional edges. Furthermore, the combination of different network topologies allows to simultaneously address individual and group bias, extending existing work by \cite{ENTROPY} and providing a combined perspective on the matter. The classifiers built on these ideas provide fine control over the existing trade-offs between accuracy and individual and group fairnes through a series of hyper-parameters whose effects are comprehensively cathegorized, and they admit closed-form expressions for their predictions, facilitating interpretations and explanations. Moreover, these models can be placed in any part of the ML pipeline, thus becoming a pre-processor, in-processor or post-processor depending on the needs of the practitioner thus offering extra flexibility, or they could be concatenated to create a hybrid multistage processor, which is a future line of research which could lead to enhanced fairness. \\
These ideas are materialized through a thorough series of case studies which range from simulations to real datasets. The experiments show how the models achieve notable results both in performance and fairness, and they allow to quantify the effect of the different implementations, graphs and hyper-parameters, providing a comprehensive catalog for practitioners. Finally, the analysis of their SHAP values allows practitioners with readily-available explanations when those are needed in critical contexts.\\
The most inmediate ways of generalizing our work consist of adapting the proposed methodologies to more general settings and sheaf models, the most straighforward being found in extensions of sheaf models like those found for directed graphs \citep{DIRECTED} or hypergraphs \citep{HYPERGRAPH}, which could cement the use of topological flavoured methods in fairness literature for more complex data structures. On the other hand, some more complex lines of future research involve the use of non-linear sheaf diffusion \citep{NONLINEAR} to model non-linear constraints and metrics, the adoption of the wave equation \citep{SHEAFSUR} to propose an alternative method based on its properties related to energy conservation, and the use of local topologies to achieve counterfactual fairness \citep{JASA_COUNTERFACTUAL, knnCounterfactual}. Finally, the framework introduces the language of algebraic topology, associating fairness with a cohomology sequence. We conjecture that the use of the theoretical tools of algebraic topology could lead to an interesting new direction which could provide an alternative foundation to fairness.

\section*{Acknowledgments}
This research is part of the I + D + i projects PID2022-137243OB-I00 and  PID2022-137818OB-I00 funded by Ministerio
de Ciencia, Innovación y Universidades/AEI/10.13039/501100011033 and European Union NextGenerationEU/PRTR. This initiative has also been partially carried
out within the framework of the Recovery, Transformation and Resilience Plan funds, financed by the European Union (Next Generation) through the grant ANTICIPA
and the ENIA 2022 Chairs for the creation of university-industry chairs in AI-AImpulsa: UC3M-Universia.

\bibliographystyle{apalike}
\bibliography{cas-refs}

\appendix
\section{Training procedure}
\label{app:training}

The choice of values or grids used for different hyper-parameters are shown in Tables \ref{tab:HyperParamSimul} to \ref{tab:Grid}. Note that $Q(z)$ represents the $z$ quantile of the distribution of observed distances, $Q(z) = Q(z; \{\|x_i-x_j\|_2 \mid x_i, x_j \in X\})$. All training was done using the AdamW optimizer and performed on an NVIDIA Tesla T4 GPU with $16$GB of VRAM. The system was running NVIDIA driver version $565.57.01$ and CUDA $12.7$.

\begin{table}[ht!]
    \centering
    \resizebox{\linewidth}{!}{
    \begin{tabular}{ccc}
    \toprule
         Parameter & Value & Description\\
         \midrule
         $\alpha$ & $0.3$ & Strength of the Sheaf Diffusion.\\ 
         $n$ & $10$ & Number of layers.\\          
         $k$ & $5$ & Number of neighbors for the knn topology. \\
         $\delta$ & $Q(0.1; \{d(x_i, x_j)\mid x_i, x_j \in X\})$ & Distance value for the unit ball topology. \\
          $w_{subset} $ & $0.5$ & Convex weight of the subset topology. \\
         \bottomrule
    \end{tabular}
    }
    \caption{Values considered for the simulation study.}
    \label{tab:HyperParamSimul}
\end{table}

\begin{table}[ht!]
    \centering
    \resizebox{\linewidth}{!}{
    \begin{tabular}{cccc}
    \toprule
         Parameter & Range of values & Standard value & Description\\
         \midrule
         $\alpha$ & $0.1-1.3$ & $0.3$ & Strength of the Sheaf Diffusion.\\ 
         $n$ & $1-40$ & $10$ & Number of layers of the discrete model.\\          
         $t$ & $0.1-1.3$ & $0.6$ & Integration time of the continuous model. \\ 
         $k$ & $1-15$ & $5$ & Number of neighbors for the knn topology. \\
         $\delta$ & $Q(50/N)-Q
         (500/N)$ & $Q(500/N)$ & Distance value for the unit ball topology. \\
          $w_{subset} $ & $0.1 - 0.9$ & $0.5$ & Convex weight of the subset topology. \\
         \bottomrule
    \end{tabular}
    }
    \caption{Values considered for the study of the effect of the different hyper-parameters of the graph configuration and SD process.}
    \label{tab:HyperParam}
\end{table}

\begin{table}[ht!]
    \centering
    \resizebox{\linewidth}{!}{
    \begin{tabular}{ccc}
    \toprule
         Parameter & Value or grid & Description\\
         \midrule
         $\mathcal{P}$ & $\{\{x_i | A = a\}_{a}\}$ & Partition for the subset topology.\\ 
         $d$ & $\| \cdot \|_2$ & Distance function for the local topologies \\
         $k$ & $\{1, 5, 15\}$ & Number of neighbors for the knn topology. \\
         $\delta$ & $\{Q(50/N), Q(100/N), Q(500/N)\}$ & Distance value for the unit ball topology. \\
         $\alpha$ & $\{0.05, 0.1, 0.3\}$ & Strenght of the Sheaf Diffusion.  \\ 
         $n$ & $\{5, 10, 20\}$ & Number of layers of the discrete model. \\   
         $t$ & $\{0.1, 0.3, 1.0\}$ & Integration time of the continuous model. \\ 
         $WD$ & Log-uniform$[10^{-6}, 10^{-2}]$ & Weight-decay of the AdamW optimizer.\\
        $LR$ & Log-uniform$[10^{-4}, 10^{-1}]$ & Learning rate of the AdamW optimizer.\\
        $\beta_1$ & Uniform$[0.8, 0.95]$ & First moment decay rate of the AdamW optimizer. \\
        $\beta_2$ & Uniform$[0.95, 0.9999]$ & Second moment decay rate of the AdamW optimizer. \\
         \bottomrule
         
    \end{tabular}
    }
    \caption{Grids and values for the grid search.}
    \label{tab:Grid}
\end{table}

\section{Extended results}
\label{app:results}

We now show the results which did not fit in the main paper along with their analysis in order to paint a more complete picture of the discussed models.

\subsection{Simulation study}
\label{subapp:simulation}

First, we show the full results of the simulation study outlined in Section \ref{subsec:Setup}, this time showing all metrics. Remember that the simulation is ran $100$ times with the hyper-parameters shown in Table \ref{tab:HyperParamSimul}. The results are shown in Figures \ref{app:FirstBox} to \ref{app:LastBox} which represent the distribution of accuracy and the fairness metrics reviewed in Section \ref{sec:Theory}. For starters, all models seem to compromise accuracy, with a relative decrease in the median performance ranging from around $13\%$ for the subset topology to $3\%$ for the unit ball graph. Likewise, all individual fairness metrics are improved after using a graph model, with the most resounding success found in consistency with both kNN configurations, which results in a relative median decrease in the already low $CON$ metric of $33\%$, thus cementing the intuition behind these methods. The processors also show promise for $ENT$, with a consistent decrease of around $33\%$. Finally, with $LIP$, although the metric is also improved, the results are not as dramatic. Moving onto group fairness metrics, the results are less satisfactory. For starters, the global topology is the biggest underperformer, compromising all group fairness metrics despite the presented ideas. The rest of the models underperform in this front when compared to logistic regression with the exception of the purely local topologies with the $IND$ metric, where the fairness readings are similar.\\

\begin{figure}[ht]
  \centering
\includegraphics[width=.49\textwidth]{FigSimulBoxplotsCompact_Independence.png}
\includegraphics[width=.49\textwidth]{FigSimulBoxplotsCompact_Consistency.png}
   \caption{}
\label{app:FirstBox}
\end{figure}

\begin{figure}[ht]
  \centering
\includegraphics[width=.49\textwidth]{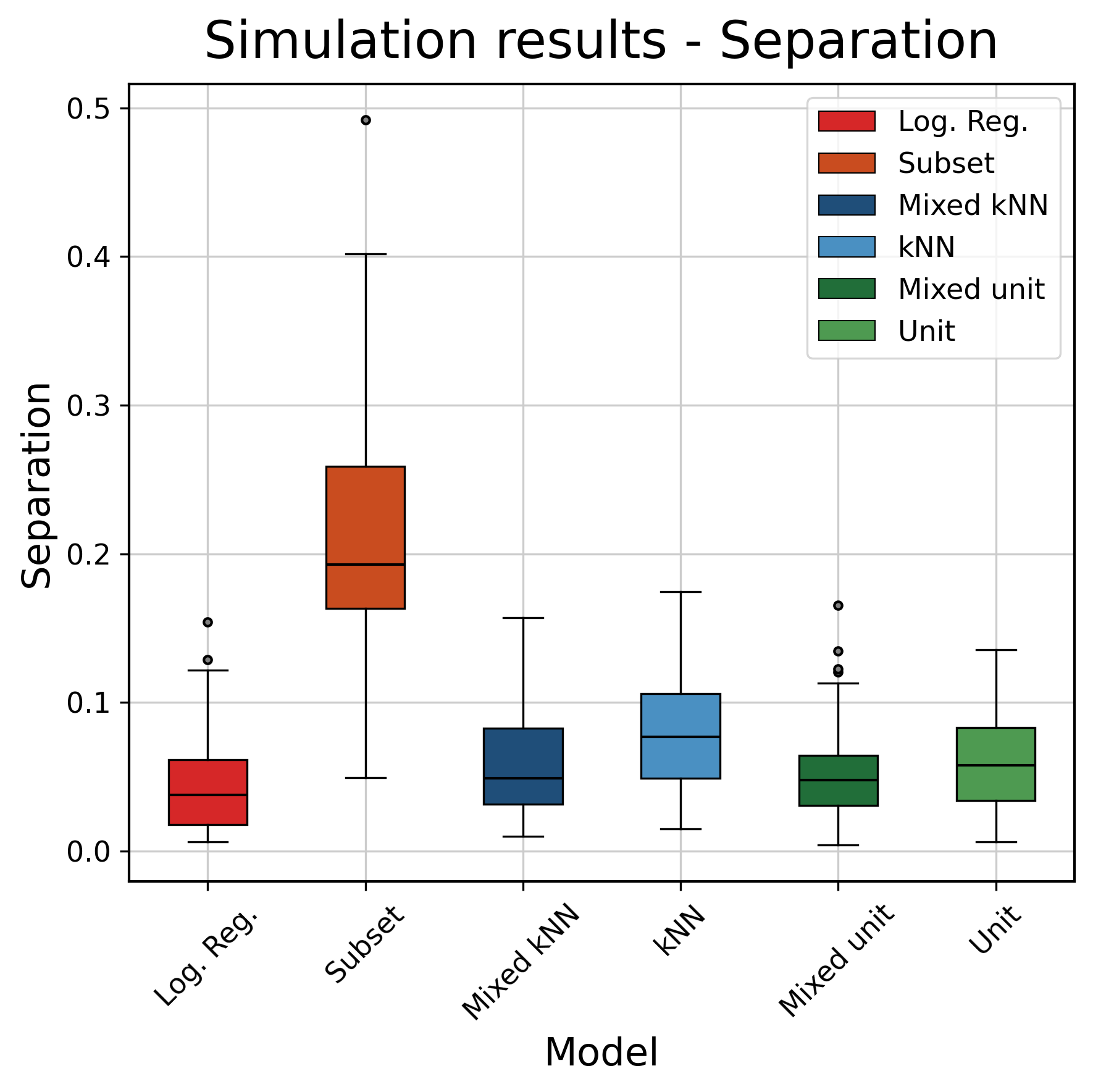}
\includegraphics[width=.49\textwidth]{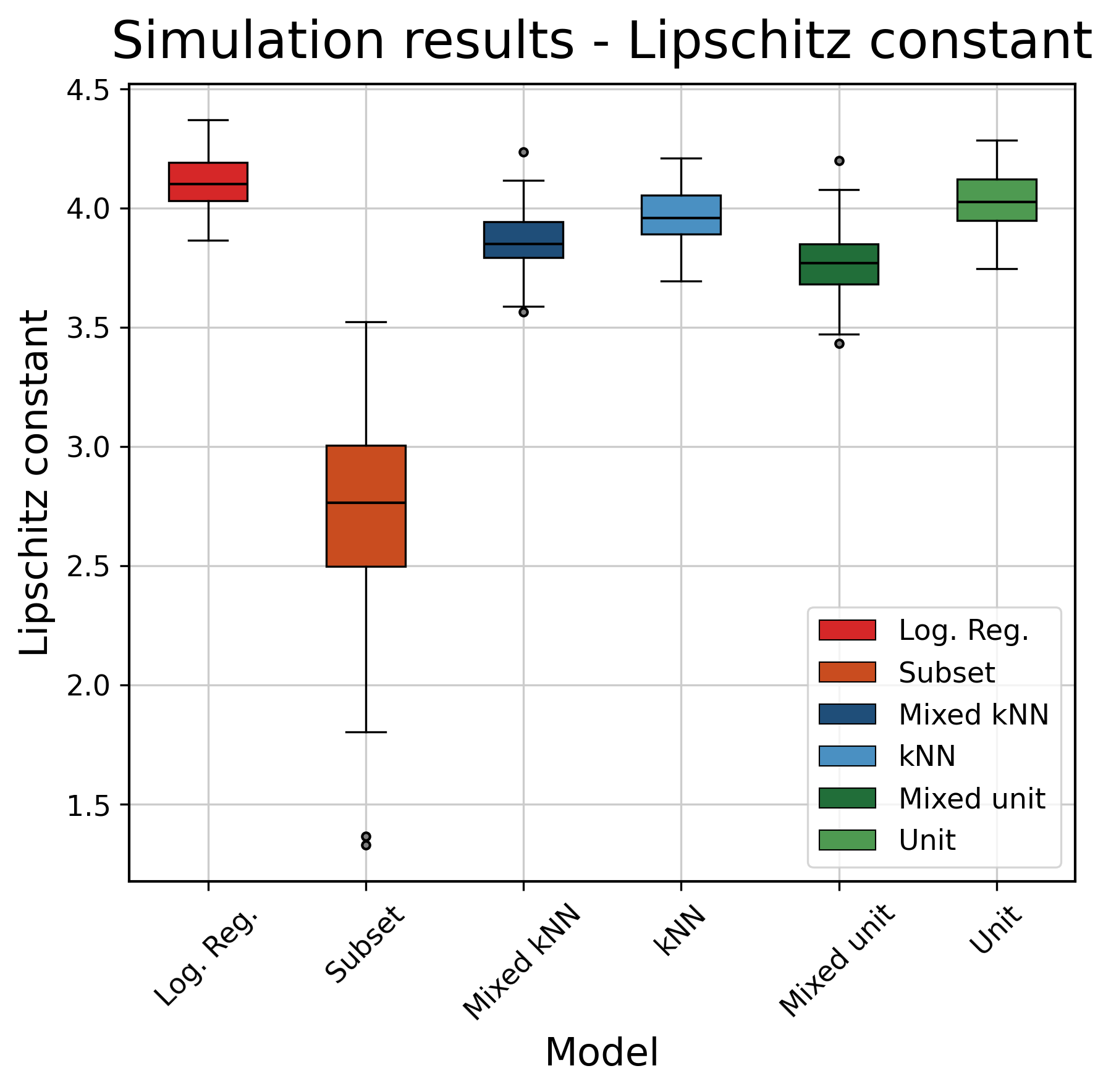}
   \caption{}
\end{figure}

\begin{figure}[ht]
  \centering
\includegraphics[width=.49\textwidth]{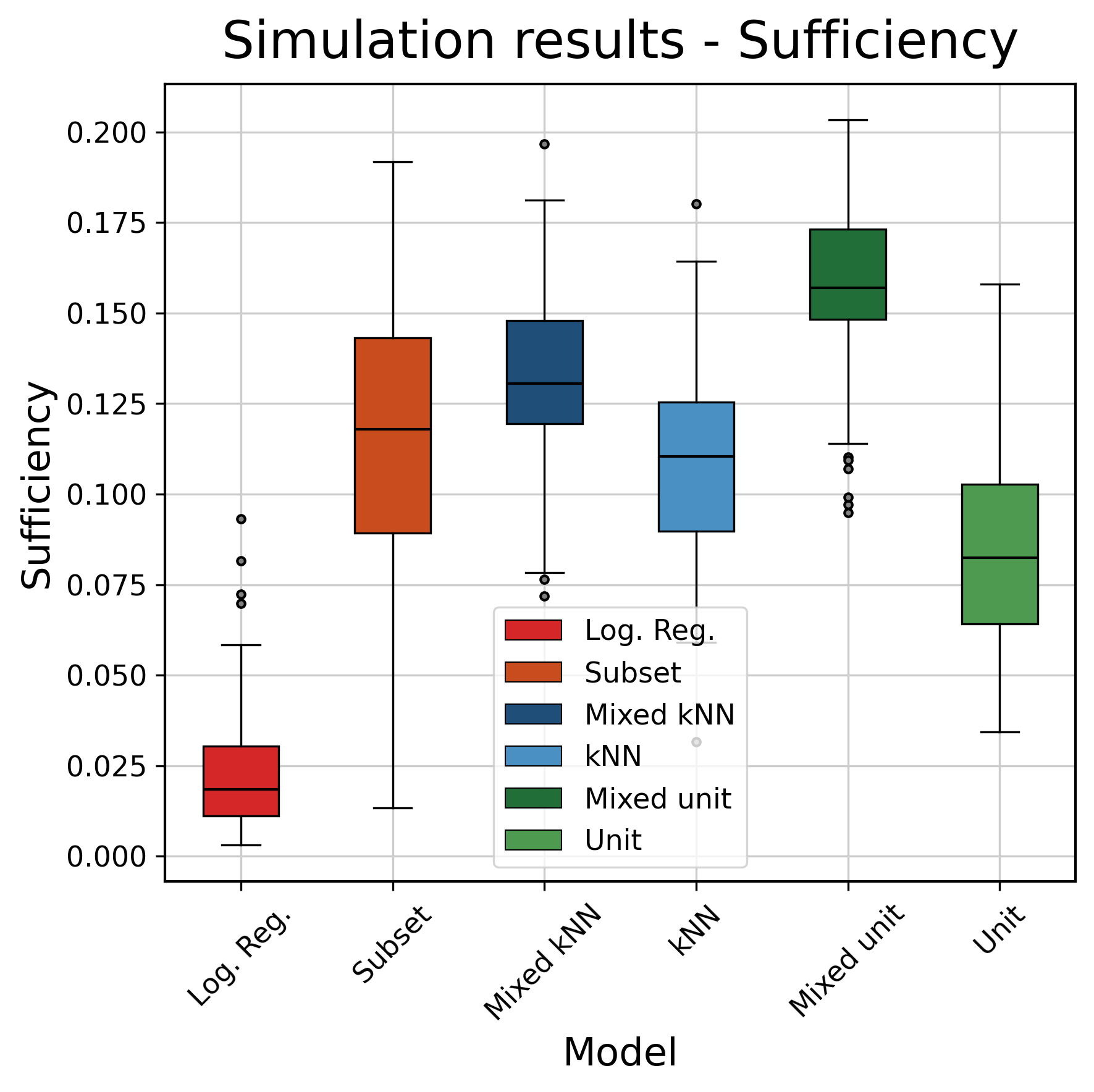}
\includegraphics[width=.49\textwidth]{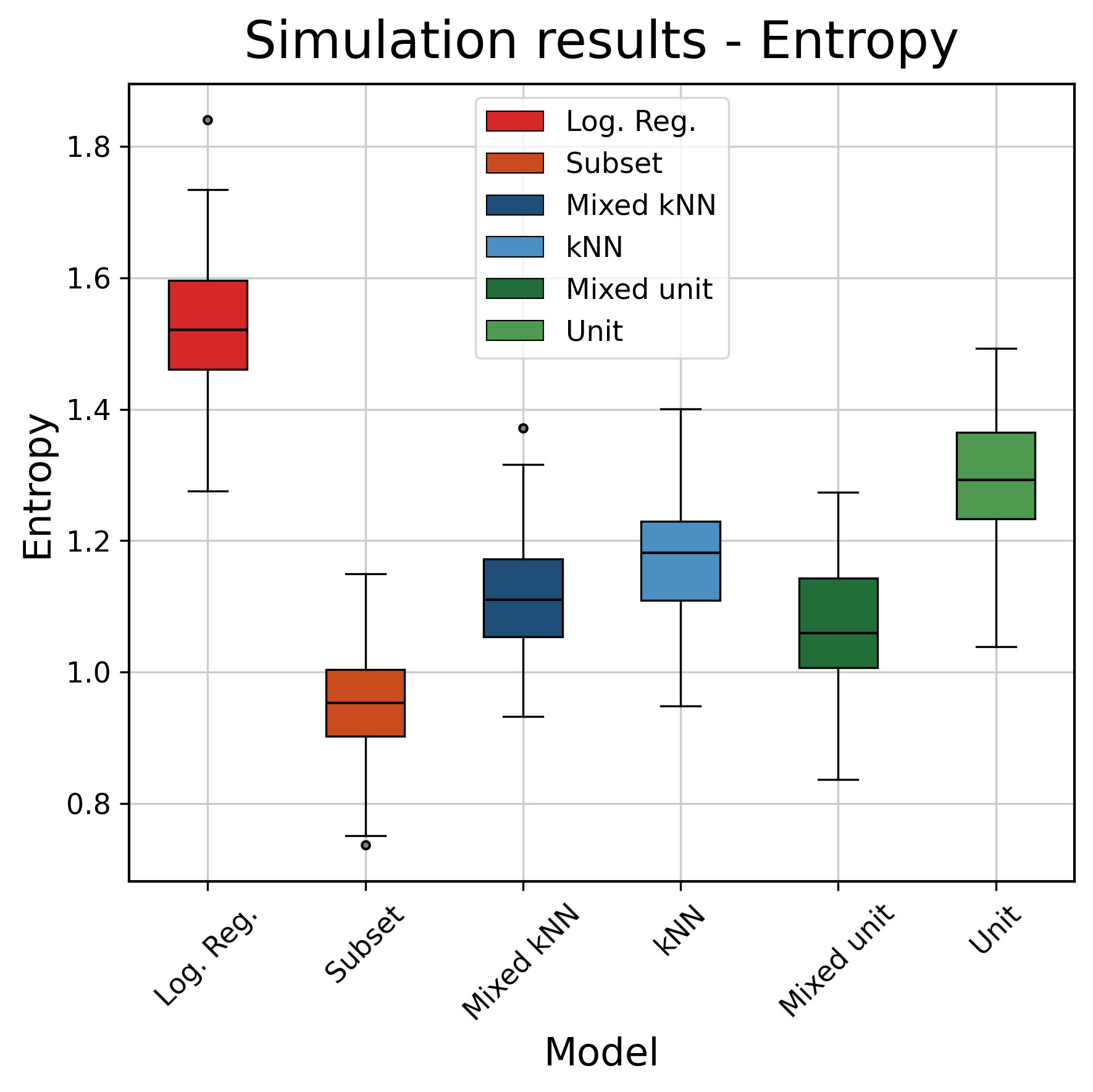}
   \caption{}
\end{figure}

\begin{figure}[ht]
  \centering
 \includegraphics[width=.49\textwidth]{FigSimulBoxplotsCompact_Accuracy.png}
    \caption{}
 \label{app:LastBox}
\end{figure}

\subsection{Sensitivity analysis}
\label{subapp:design}

This Section showcases the full sensitivity analysis which Section \ref{subsec:Design} and Table \ref{tab:HyperParamEffect} build upon, delving into a more detailed, microstructural analysis of the different hyper-parameters and topologies with the aim of easing the understanding of the different design choices behind FSD models. All considered value ranges can be found in Table \ref{tab:HyperParam} in the \ref{app:training}. 

\subsubsection{Effect of $\alpha$, $n$ and $t$}
The first parameters to be studied are those common to all models, found in the strenght of the diffusion, $\alpha$, and its length, given by the number of layers, $n$, for the discrete implementation and by the integration time, $t$, in the continuous case. The analytical expression \eqref{SheafDif} suggests that increasing all of these factors should lead to improved fairness at the cost of worse performance, because increasing $\alpha$ increases the magnitude of the negative exponent of the solution, while the integration time or the number of layers is equivalent to letting time approach infinity, hence getting closer to the fair time limit. The results are shown in Figures \ref{fig:FirstCoef} to \ref{fig:LastTime}.

\begin{figure}[!t]
  \centering
  \includegraphics[width=.49\textwidth]{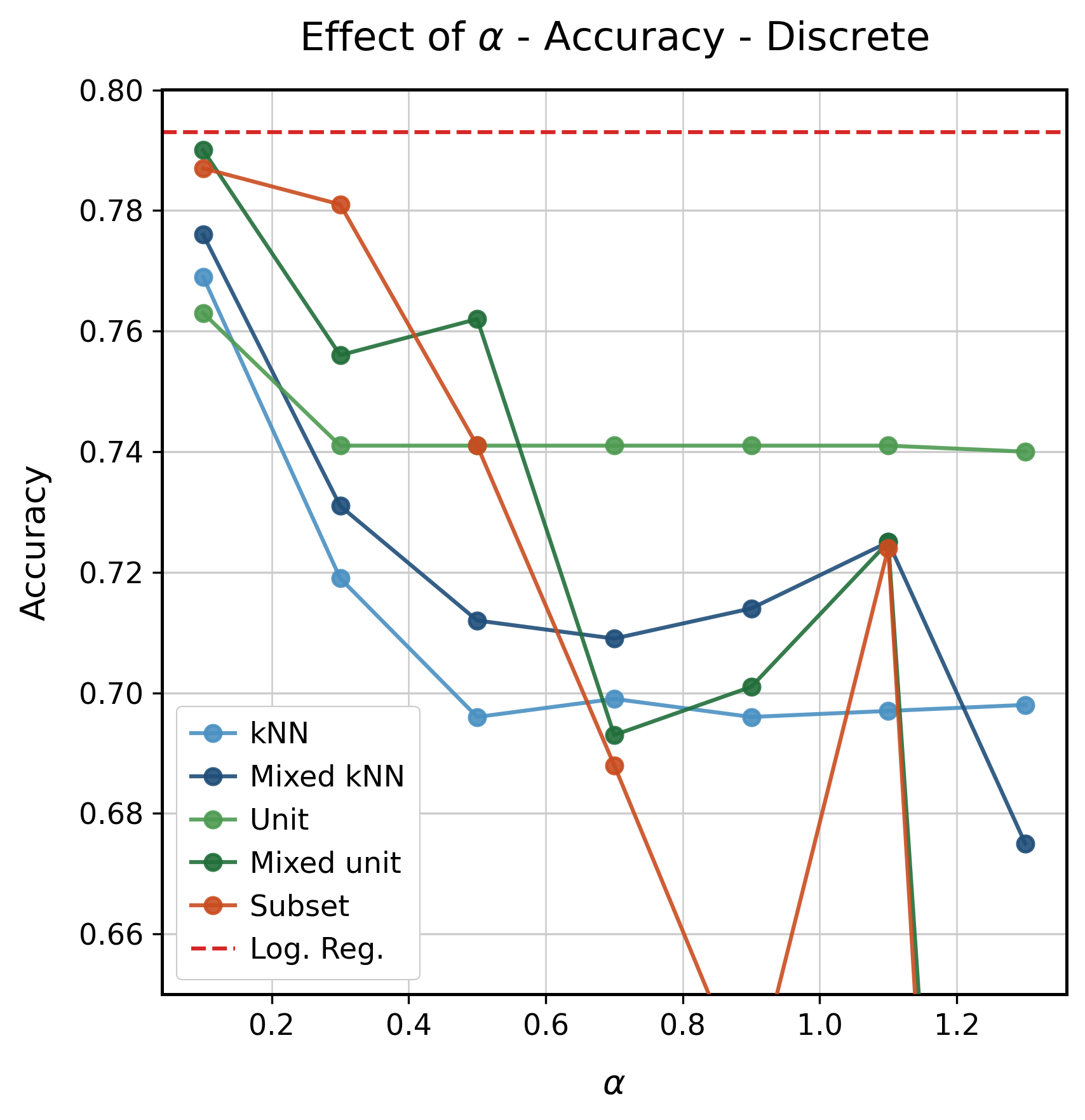}
  \includegraphics[width=.49\textwidth]{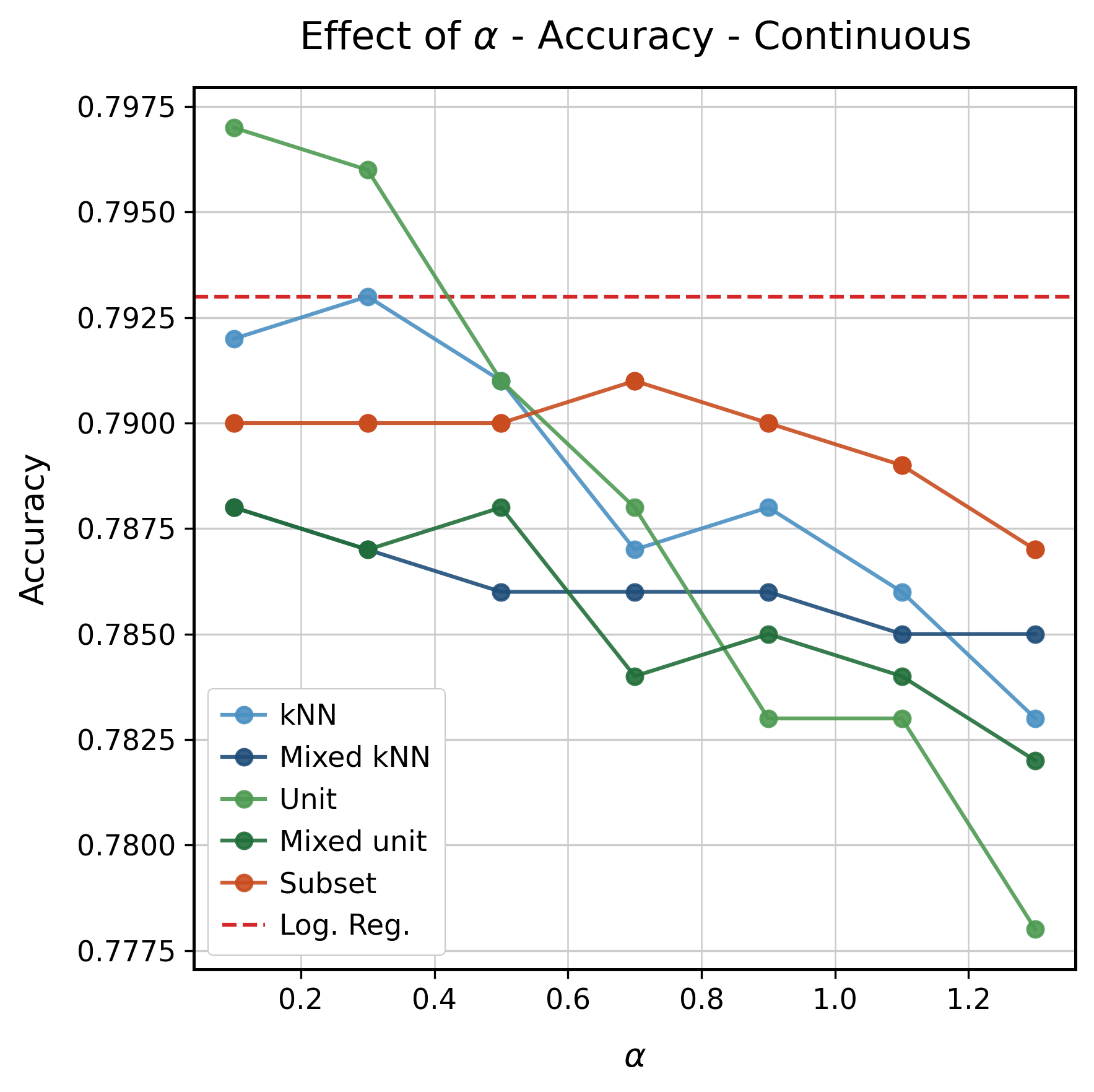}
  \caption{}
  \label{fig:FirstCoef}
\end{figure}

\begin{figure}[!t]
  \centering
  \includegraphics[width=.49\textwidth]{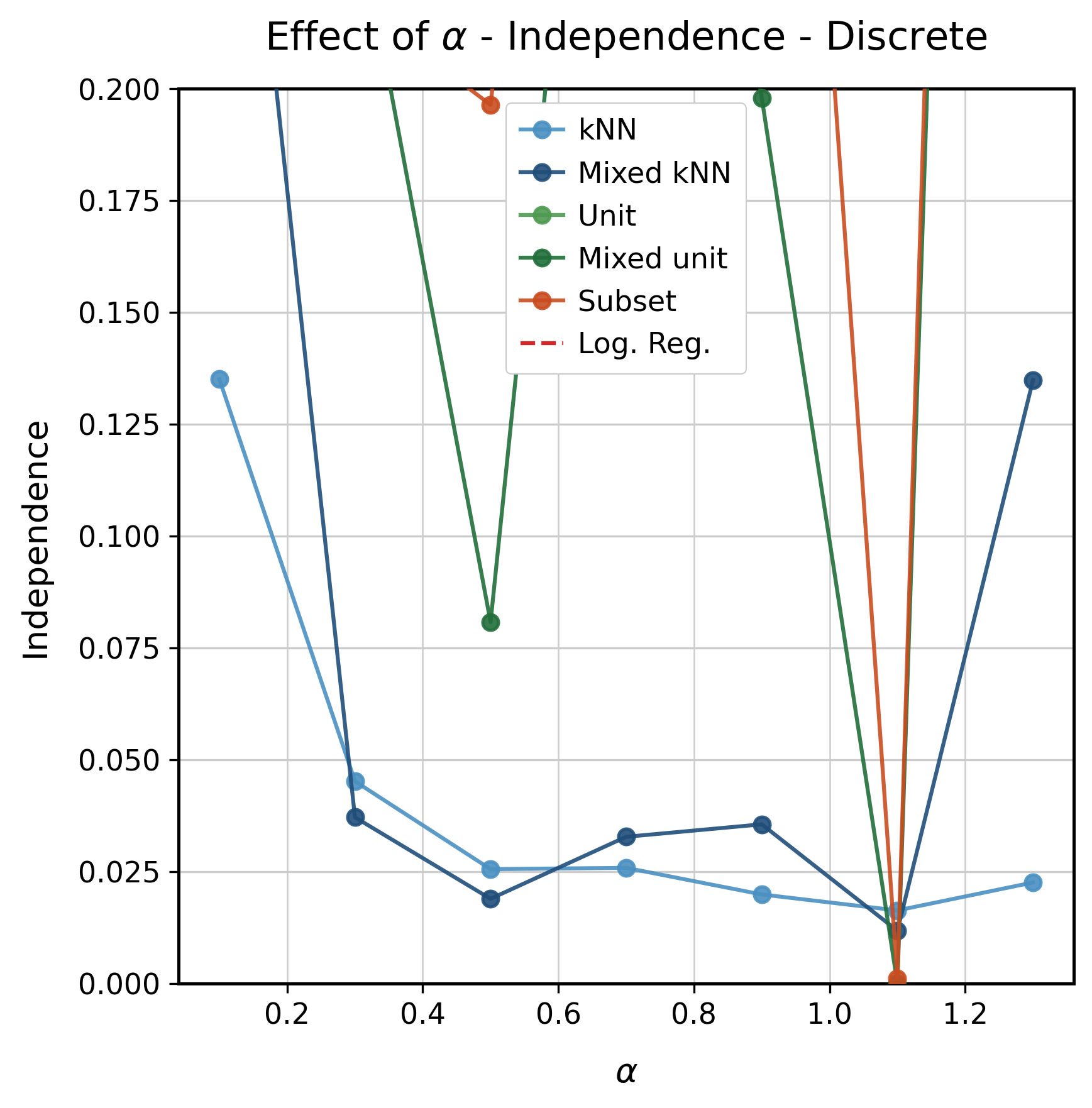}
  \includegraphics[width=.49\textwidth]{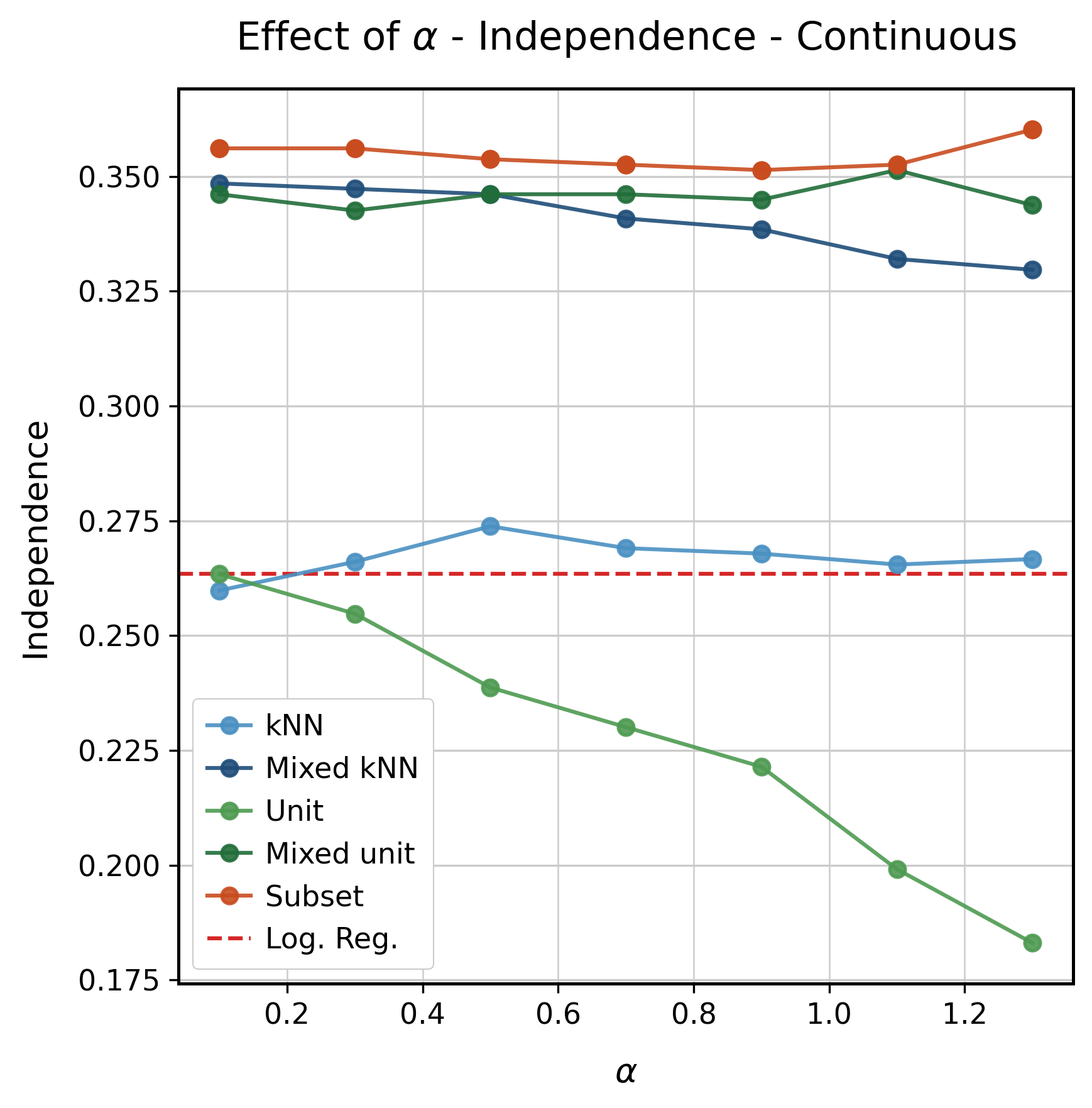}
    \caption{}
\end{figure}

\begin{figure}[!t]
  \centering
  \includegraphics[width=.49\textwidth]{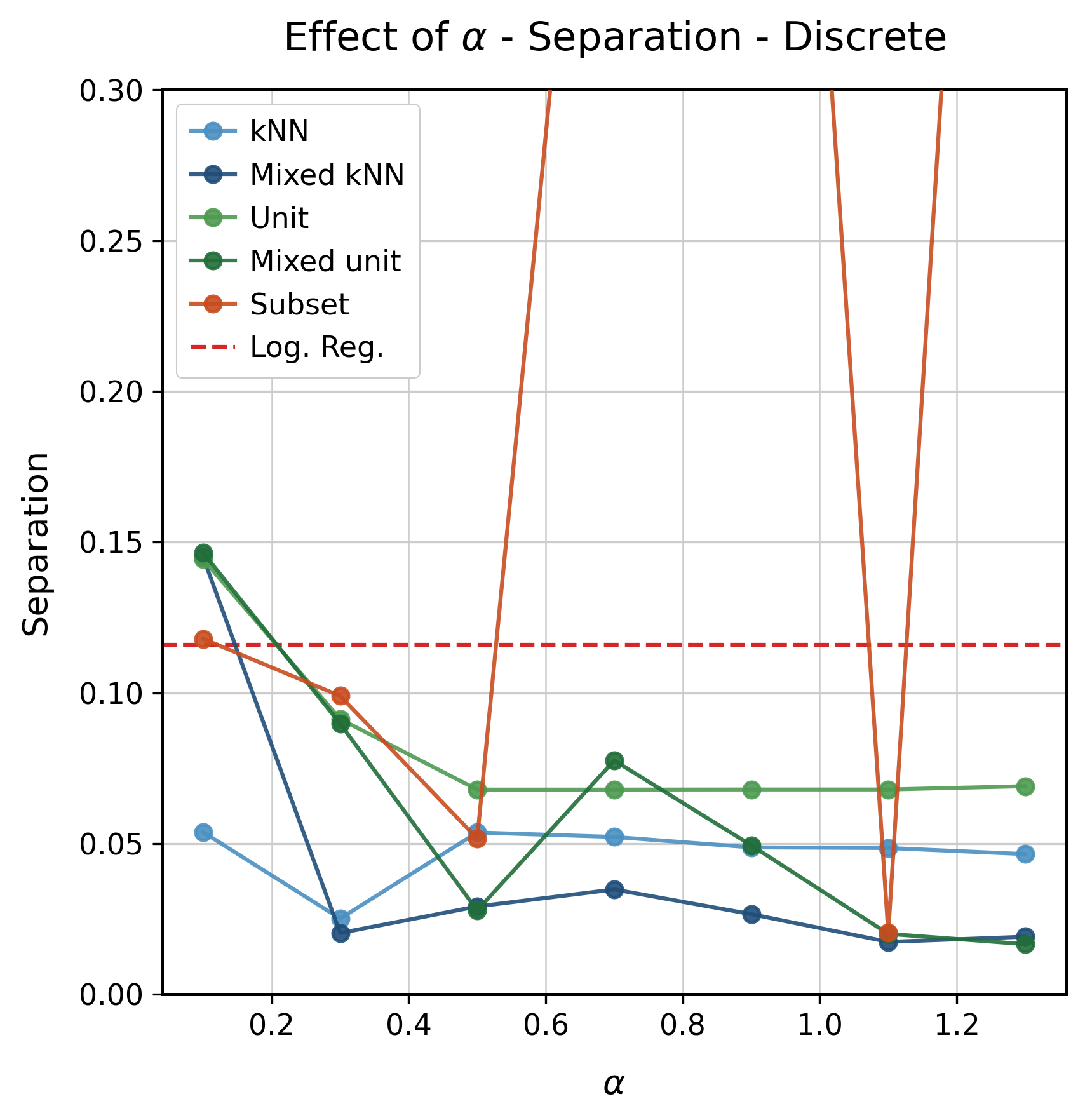}
  \includegraphics[width=.49\textwidth]{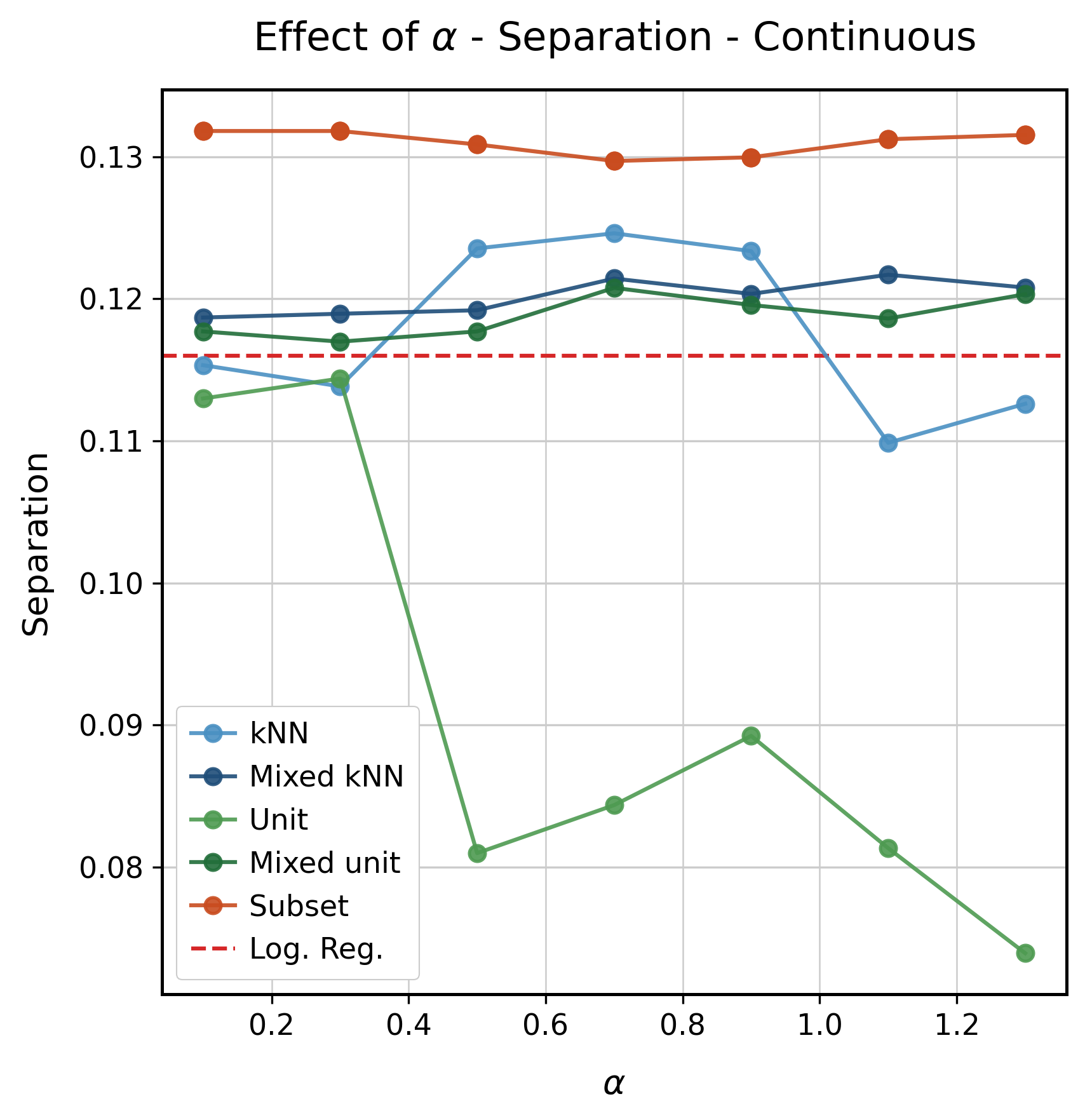}
    \caption{}
\end{figure}

\begin{figure}[!t]
  \centering
  \includegraphics[width=.49\textwidth]{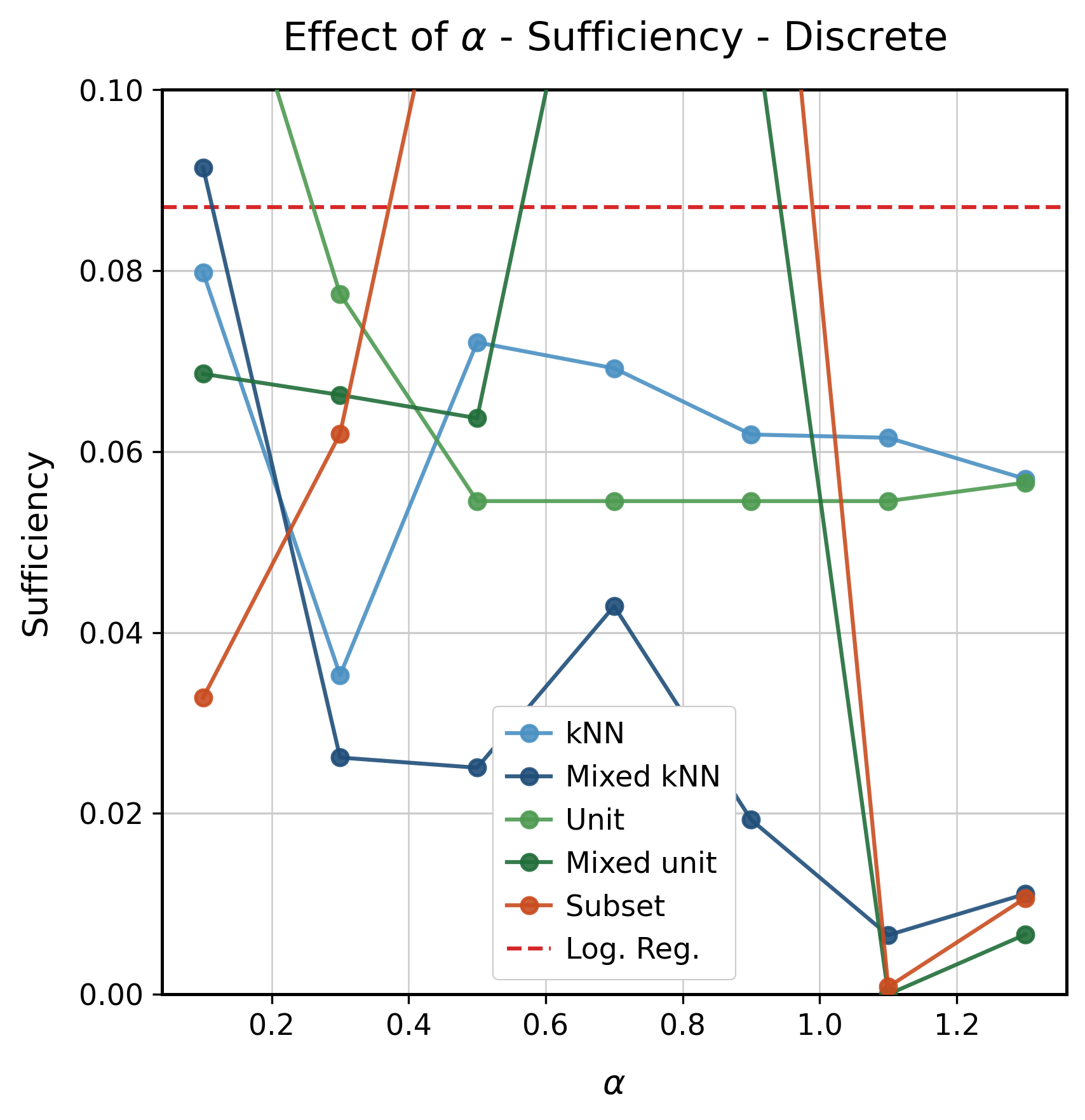}
  \includegraphics[width=.49\textwidth]{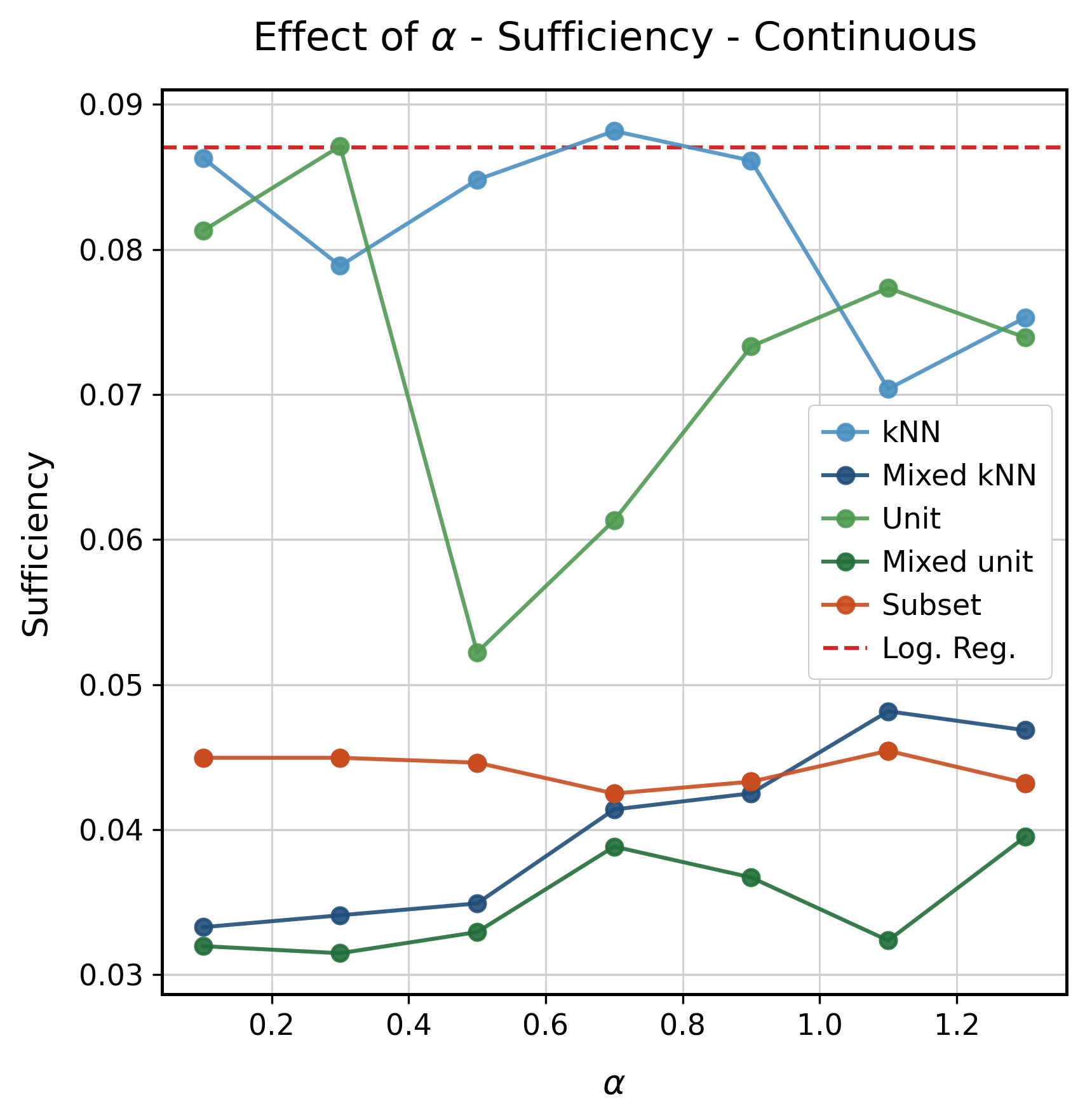}
    \caption{}
\end{figure}

\begin{figure}[!t]
  \centering
  \includegraphics[width=.49\textwidth]{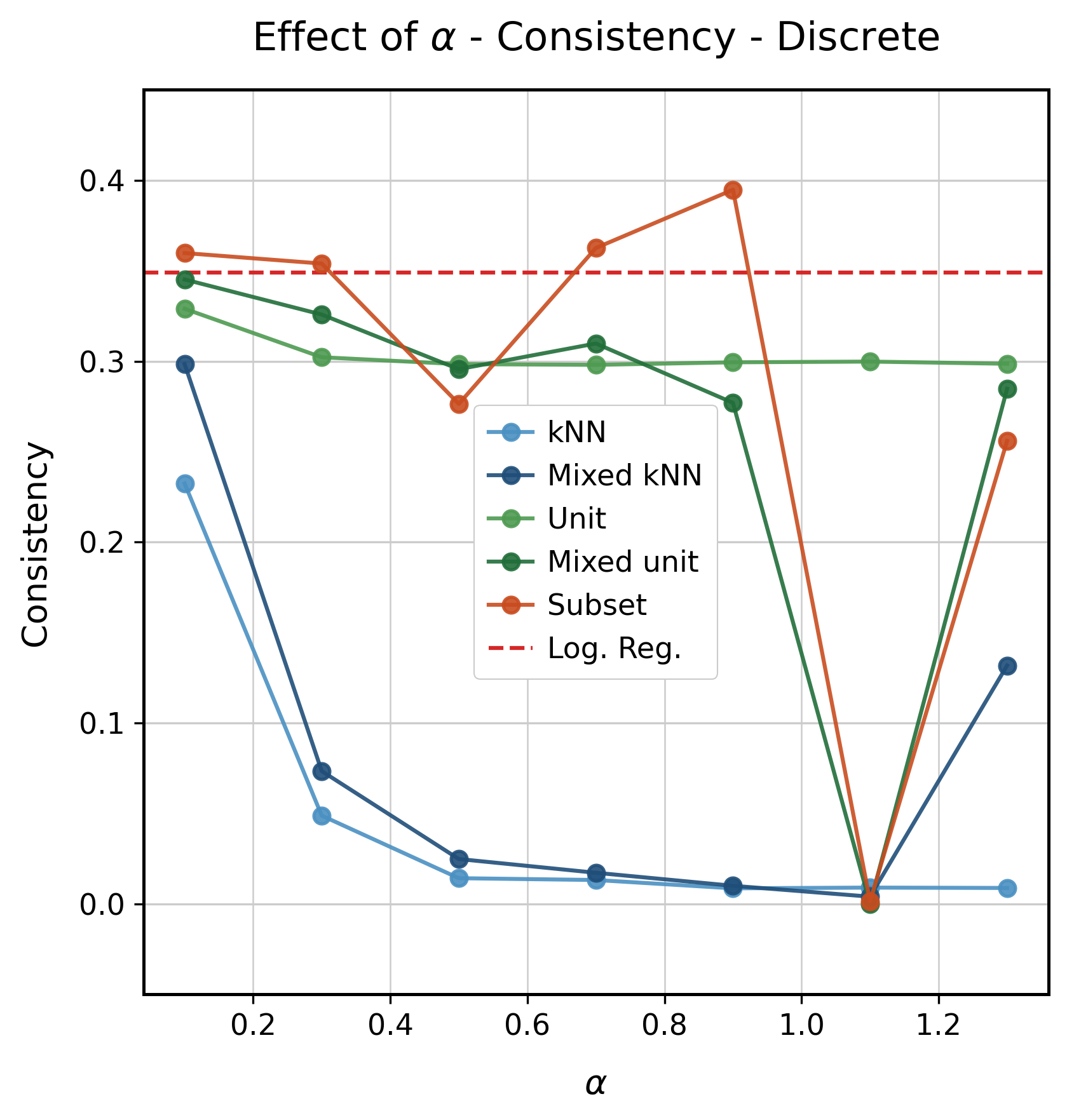}
  \includegraphics[width=.49\textwidth]{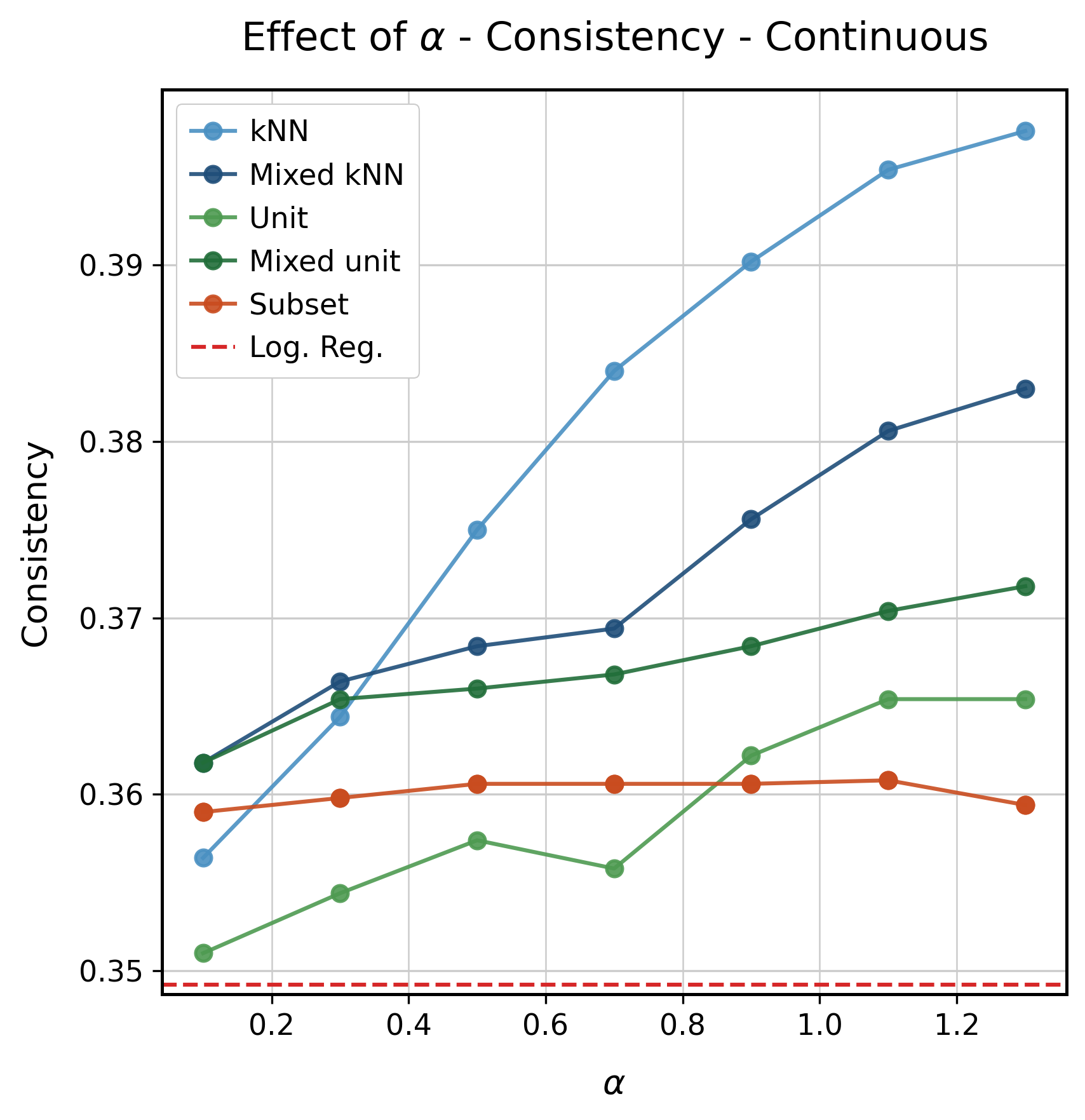}
    \caption{}
\end{figure}

\begin{figure}[!t]
  \centering
  \includegraphics[width=.49\textwidth]{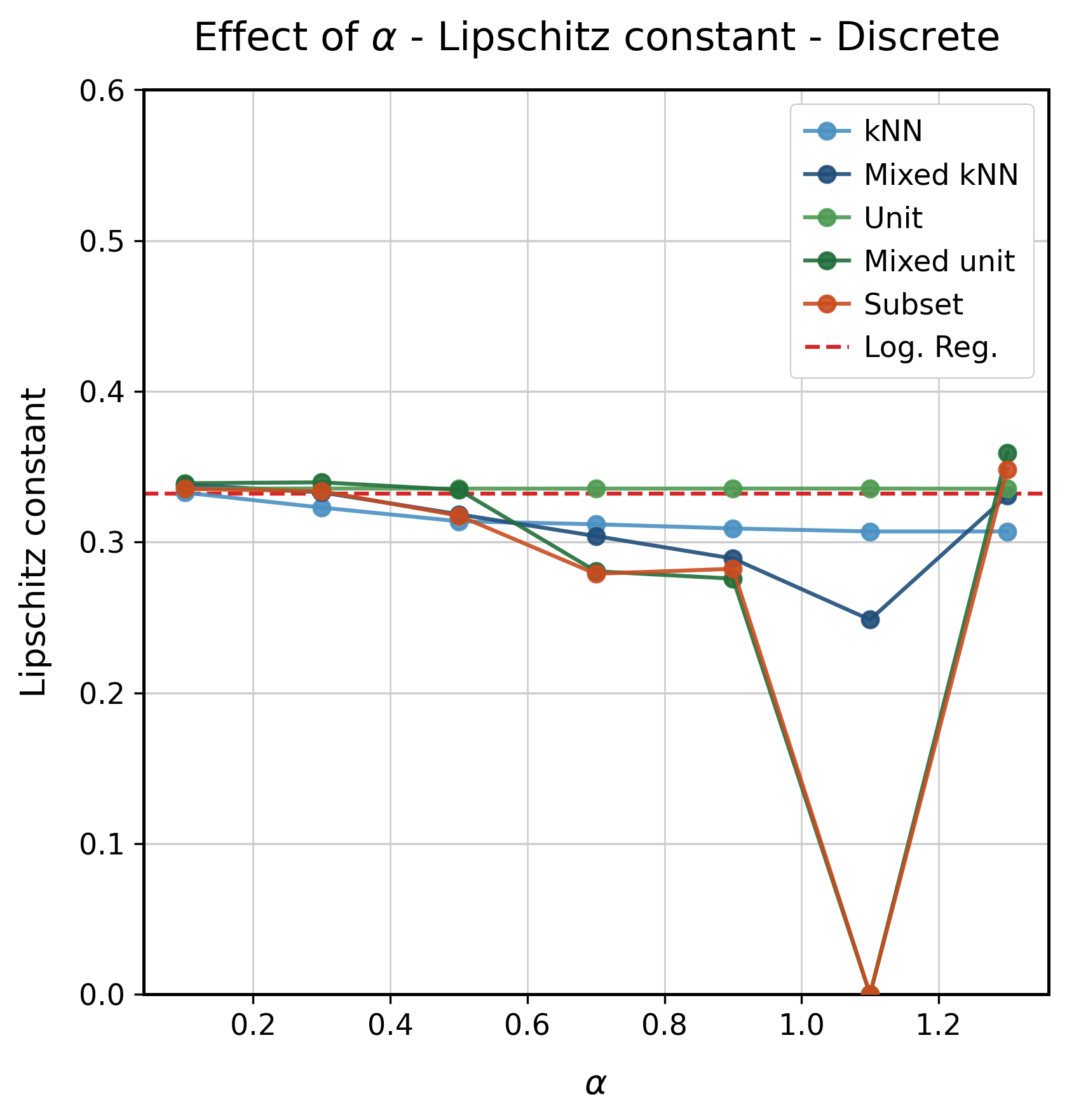}
  \includegraphics[width=.49\textwidth]{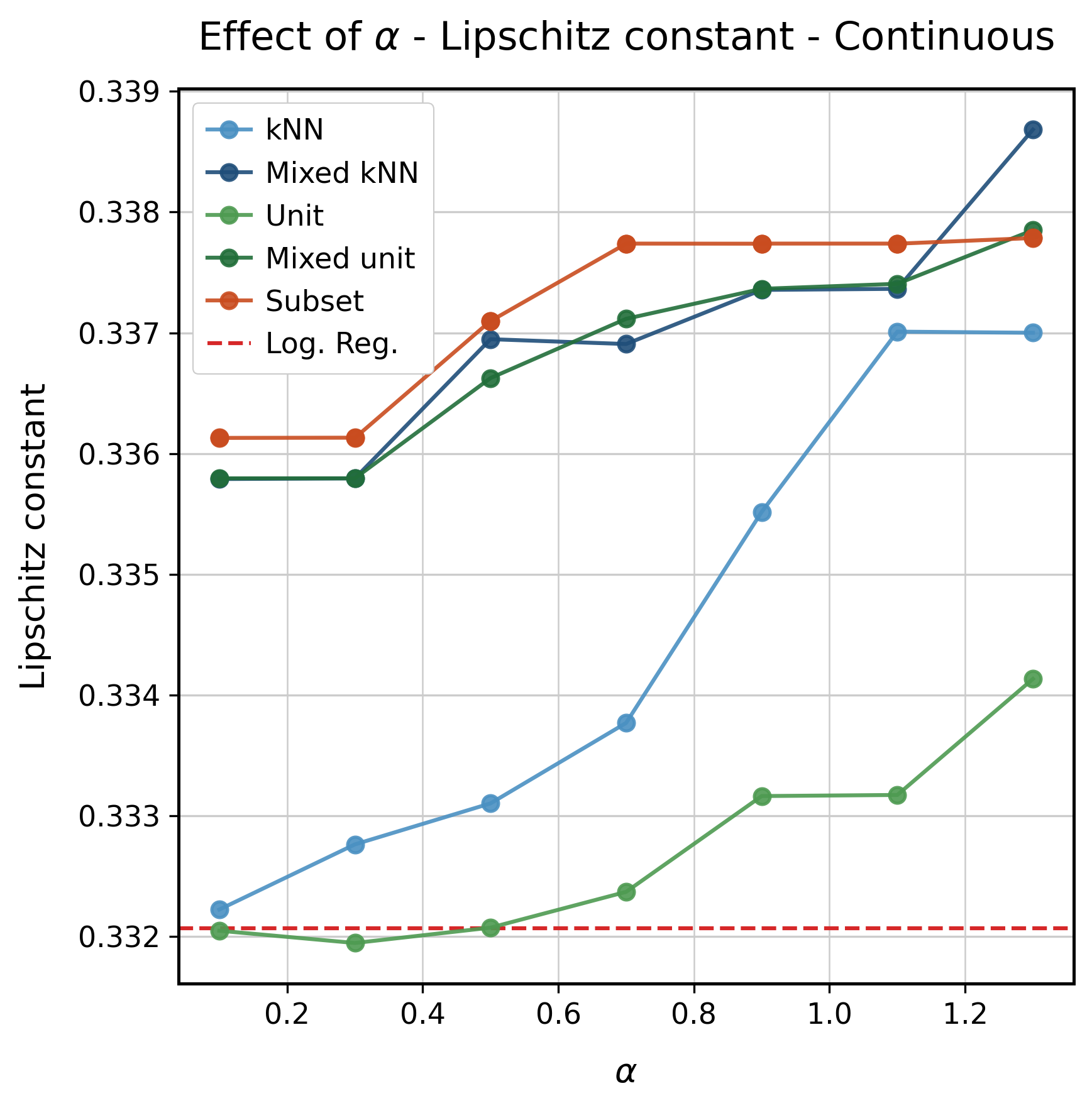}
    \caption{}
\end{figure}

\begin{figure}[!t]
  \centering
  \includegraphics[width=.49\textwidth]{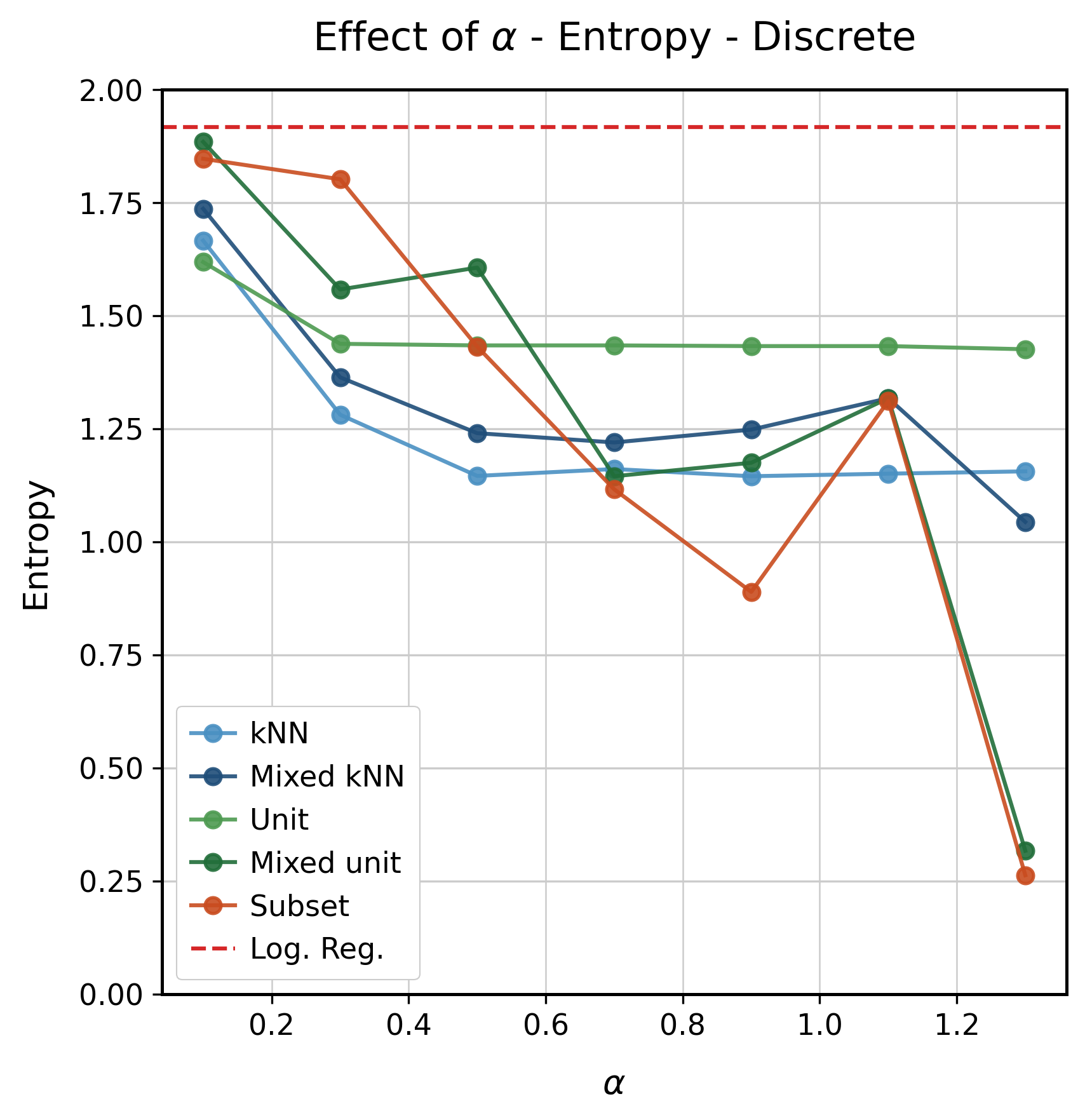}
  \includegraphics[width=.49\textwidth]{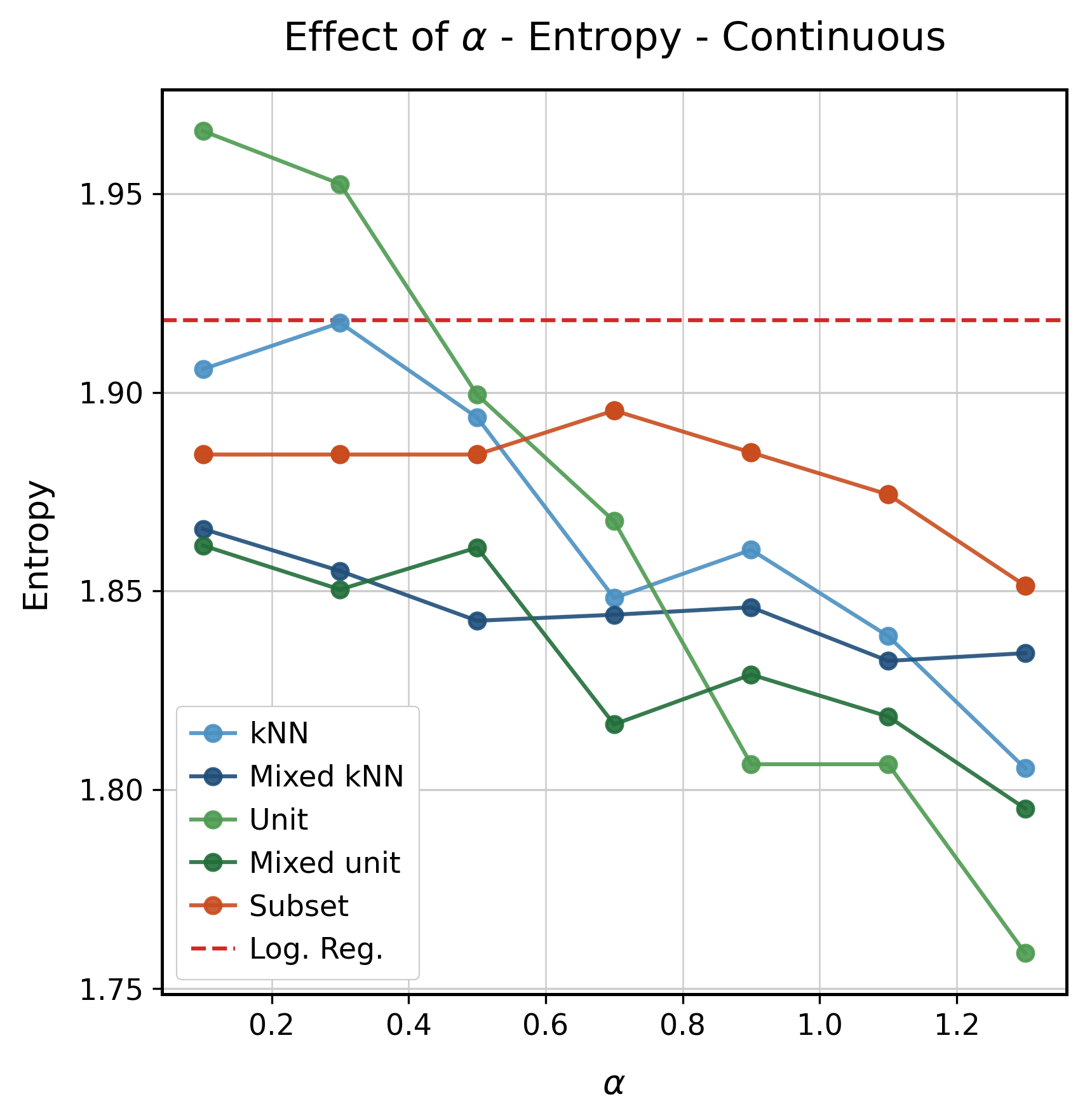}
  \caption{}
    \label{fig:LastCoef}
\end{figure}

The model sensitivity to $\alpha$ is shown in Figures \ref{fig:FirstCoef} to \ref{fig:LastCoef}. Starting with the discrete implementation, it is clear that the models get closer to logistic regression as the diffusion strength tends to zero.y in terms of accuracy and entropy. However, it seems like there are two different outcomes as $\alpha$ becomes bigger than one: Either the model converges as seen in the local topologies or it becomes unstable. This instability is mostly related to the the global topology as a consequence of the high degree of the aggregators \citep{HIGH-DEGREE}, although this issue is somewhat mitigated in the mixed kNN topology. The mixed unit ball graph, however, is also a victim to these unstabilities due to the sparsity of the underlying graph, once again hinting at the relatinship between the topology  of the data and the behavior of the model. This effect and the analysis for all topologies will remain a constant for the rest of the discussion. In any case, this phenomenom conditions the rest of the results, which nonetheless show  that up to a certain threshold accuracy, separation, sufficiency consistency and entropy become lower as $\alpha$ incerases, thus becoming patent the role of this parameter on the fairness-performance trade-off. The kNN graph also exhibits this behavior for independence and, to a lesser degree, in the lipschitz constant. on the other hand, changing $\alpha$ in continuous models results in a more chaotic pattern which is not explained by the theory. For this reason we encourage the use of the discrete implementation.

\begin{figure}[!t]
  \centering
  \includegraphics[width=.49\textwidth]{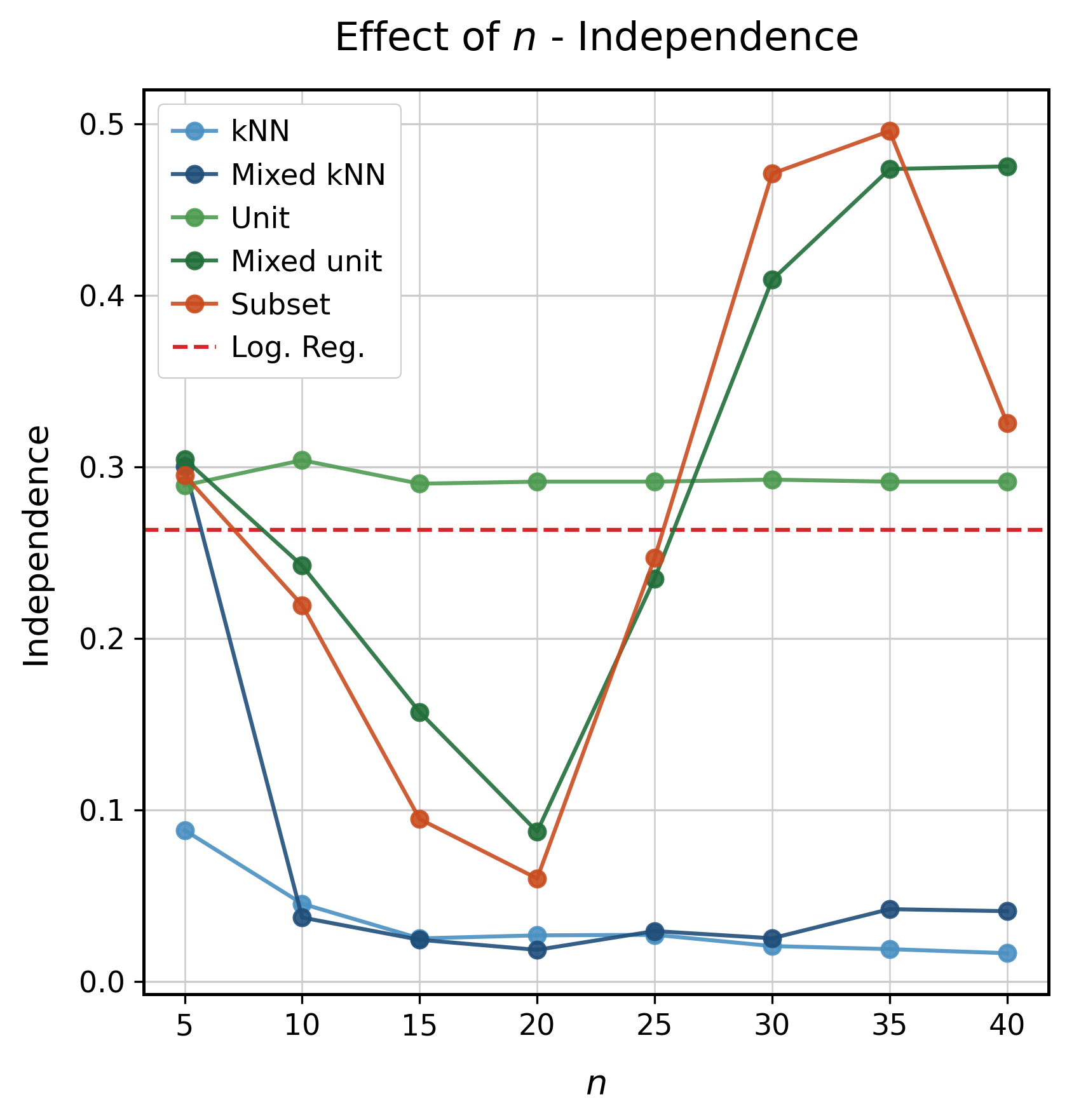}
  \includegraphics[width=.49\textwidth]{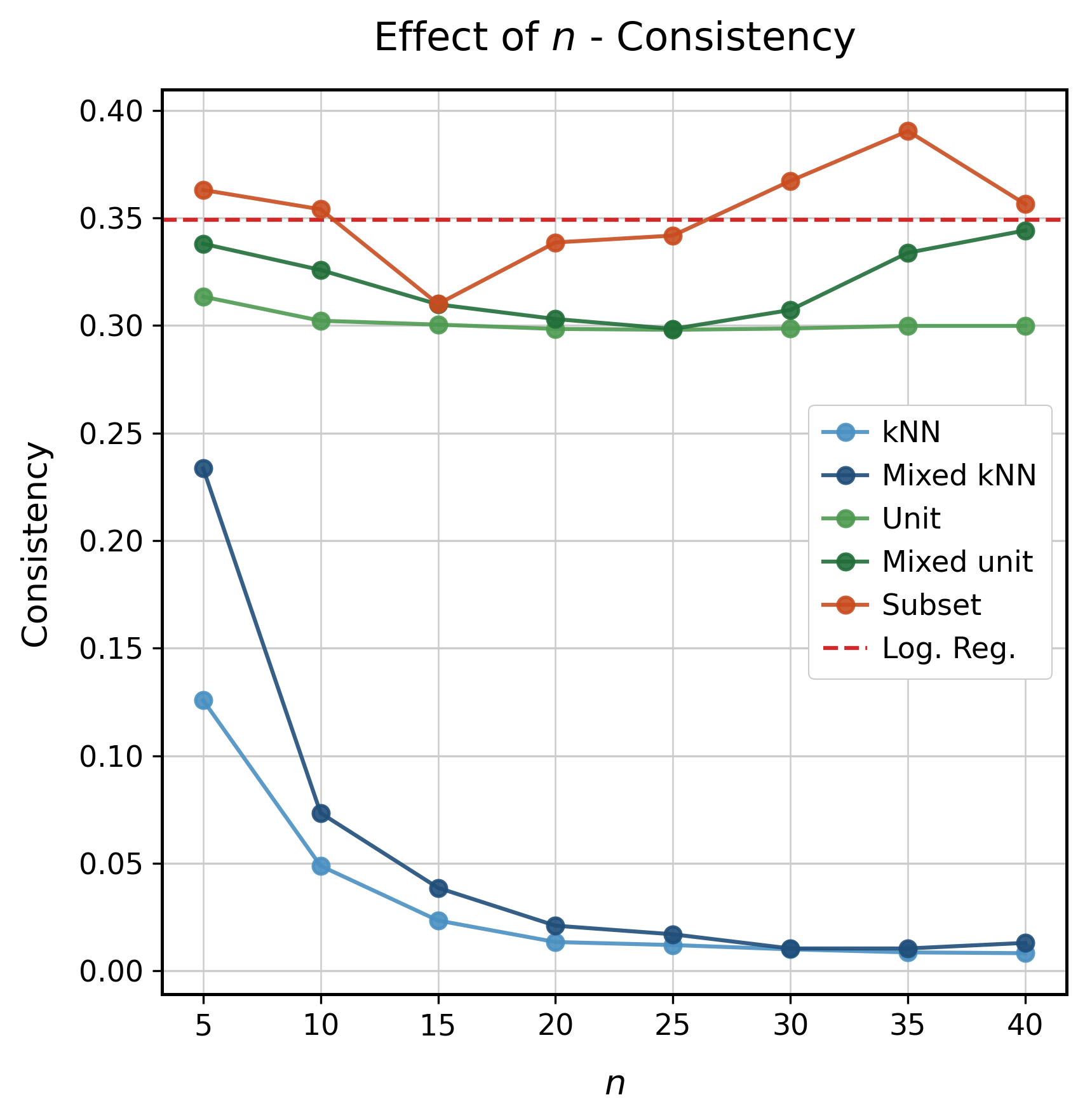}
    \caption{}
  \label{fig:FirstLayers}
\end{figure}

\begin{figure}[!t]
  \centering
  \includegraphics[width=.49\textwidth]{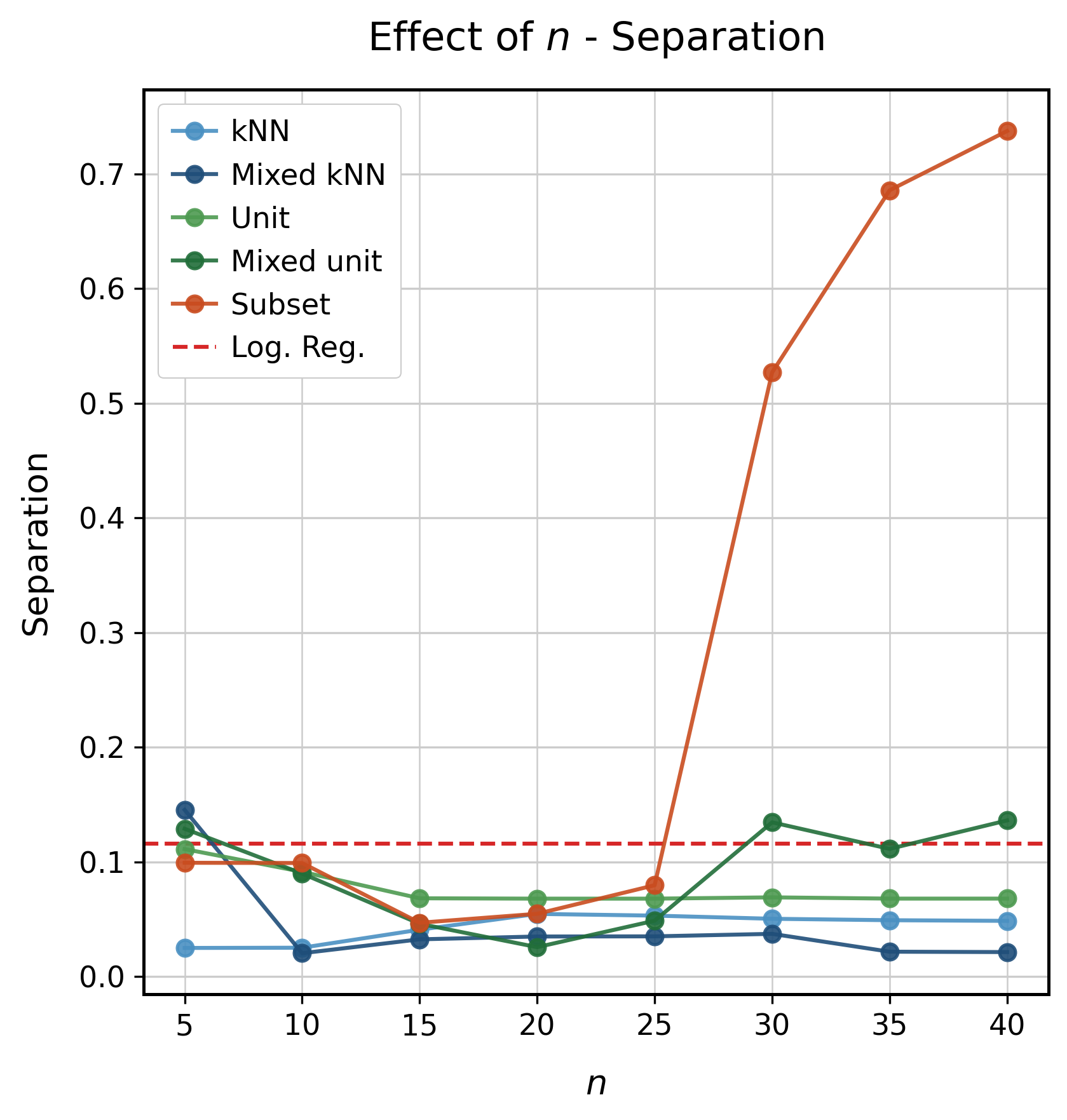}
  \includegraphics[width=.49\textwidth]{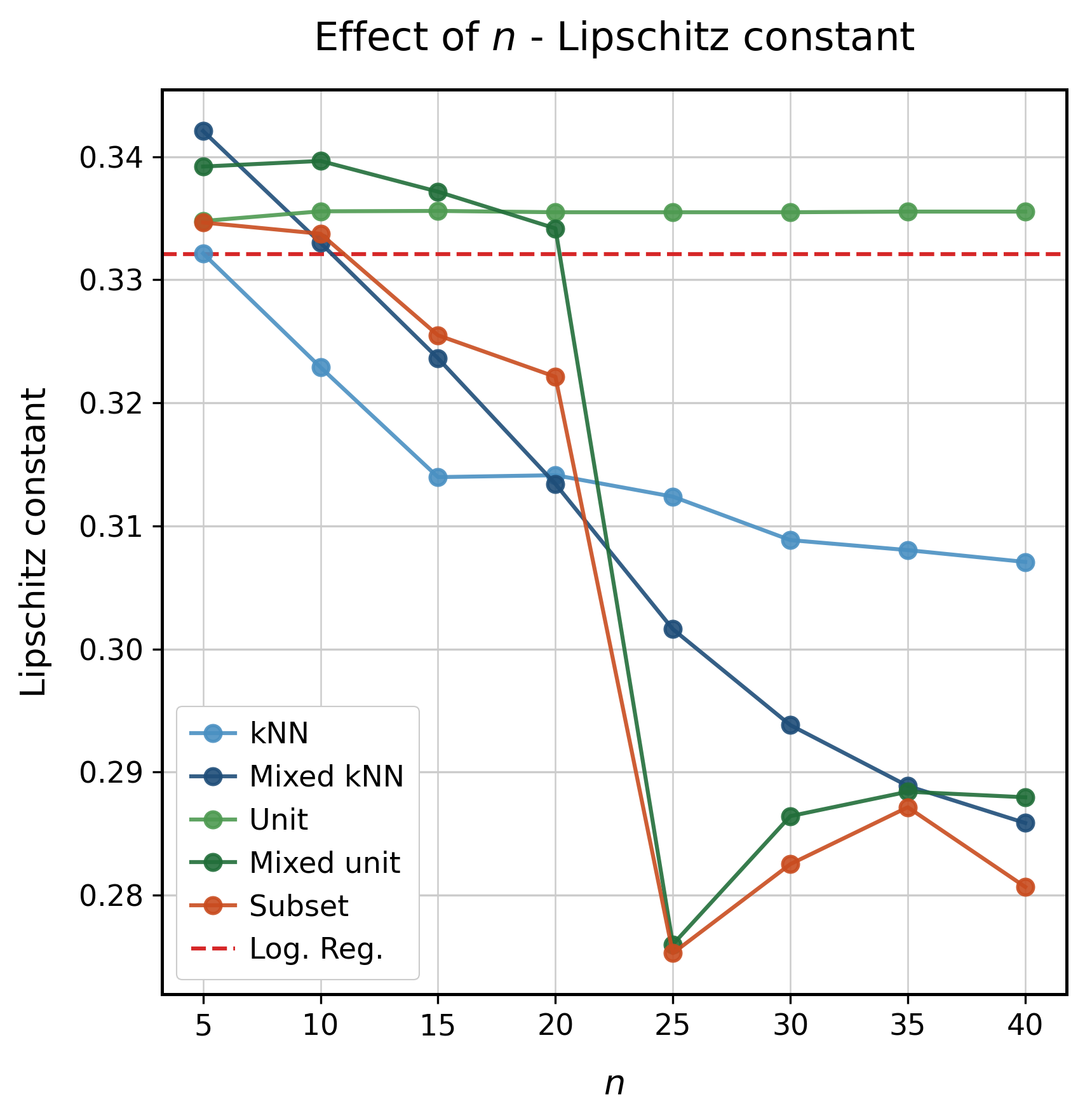}
    \caption{}
\end{figure}

\begin{figure}[!t]
  \centering
  \includegraphics[width=.49\textwidth]{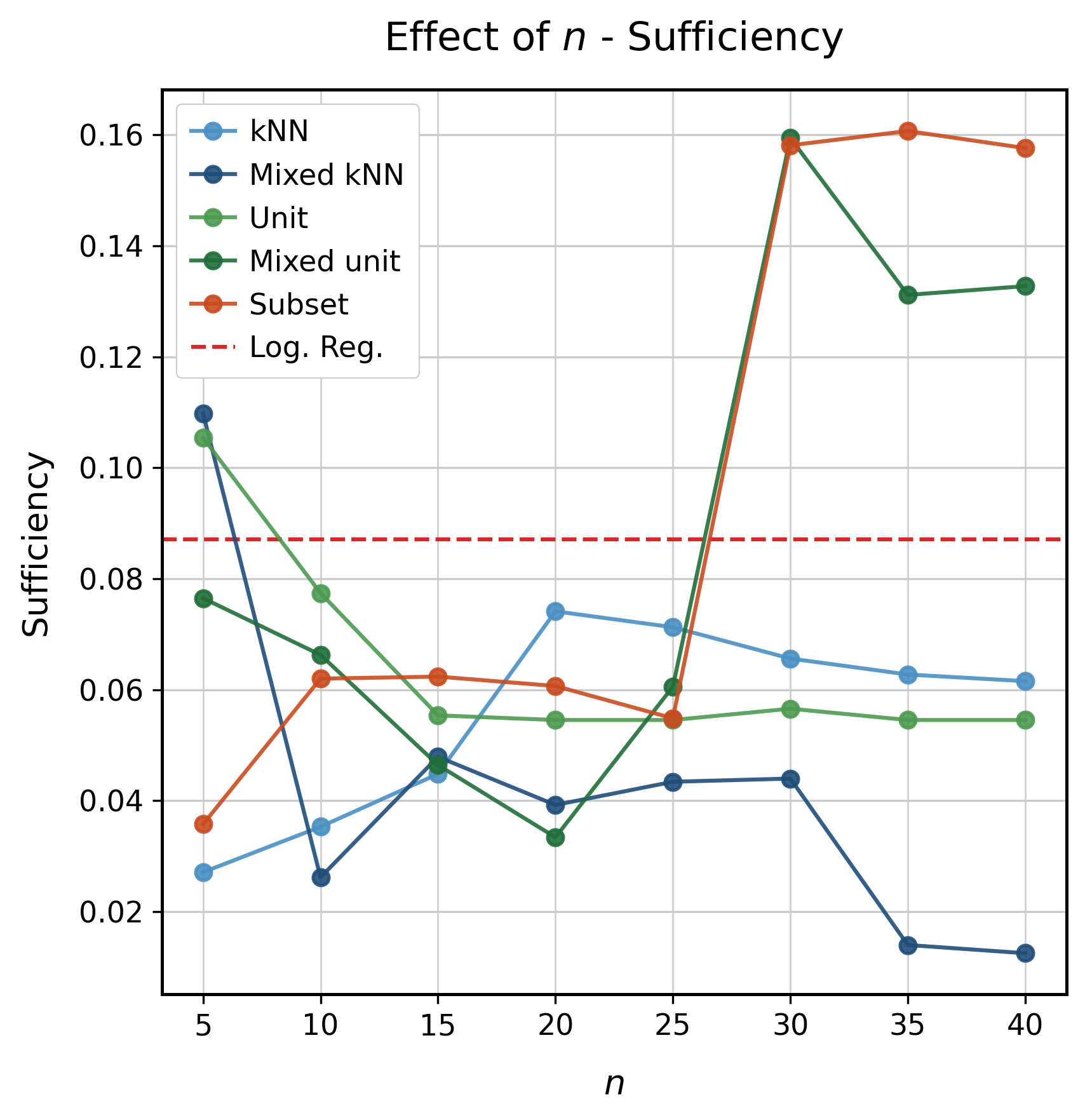}
  \includegraphics[width=.49\textwidth]{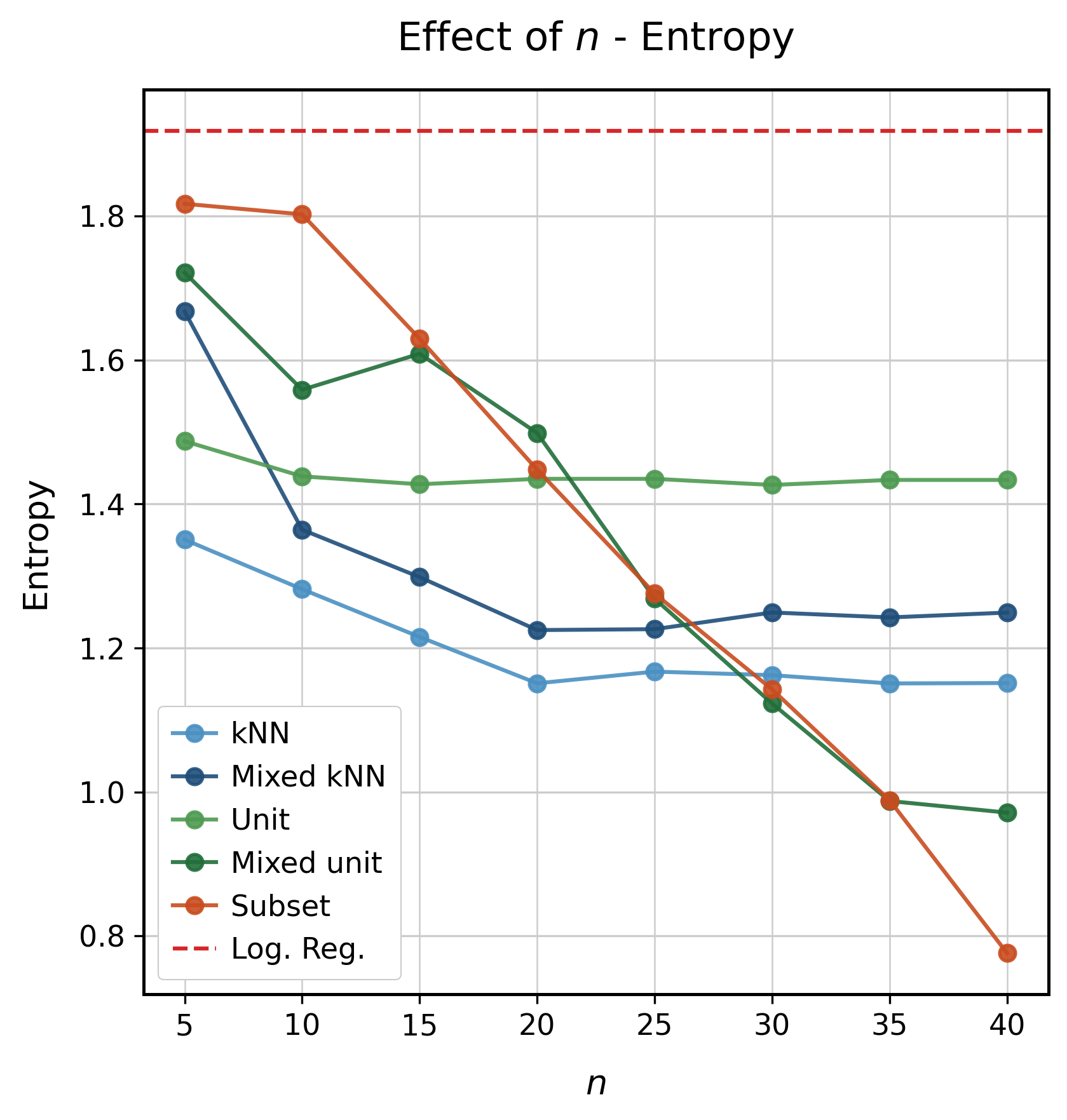}
    \caption{}
\end{figure}

\begin{figure}[!t]
  \centering
  \includegraphics[width=.49\textwidth]{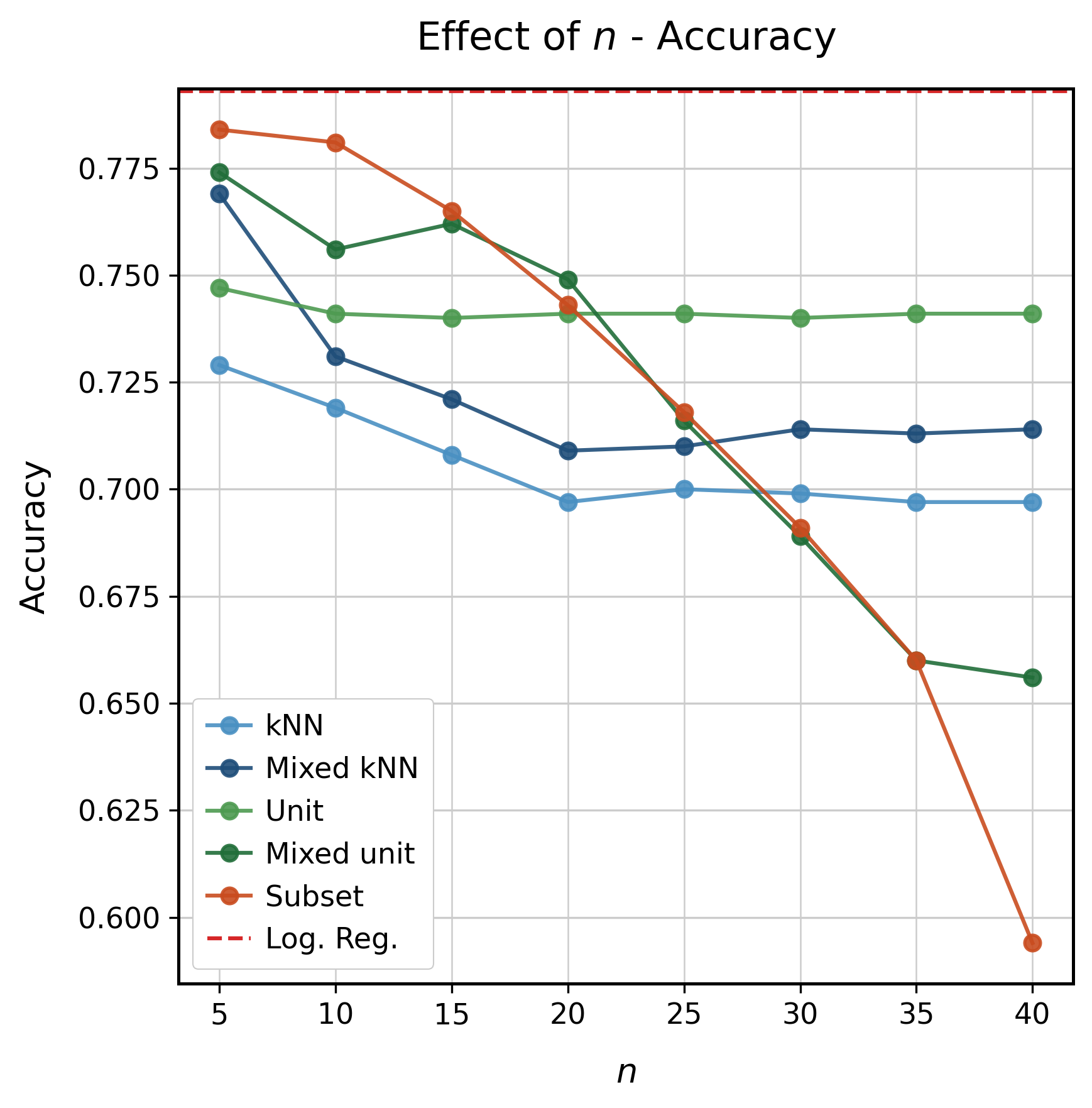}
    \caption{}
  \label{fig:LastLayers}
\end{figure}

\begin{figure}[!t]
  \centering
  \includegraphics[width=.49\textwidth]{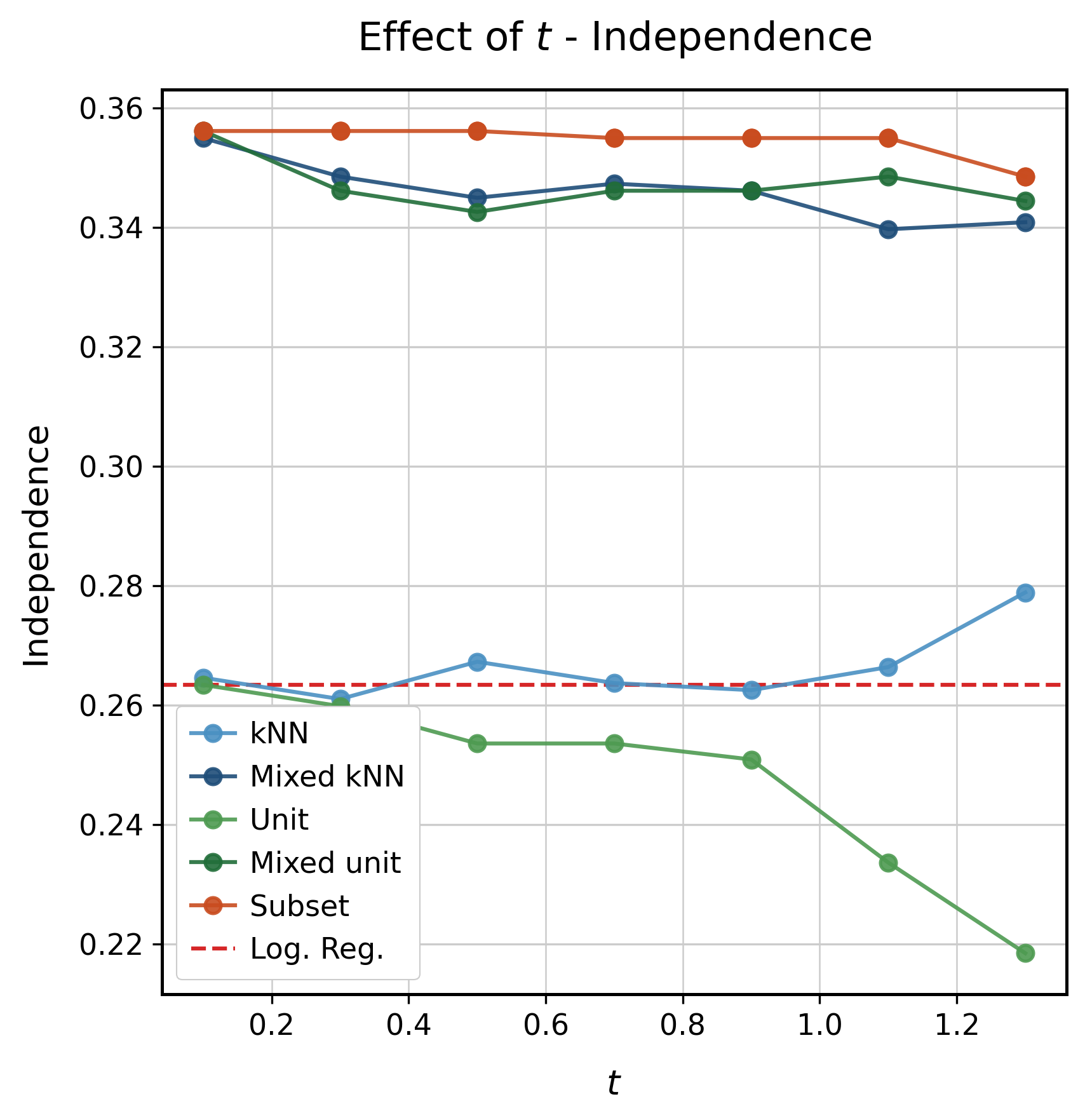}
  \includegraphics[width=.49\textwidth]{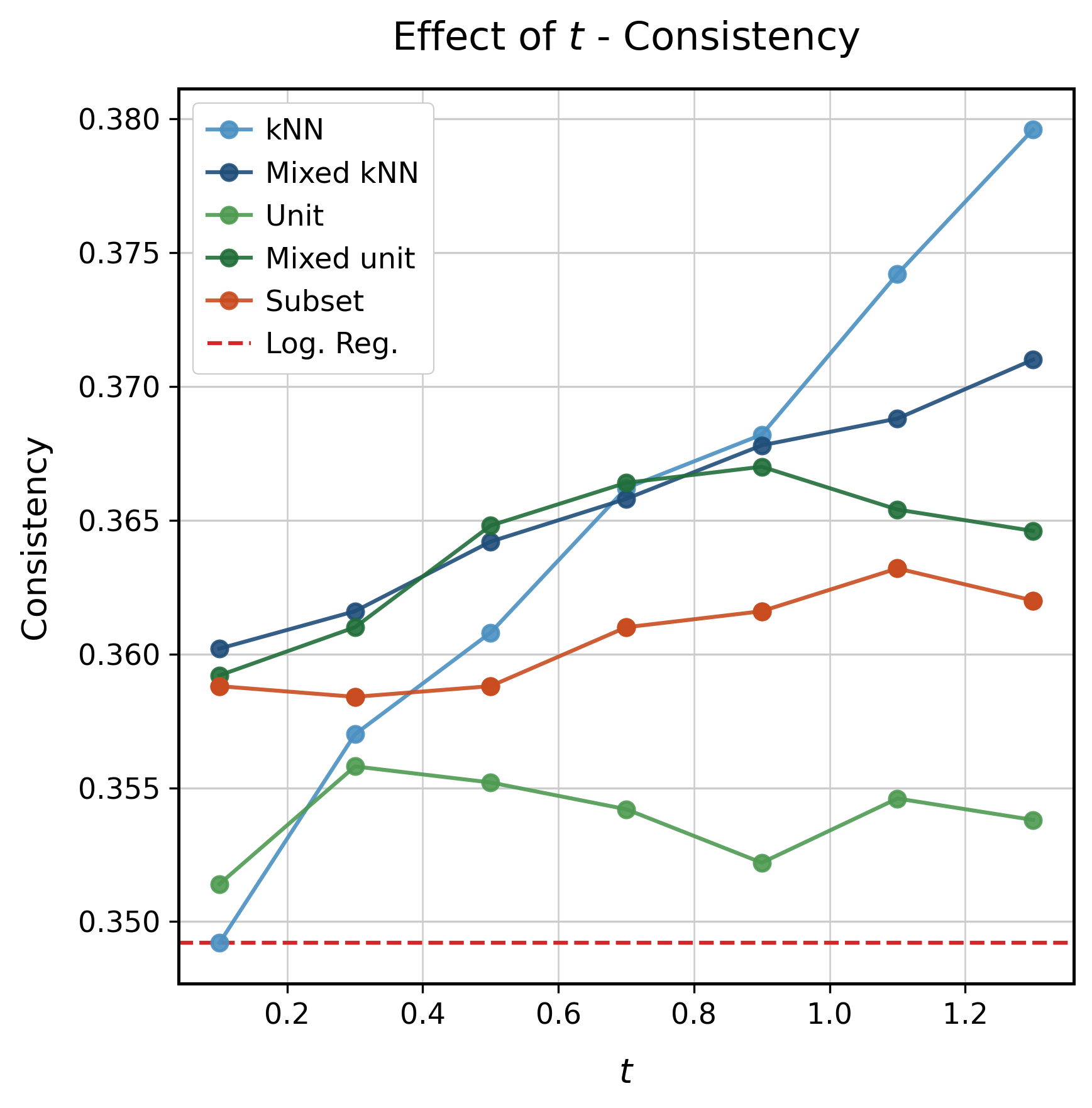}
    \caption{}
  \label{fig:FirstTime}
\end{figure}

\begin{figure}[!t]
  \centering
  \includegraphics[width=.49\textwidth]{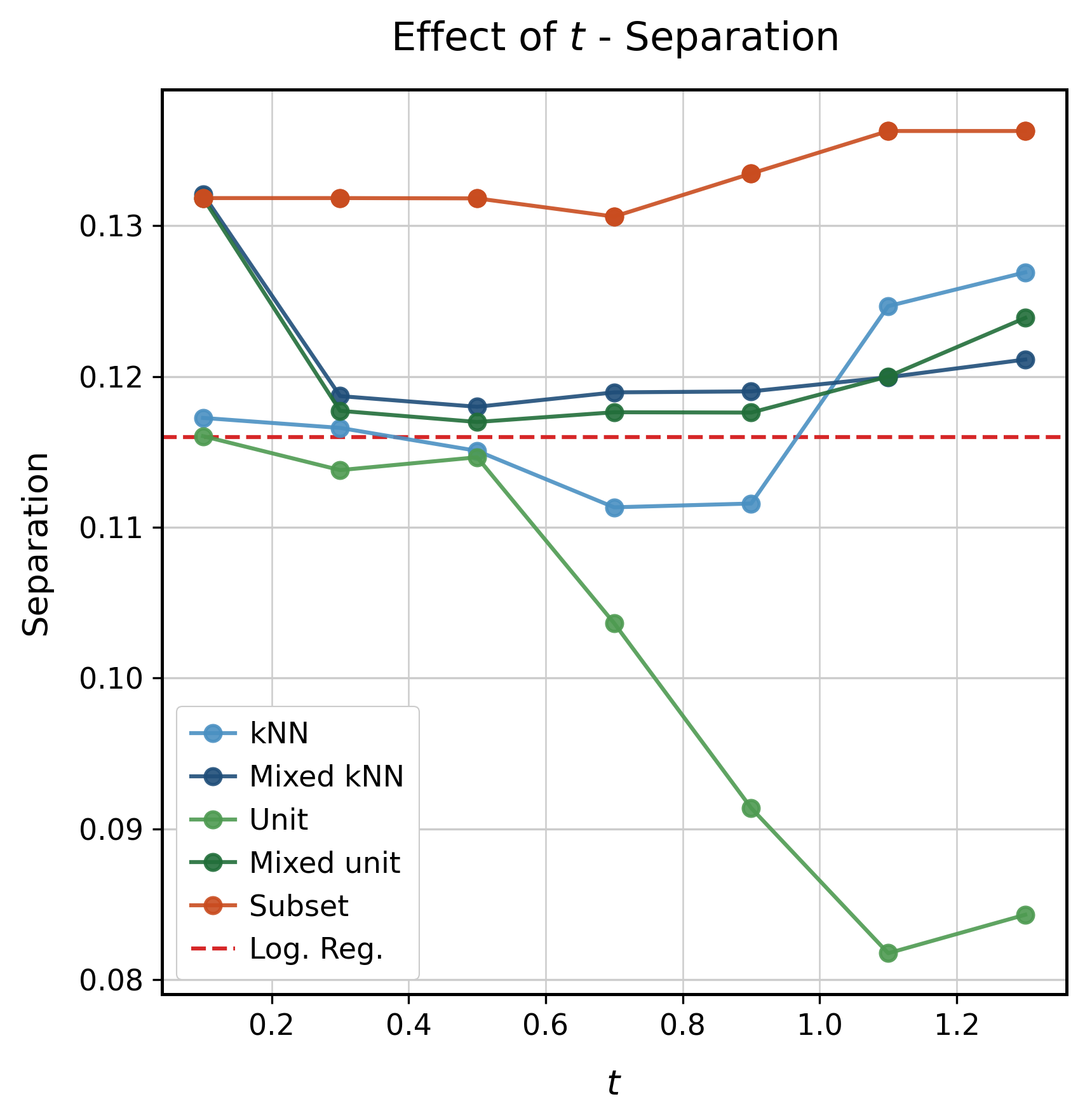}
  \includegraphics[width=.49\textwidth]{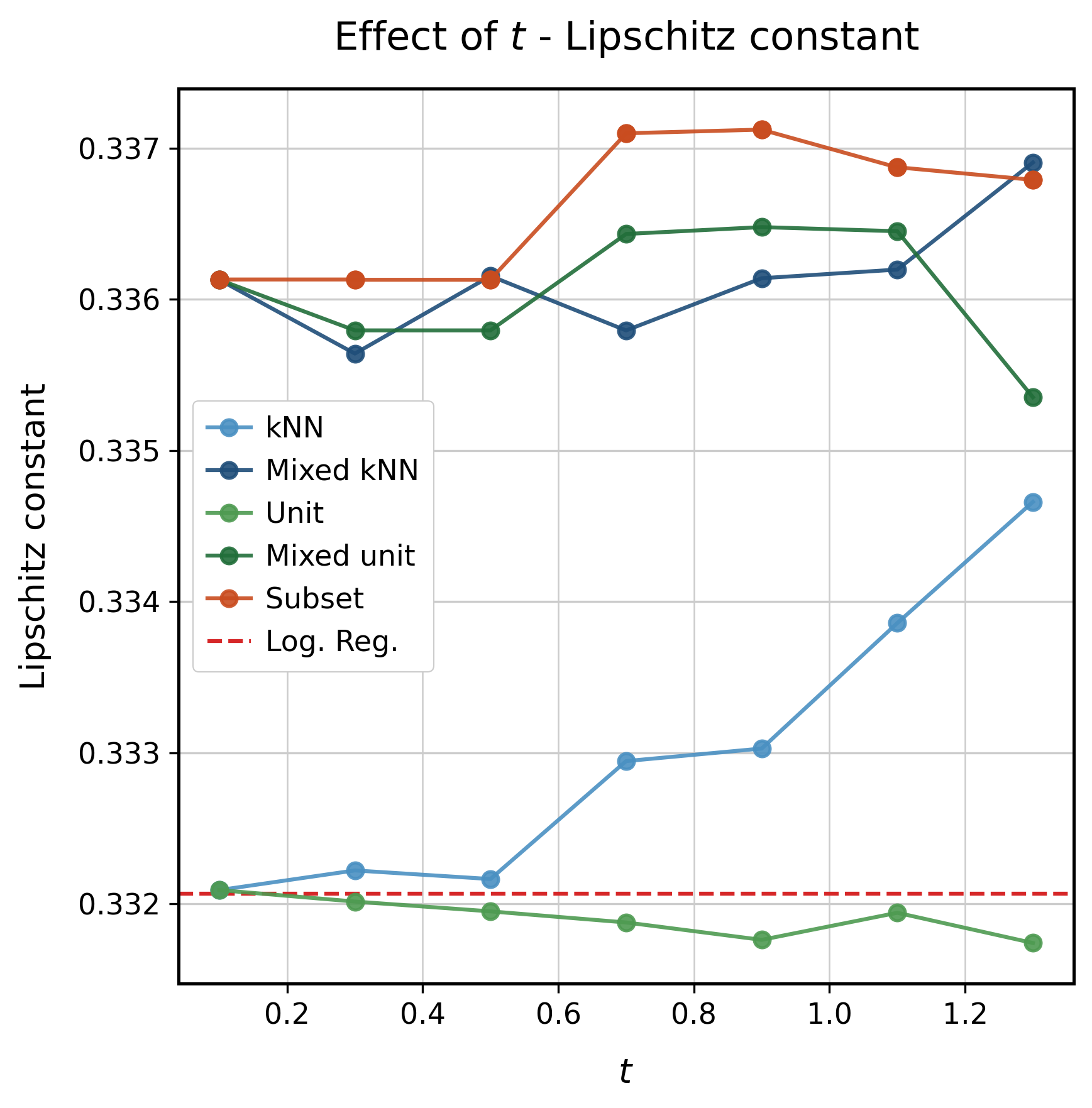}
    \caption{}
\end{figure}

\begin{figure}[!t]
  \centering
  \includegraphics[width=.49\textwidth]{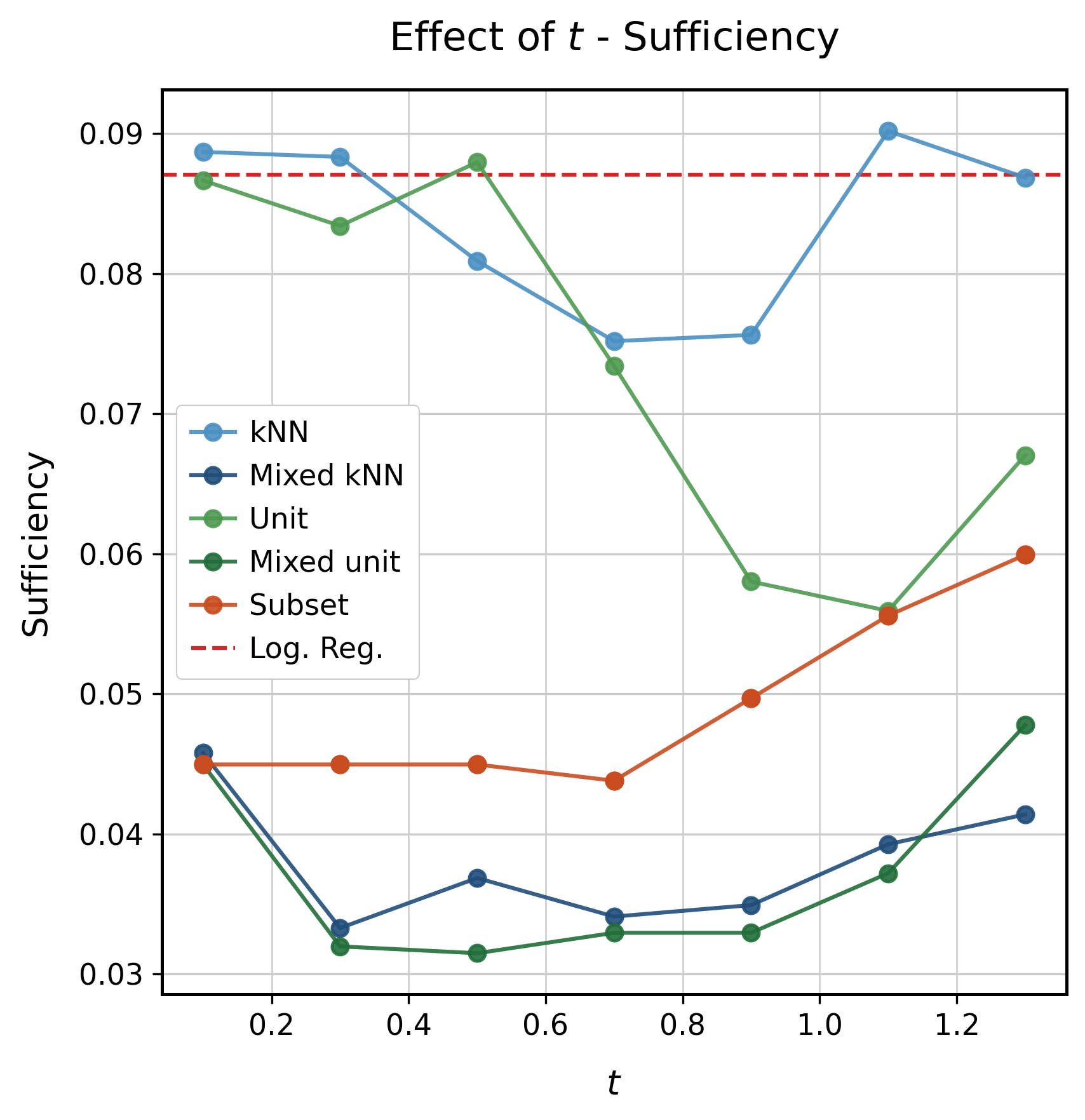}
  \includegraphics[width=.49\textwidth]{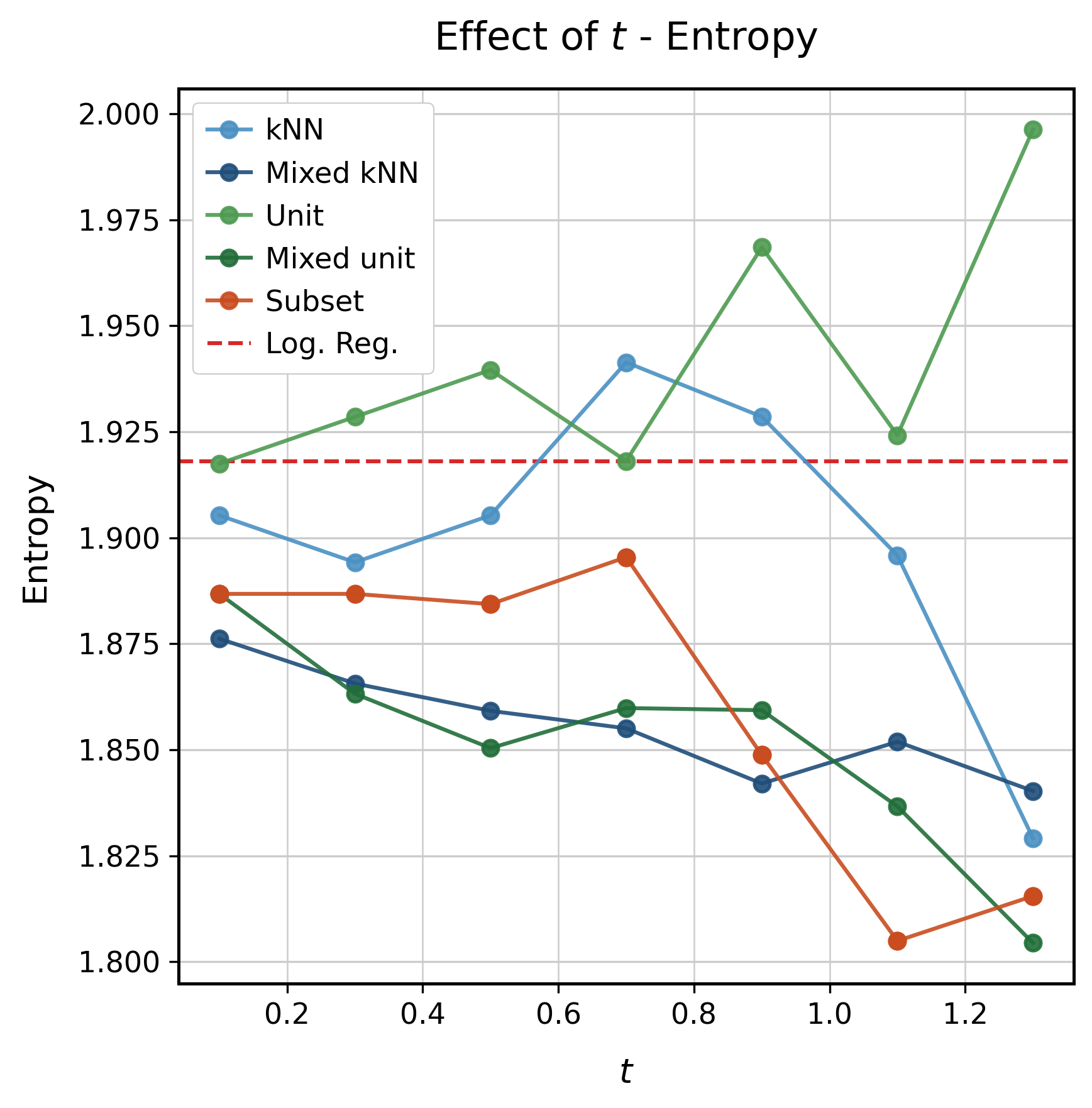}
    \caption{}
\end{figure}

\begin{figure}[!t]
  \centering
  \includegraphics[width=.49\textwidth]{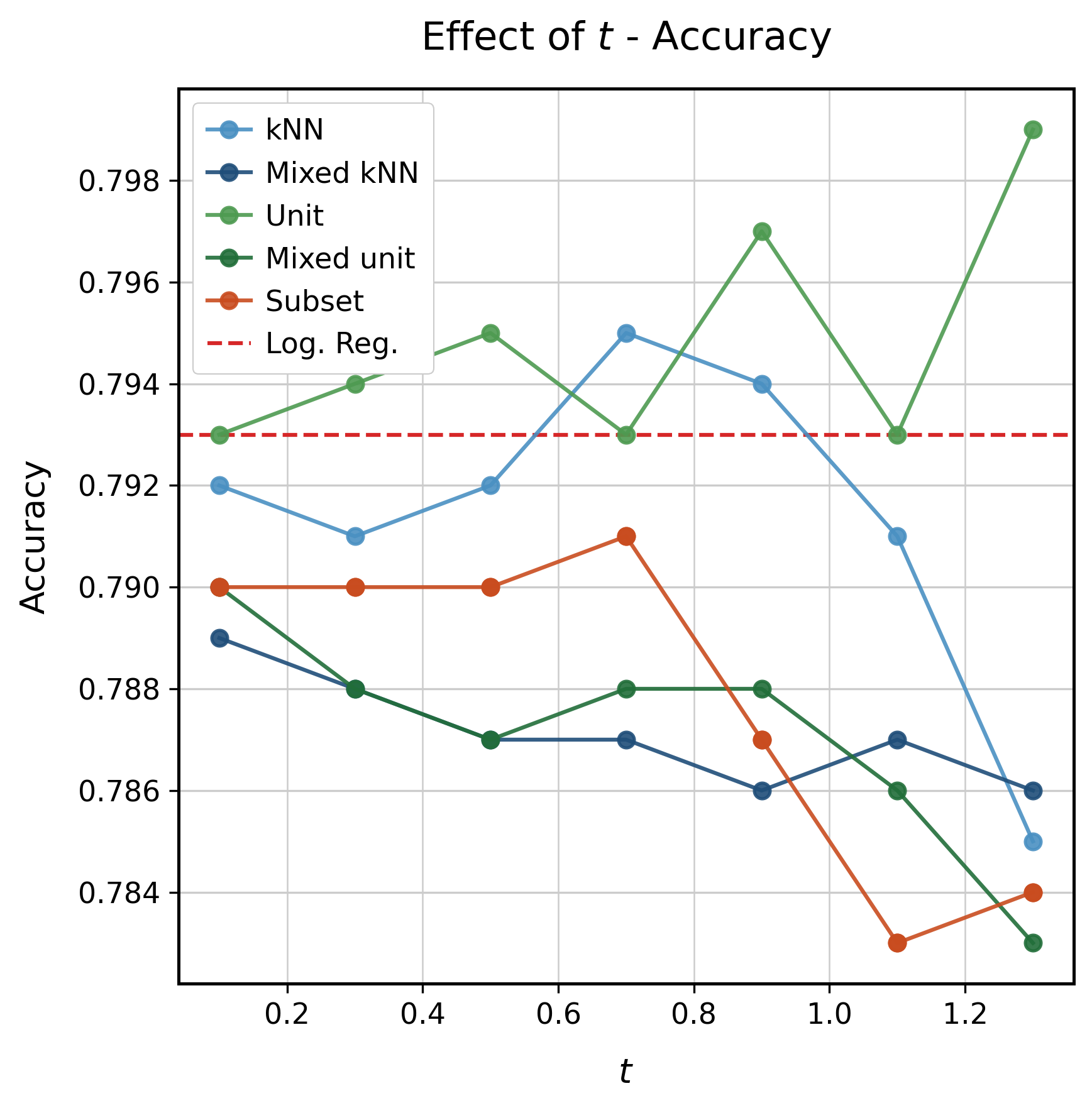}
    \caption{}
  \label{fig:LastTime}
\end{figure}

Moving on to the analysis of the length of the diffusion process, Figures \ref{fig:FirstLayers} to \ref{fig:LastLayers} show the more consistent effect of the number of layers, which, similarly to $\alpha$, exacerbates the fairness-accuracy trade-offs. The higher the number of layers, the lower most metrics, the only exception that does not involve the global topology being found in sufficiency, although this is not surprising due to the shortcomings of this metric in actually measuring fairness \citep{EJORSurvey}. Once again, the subset configuration seems to become unstable for high values of $n$, while the local graphs converge to the trivial solution. However, using low values of $n$ leads to enhanced fairness while barely compromising performance. Therefore, in order to enforce fairness constraints we recommend tweaking $n$ rather than $\alpha$ for discrete models. Integration time, whose effects are shown in Figures \ref{fig:FirstTime} to \ref{fig:LastTime}, shows some deviations with respect to the expected behavior. Namely, while accuracy is degraded as $t$ becomes higher, fairness metrics barely improve if at all. Moreover, individual fairness metrics are compromised as the time is increased with the exception of entropy.\\
Finally, the previous analysis shows that, even in the presence of instabilities, continuous models are more unpredictable, with behaviors deviating from those dictated from the theoretical framework and, although unreported, their training times exceed those of their discrete counterparts. All in all, these findings make the use of discrete implementations more attractive.

\subsubsection{Effect of $k$ and $\delta$ on local topologies}
The focus is now switched to the parameters that define the local topologies, those being the number of neighbors in the case of the kNN graph and the radius in the unit ball topology. Results are shown in Figures \ref{fig:FirstK} to \ref{fig:LastDelta}, which analyze the effect of each parameter on both pure and mixed configurations. Importantly, the parameter $\delta$ that we used is not the distance per se, but a quantile of the distribution of distances in the dataset. Concretely, the distance we are using is the quantile $Q(\delta/N; \{d(x_i, x_j) \mid x_i, x_j \in X\})$, which means that we are considering approximately $N \delta$ edges. This facilitates comparison between different datasets.\\

\begin{figure}[!t]
  \centering
  \includegraphics[width=.49\textwidth]{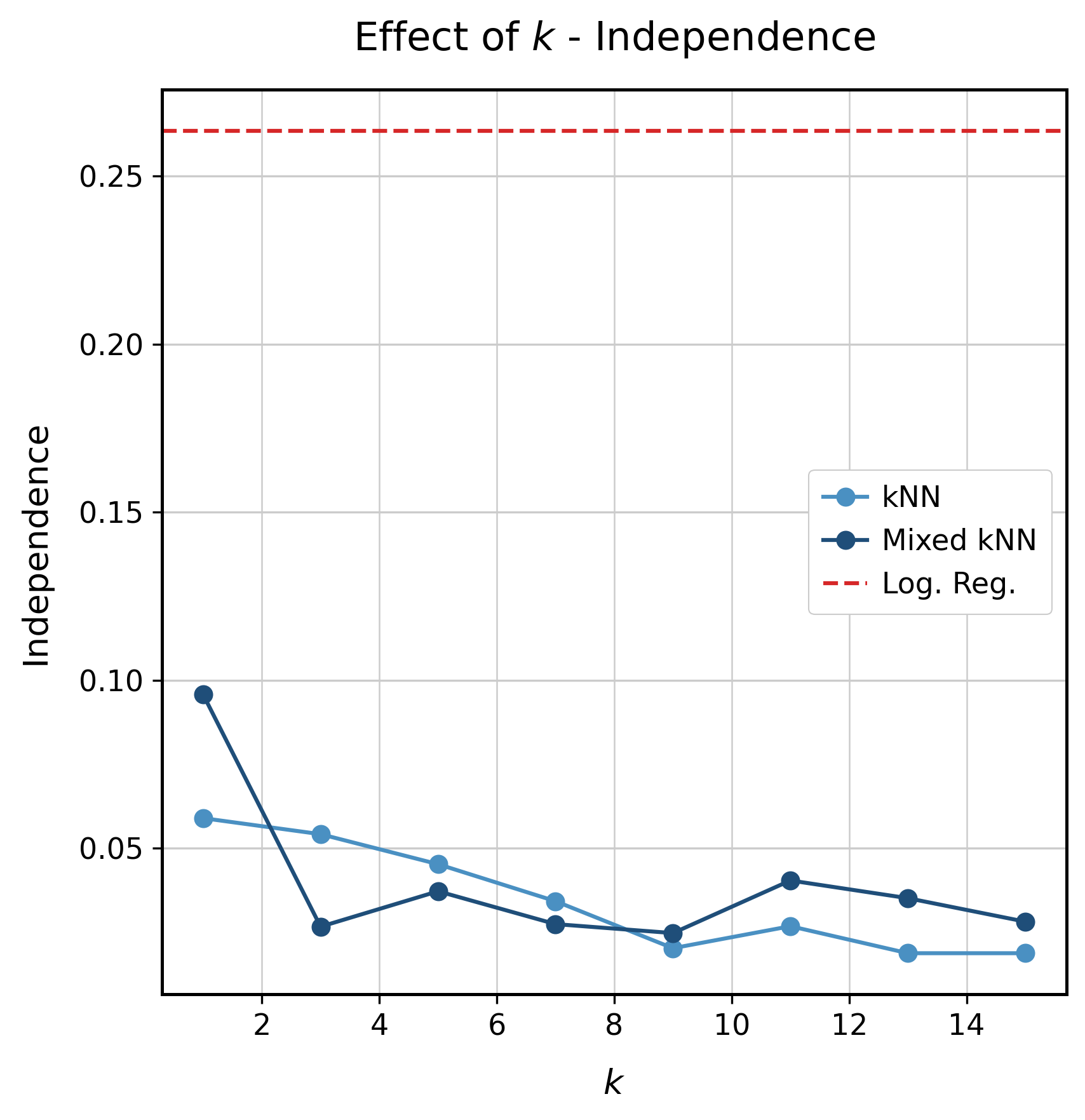}
  \includegraphics[width=.49\textwidth]{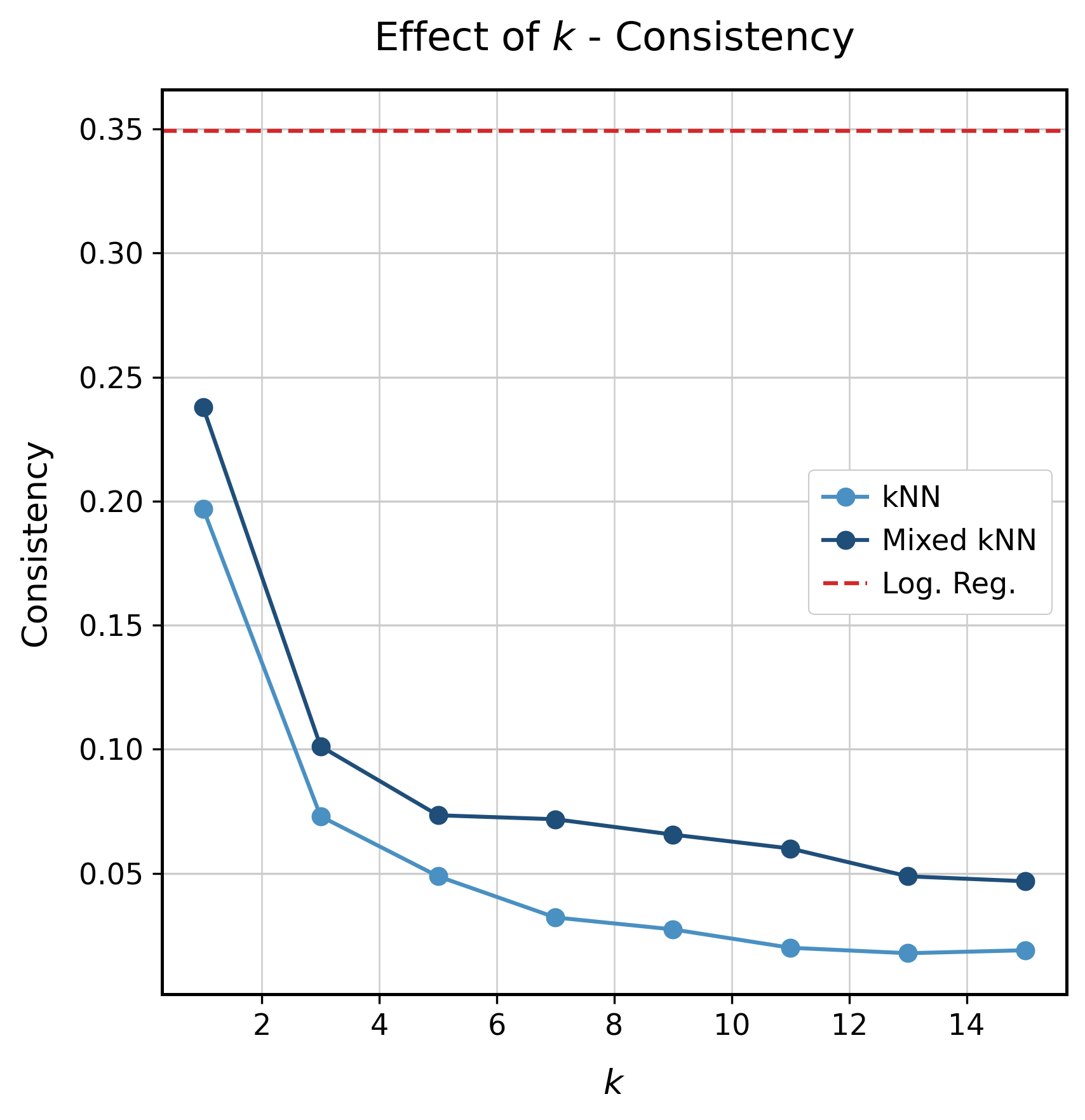}
    \caption{}
  \label{fig:FirstK}
\end{figure}

\begin{figure}[!t]
  \centering
  \includegraphics[width=.49\textwidth]{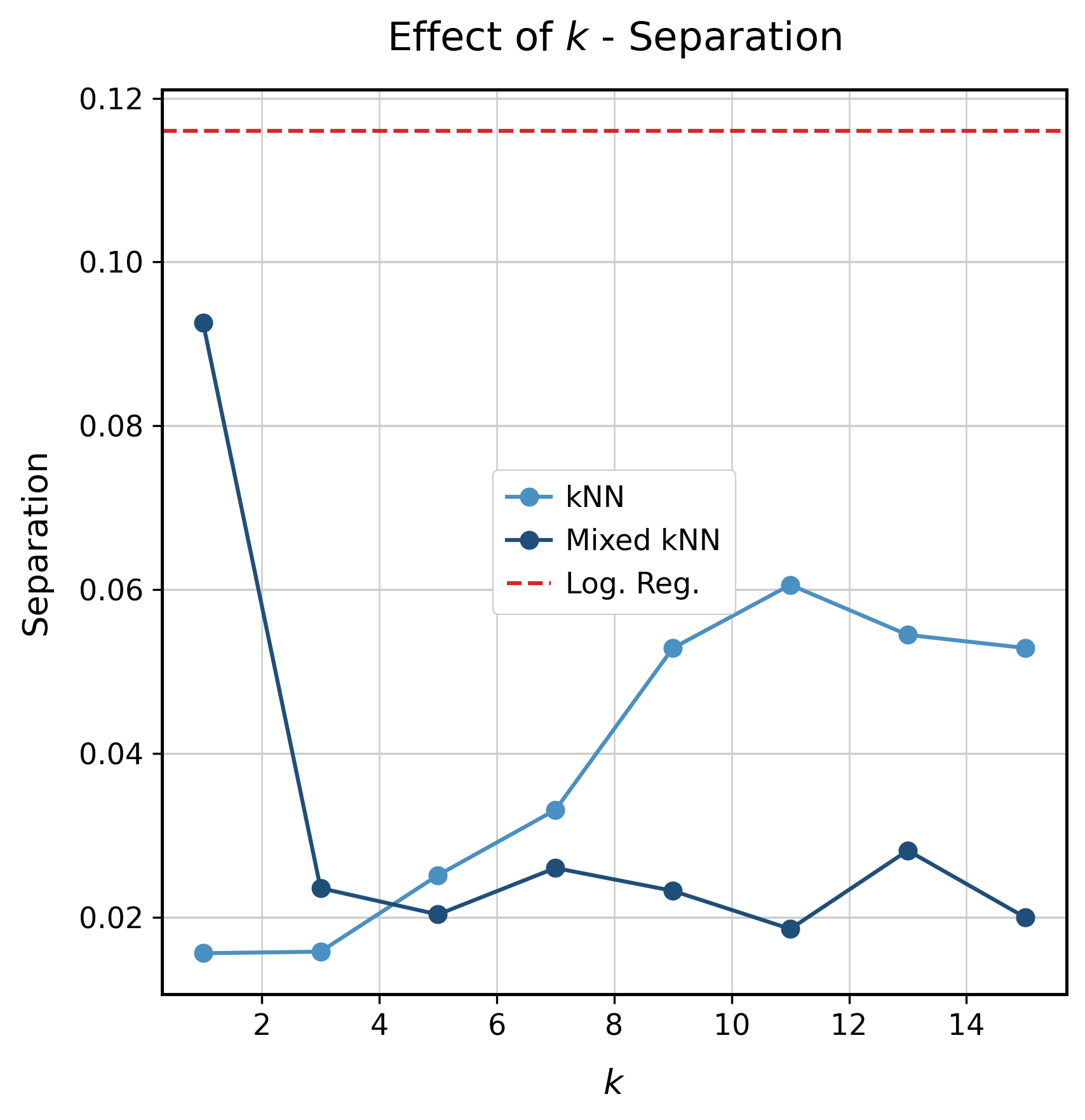}
  \includegraphics[width=.49\textwidth]{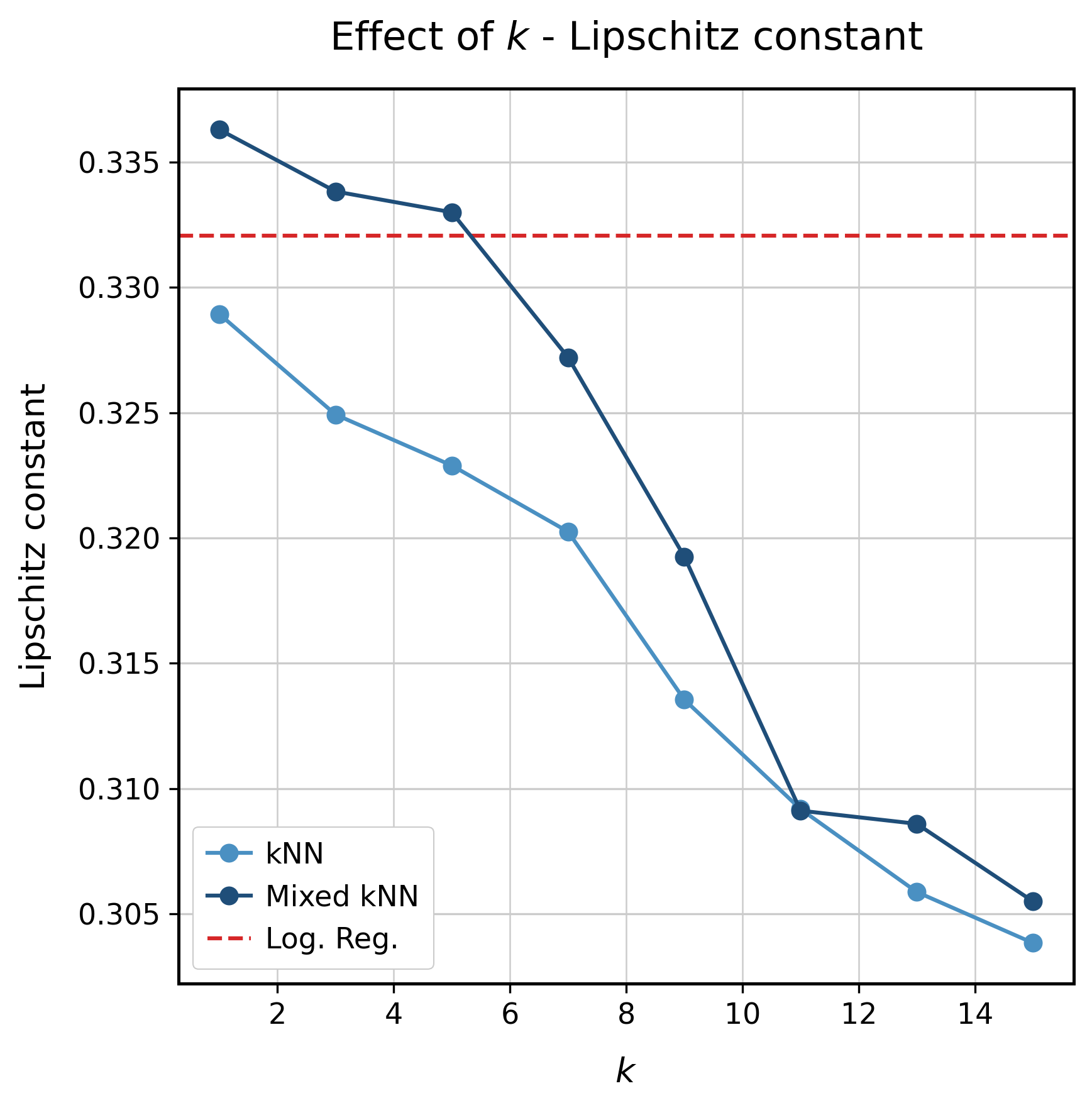}
    \caption{}
\end{figure}

\begin{figure}[!t]
  \centering
  \includegraphics[width=.49\textwidth]{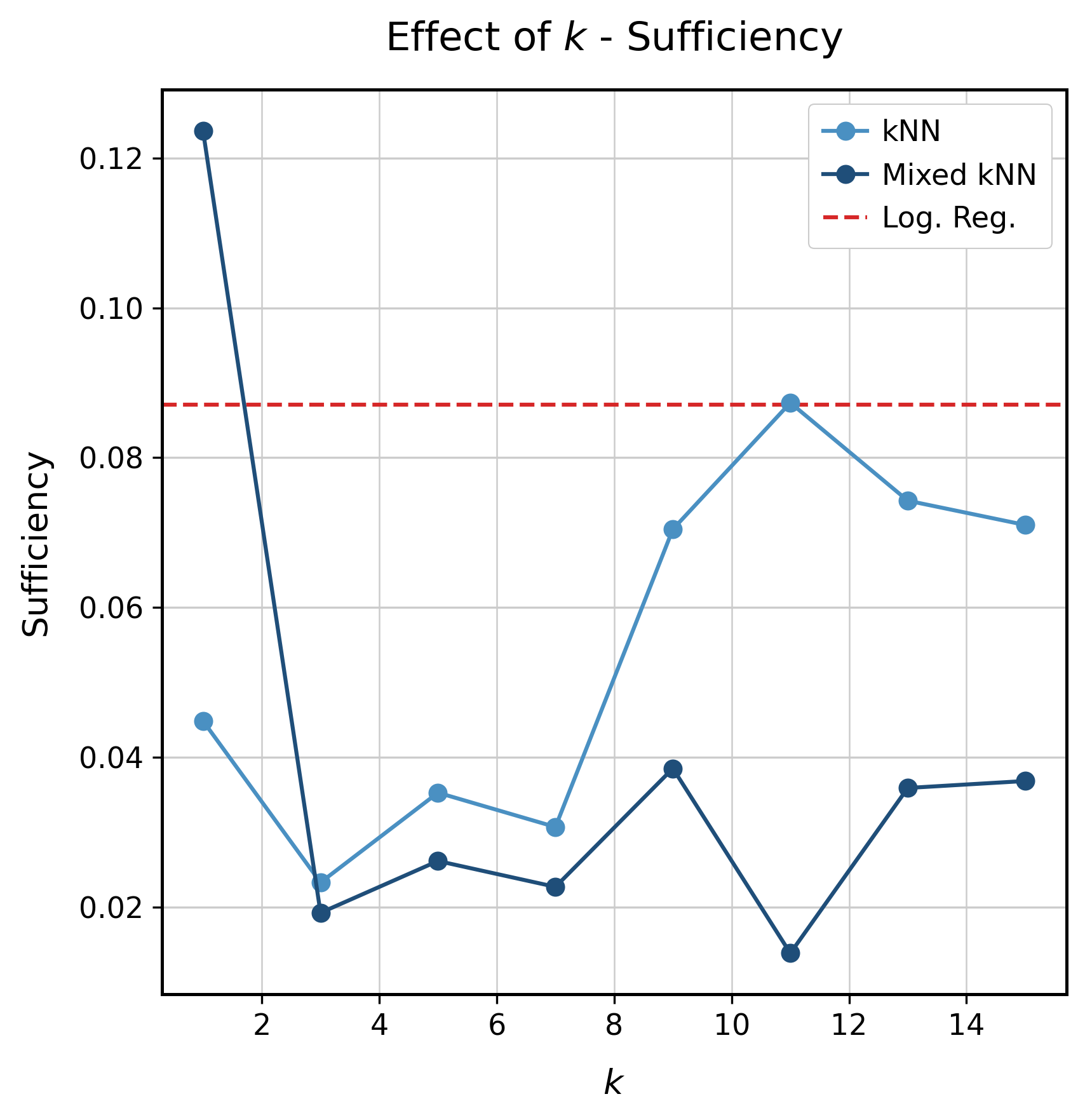}
  \includegraphics[width=.49\textwidth]{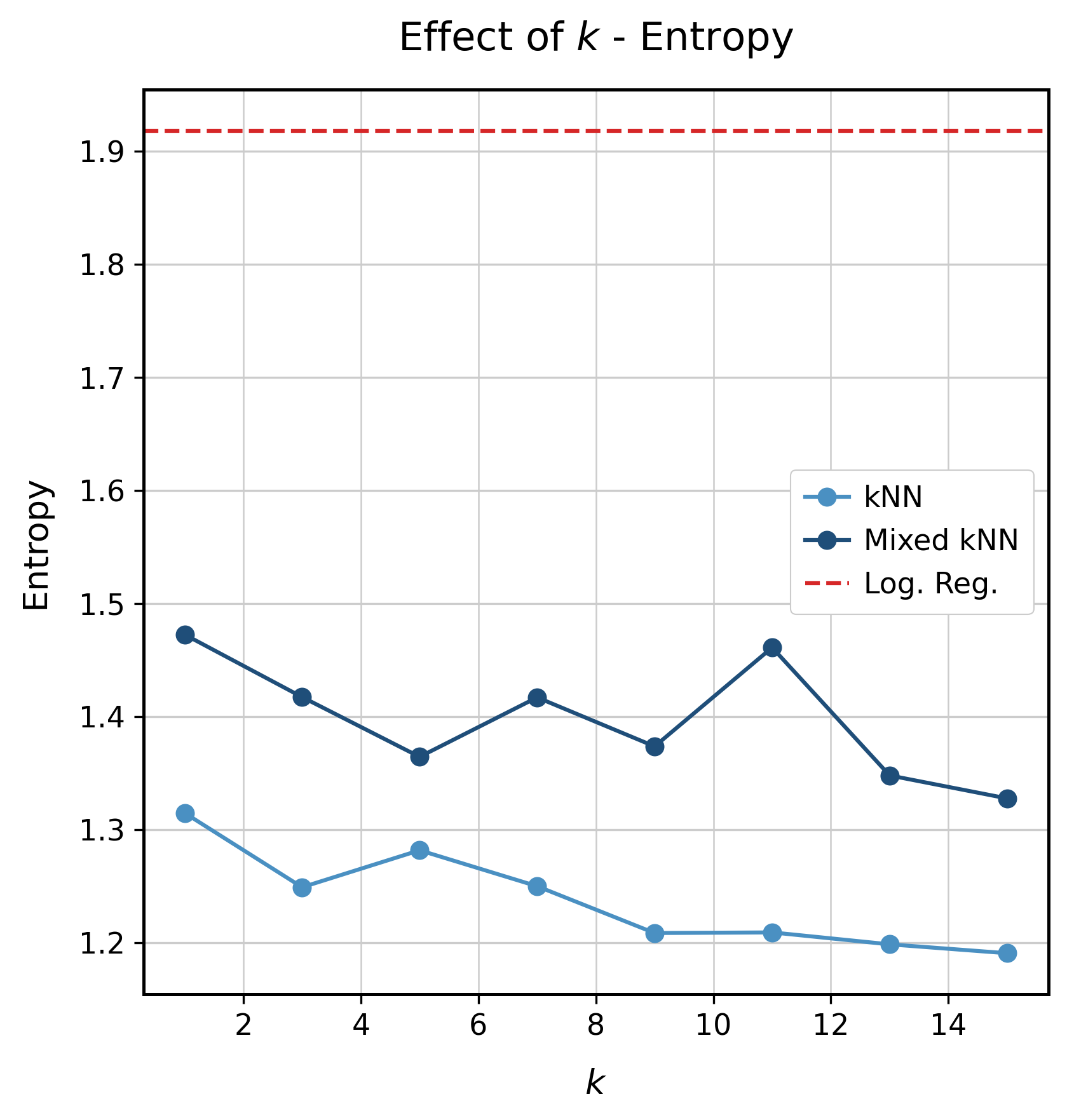}
    \caption{}
\end{figure}

\begin{figure}[!t]
  \centering
  \includegraphics[width=.49\textwidth]{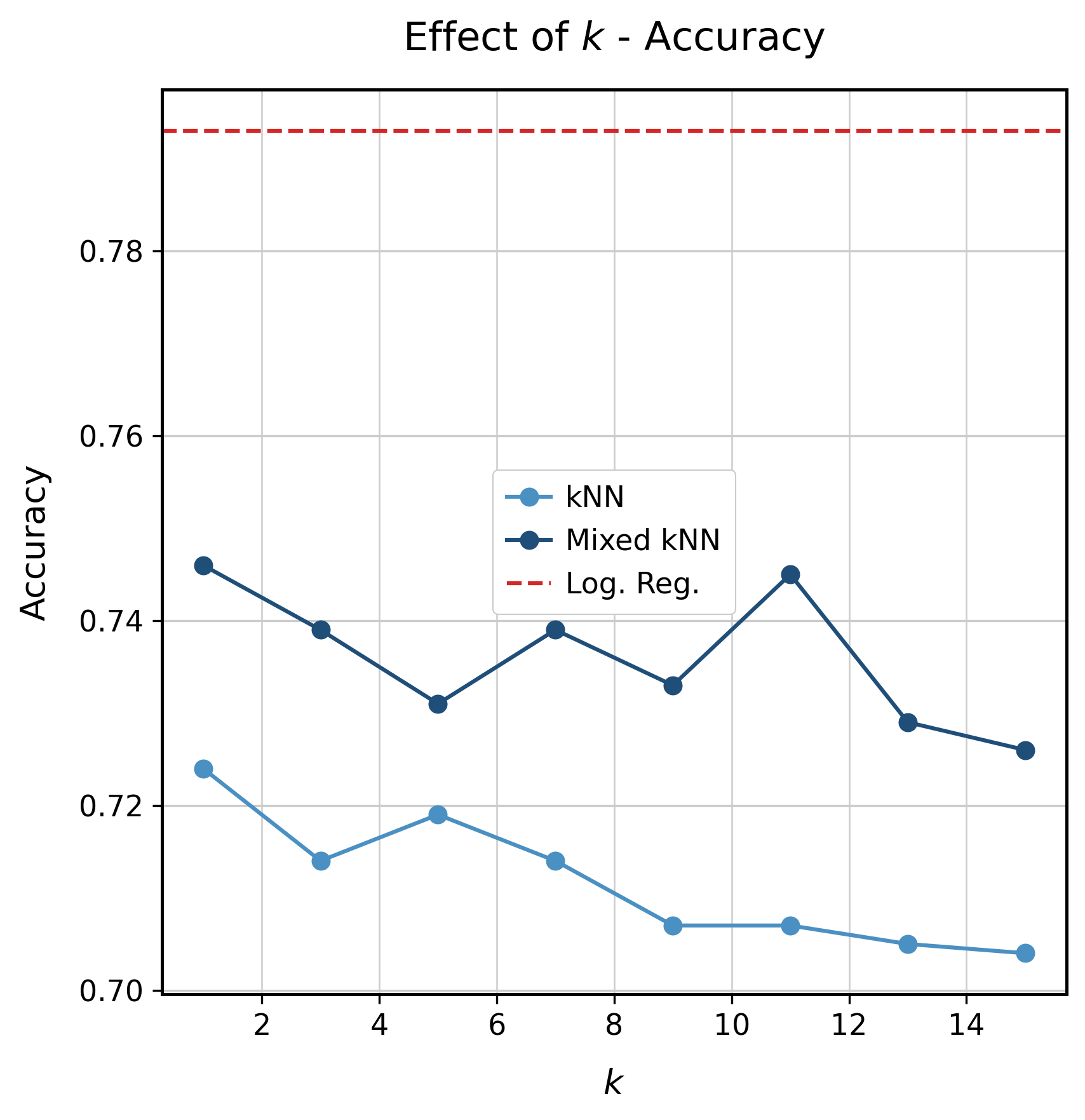}
    \caption{}
  \label{fig:LastK}
\end{figure}

\begin{figure}[!t]
  \centering
  \includegraphics[width=.49\textwidth]{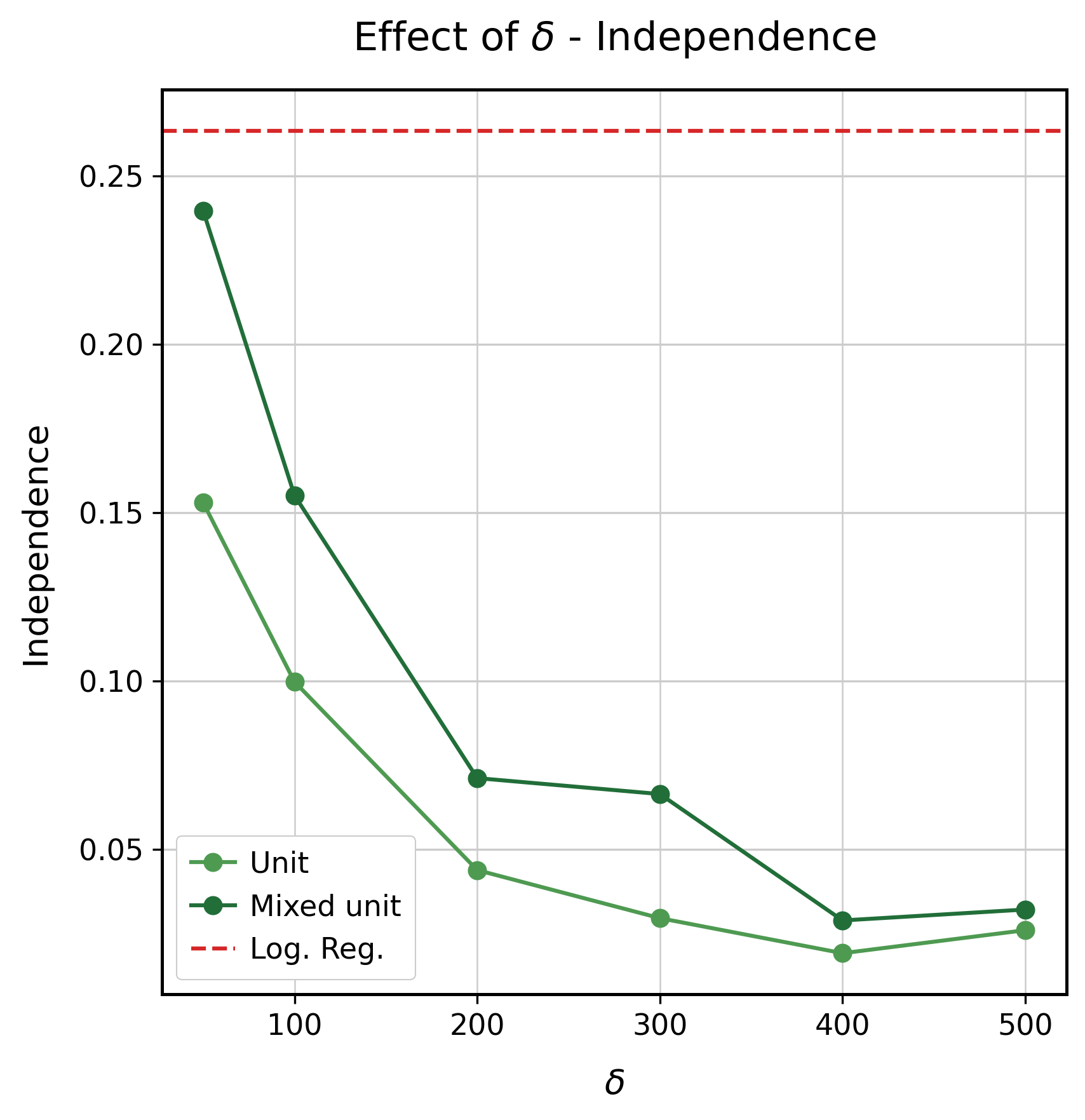}
  \includegraphics[width=.49\textwidth]{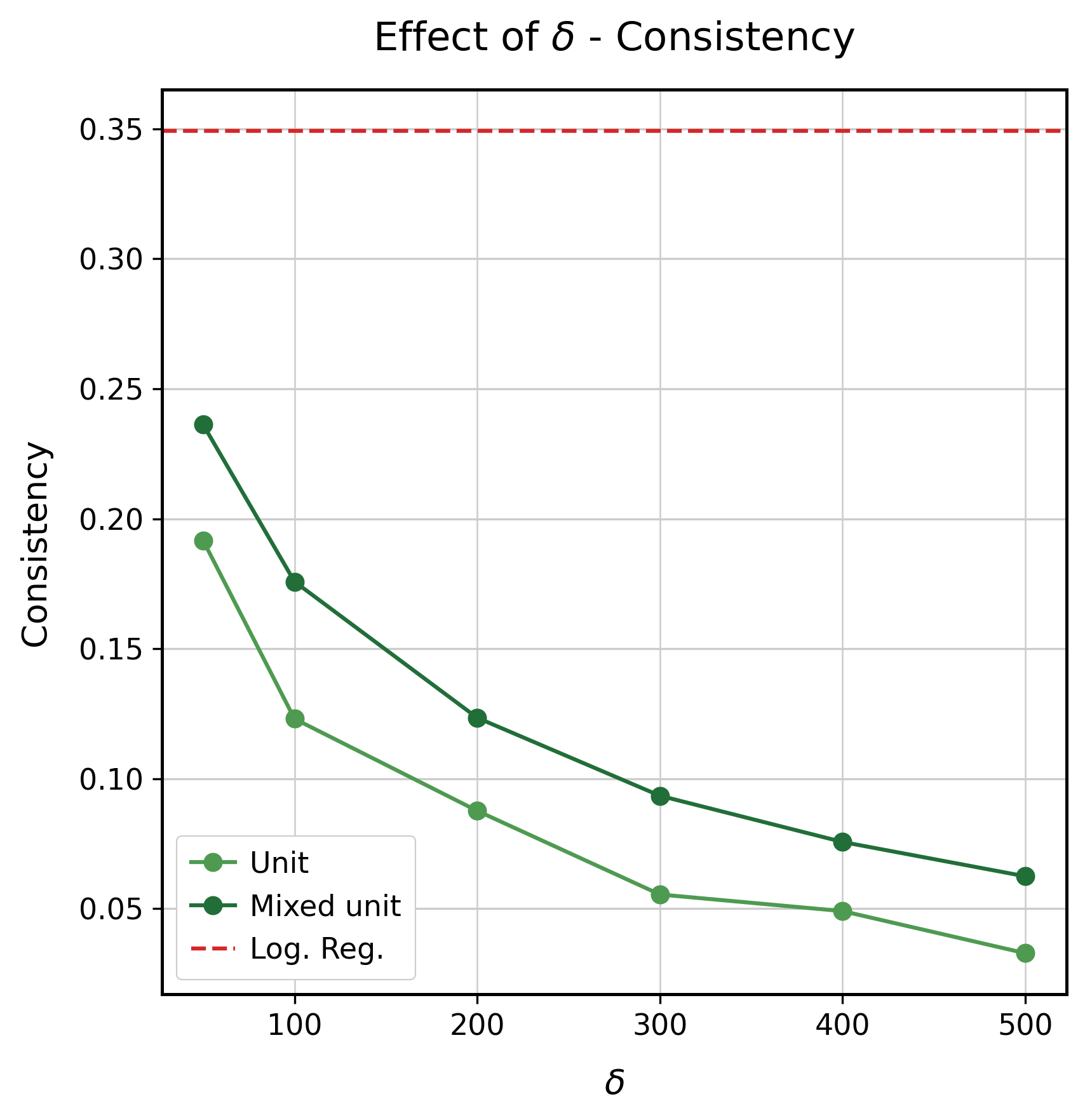}
    \caption{}
  \label{fig:FirstDelta}
\end{figure}

\begin{figure}[!t]
  \centering
  \includegraphics[width=.49\textwidth]{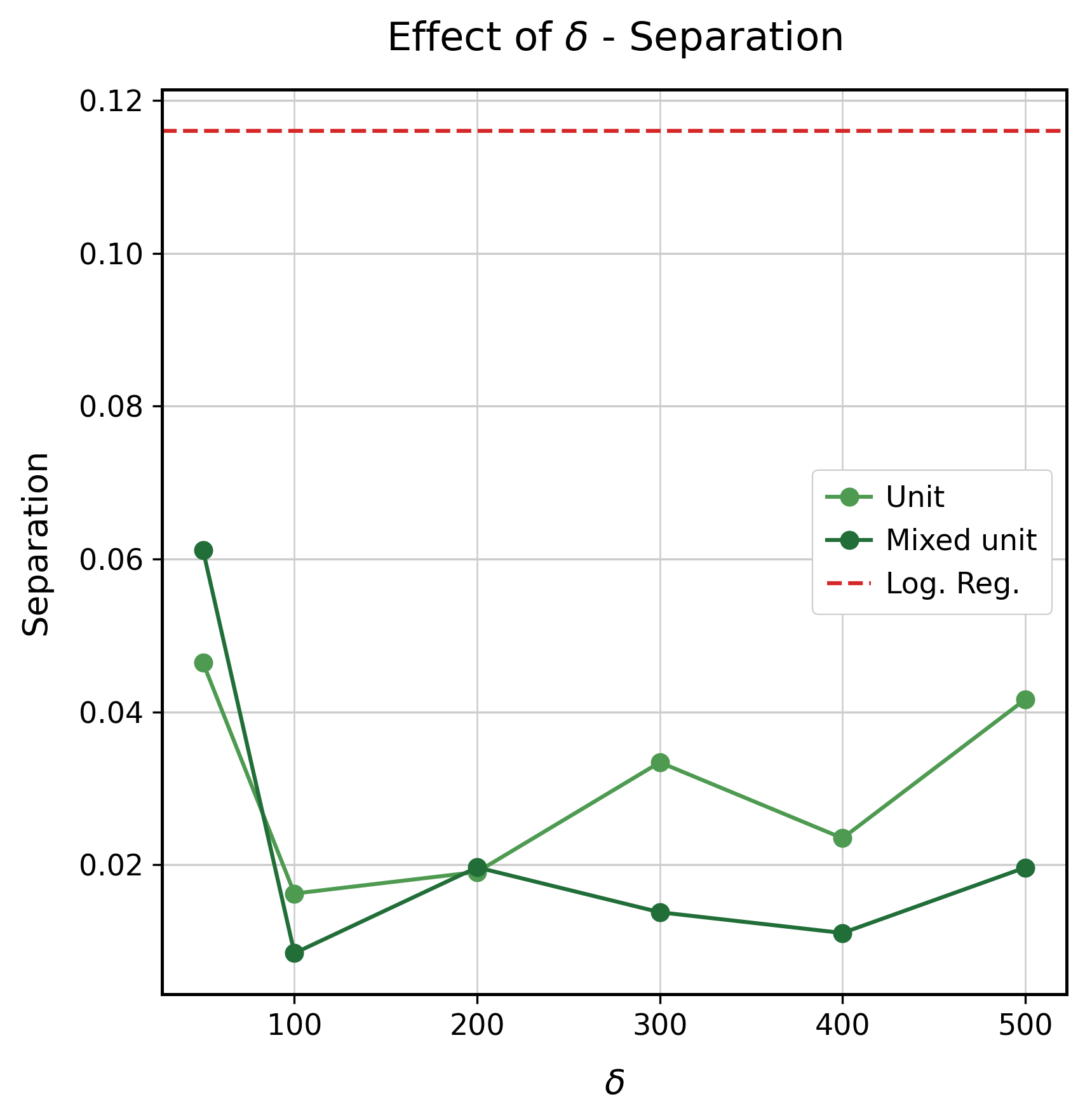}
  \includegraphics[width=.49\textwidth]{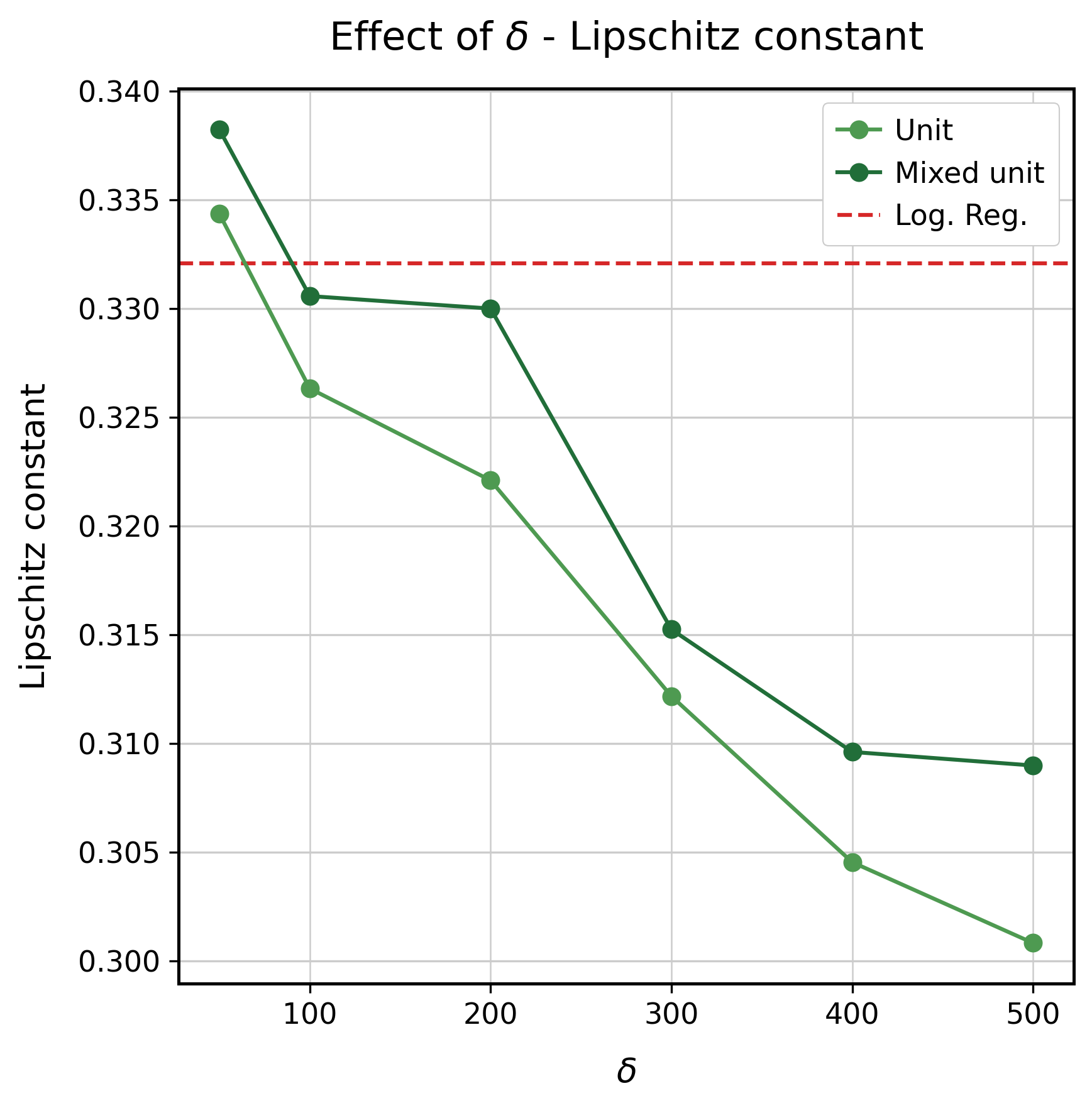}
    \caption{}
\end{figure}

\begin{figure}[!t]
  \centering
  \includegraphics[width=.49\textwidth]{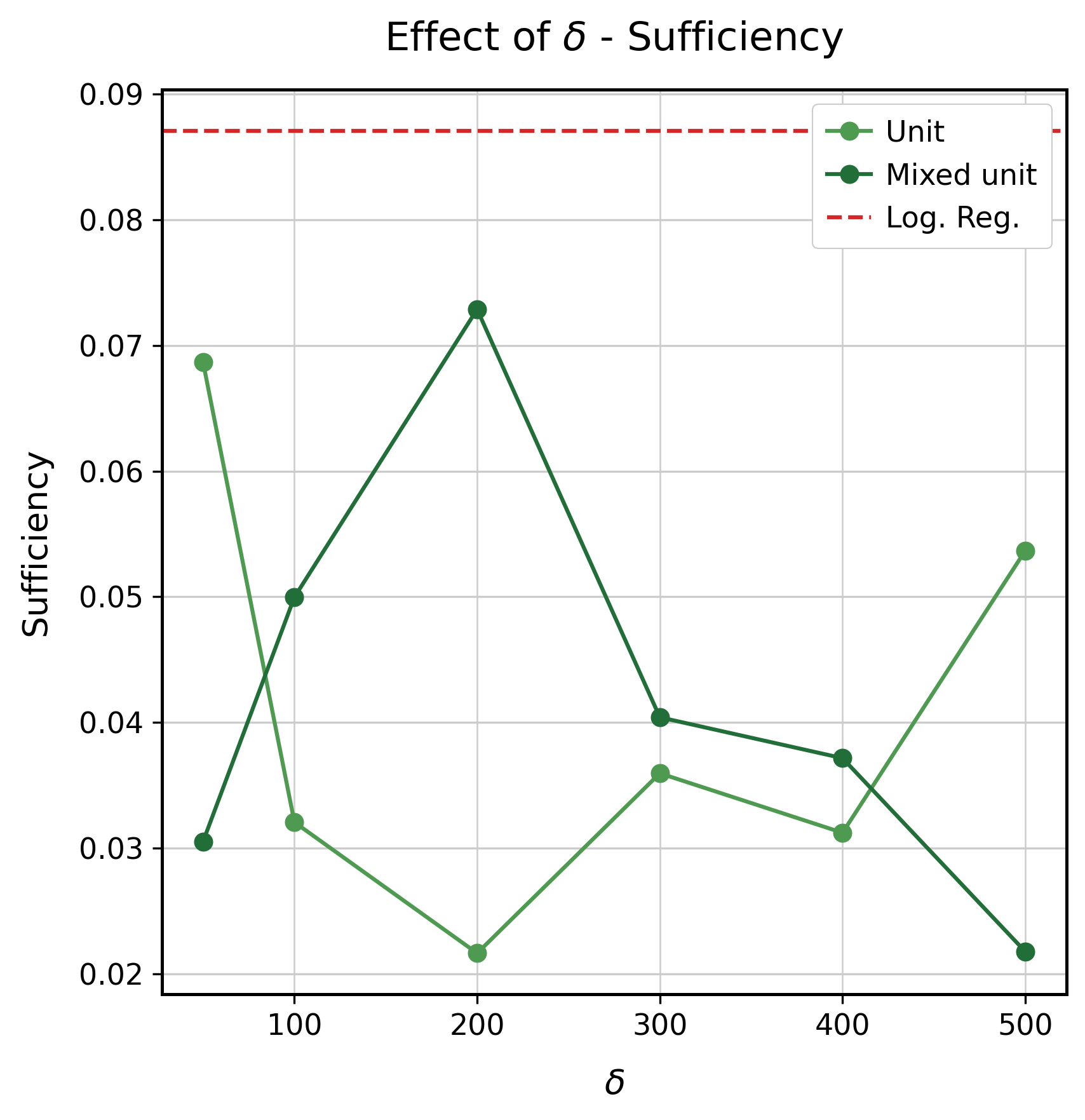}
  \includegraphics[width=.49\textwidth]{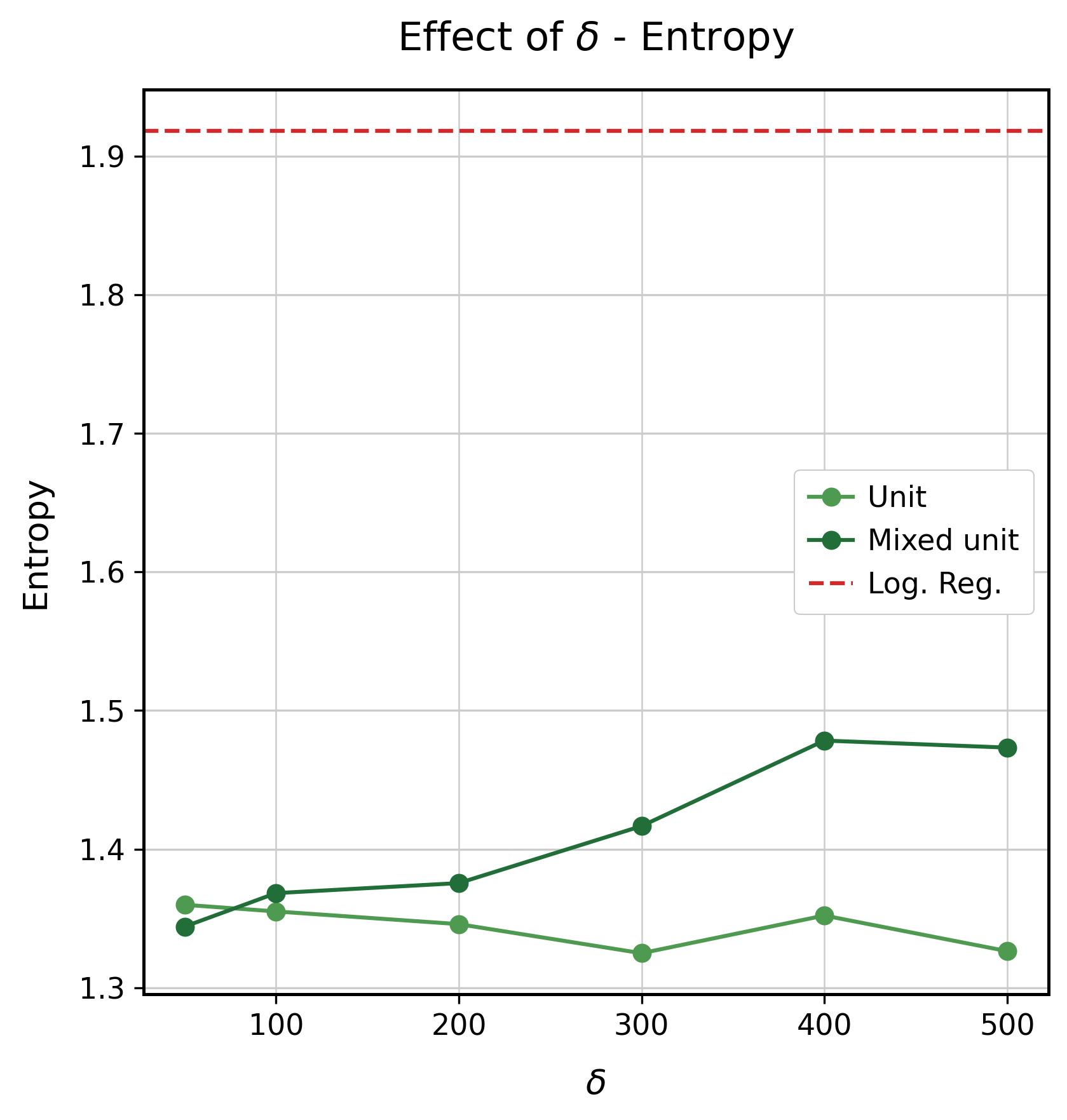}
    \caption{}
\end{figure}

\begin{figure}[!t]
  \centering
  \includegraphics[width=.49\textwidth]{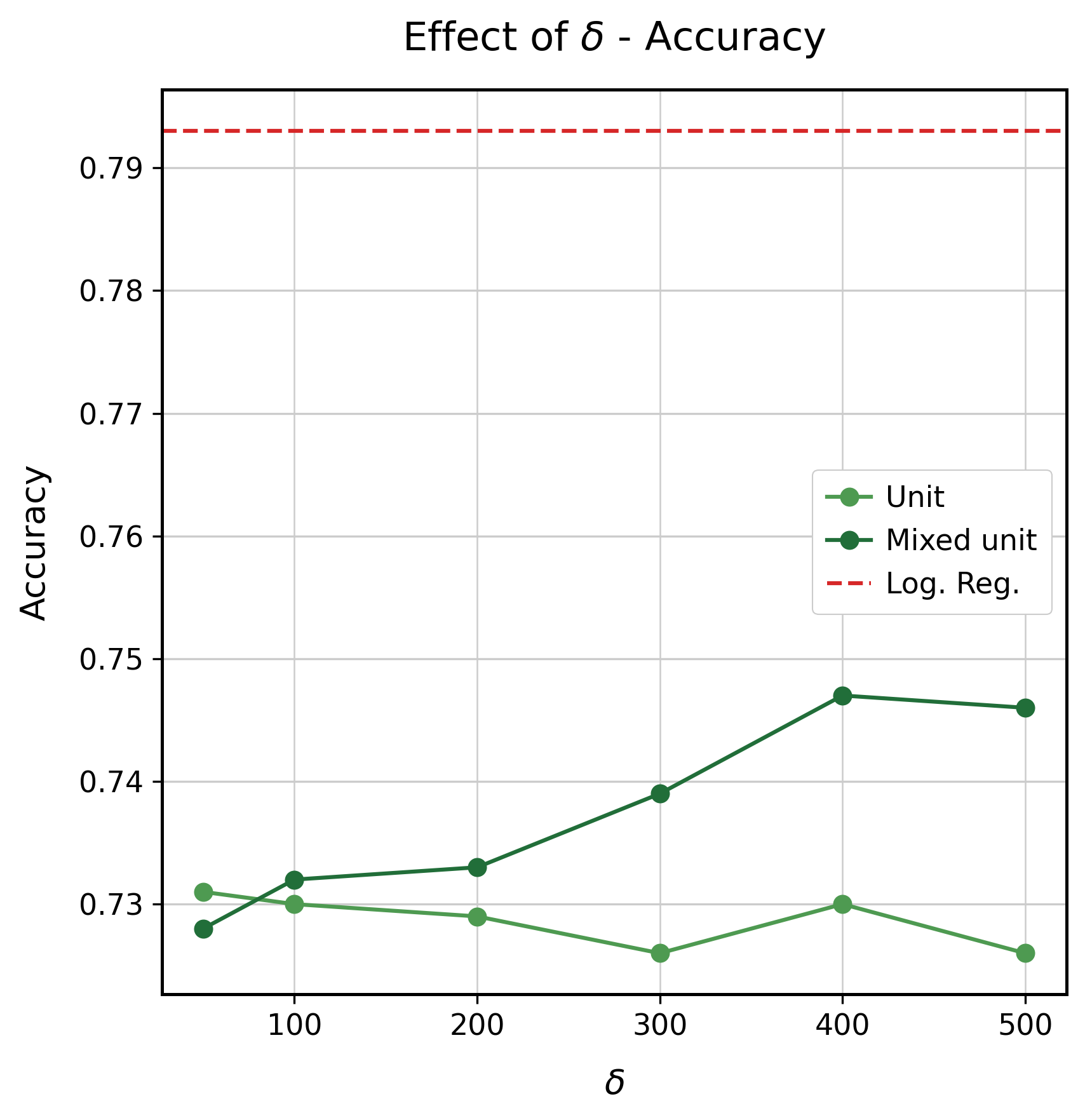}
    \caption{}
  \label{fig:LastDelta}
\end{figure}

Both topologies exhibit similar behavior with respect to the size of the neighbors ($k$ in the case of the kNN configuration and $\delta$ in the case of the unit ball graph), which is proof of the similarity between both topologies. Namely, it seems like accuracy and entropy remain stable while consistency and the lipschitz constant improve as the size of the neighbors increases, which underscores the role of the local topologies in individual fairness. On the other hand, although independence seems to also decrease with the size, the same cannot be said about separation and sufficiency, although this seems to be remedied by using mixed configurations. Importantly, it seems like using a local topology in combination with the global configuration smooths its behavior, improving its stability.

\subsubsection{Effect of convex weights on mixed topologies}
Finally, the last design choice is found in the relative weight given to each graph laplacian when using a mixed configuration. The only two objects of study are therefore the combination of the subset topology with either the kNN or unit ball graphs. Figures \ref{fig:FirstWeight} to \ref{fig:LastWeight} shows the results, which can be understood as a non-linear interpolation between one topology and the other.\\

\begin{figure}[!t]
  \centering
  \includegraphics[width=.49\textwidth]{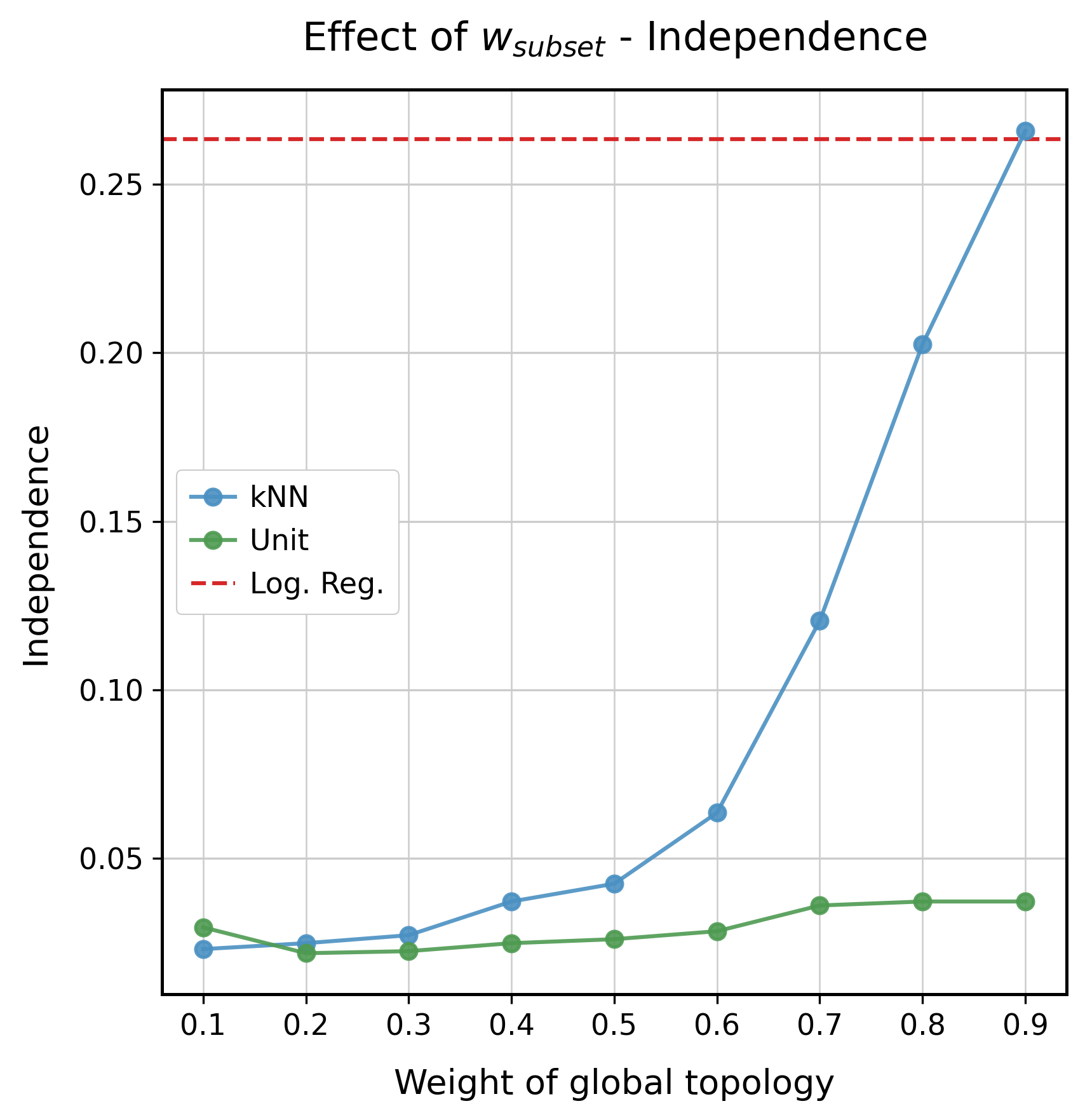}
  \includegraphics[width=.49\textwidth]{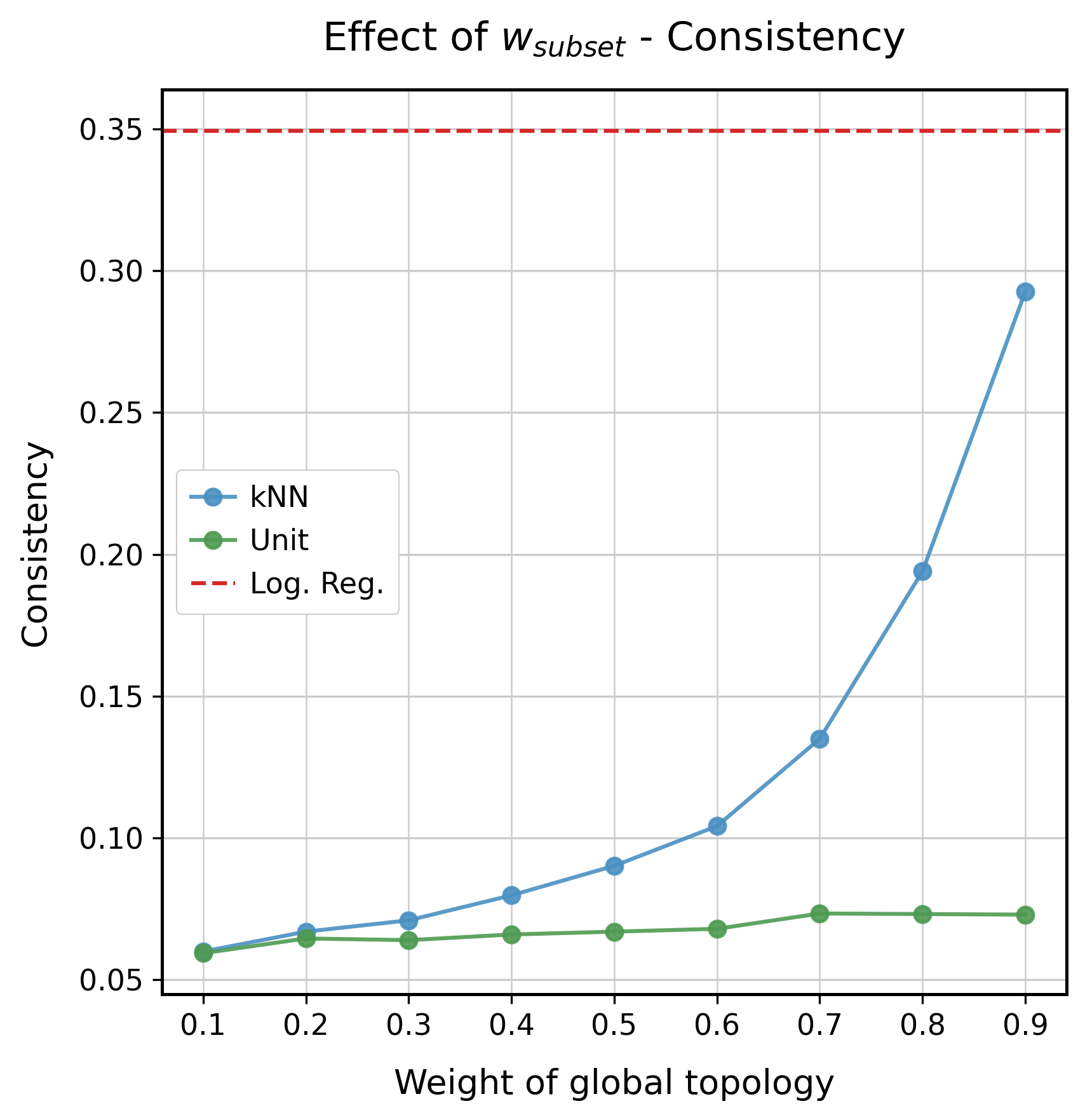}
    \caption{}
  \label{fig:FirstWeight}
\end{figure}

\begin{figure}[!t]
  \centering
  \includegraphics[width=.49\textwidth]{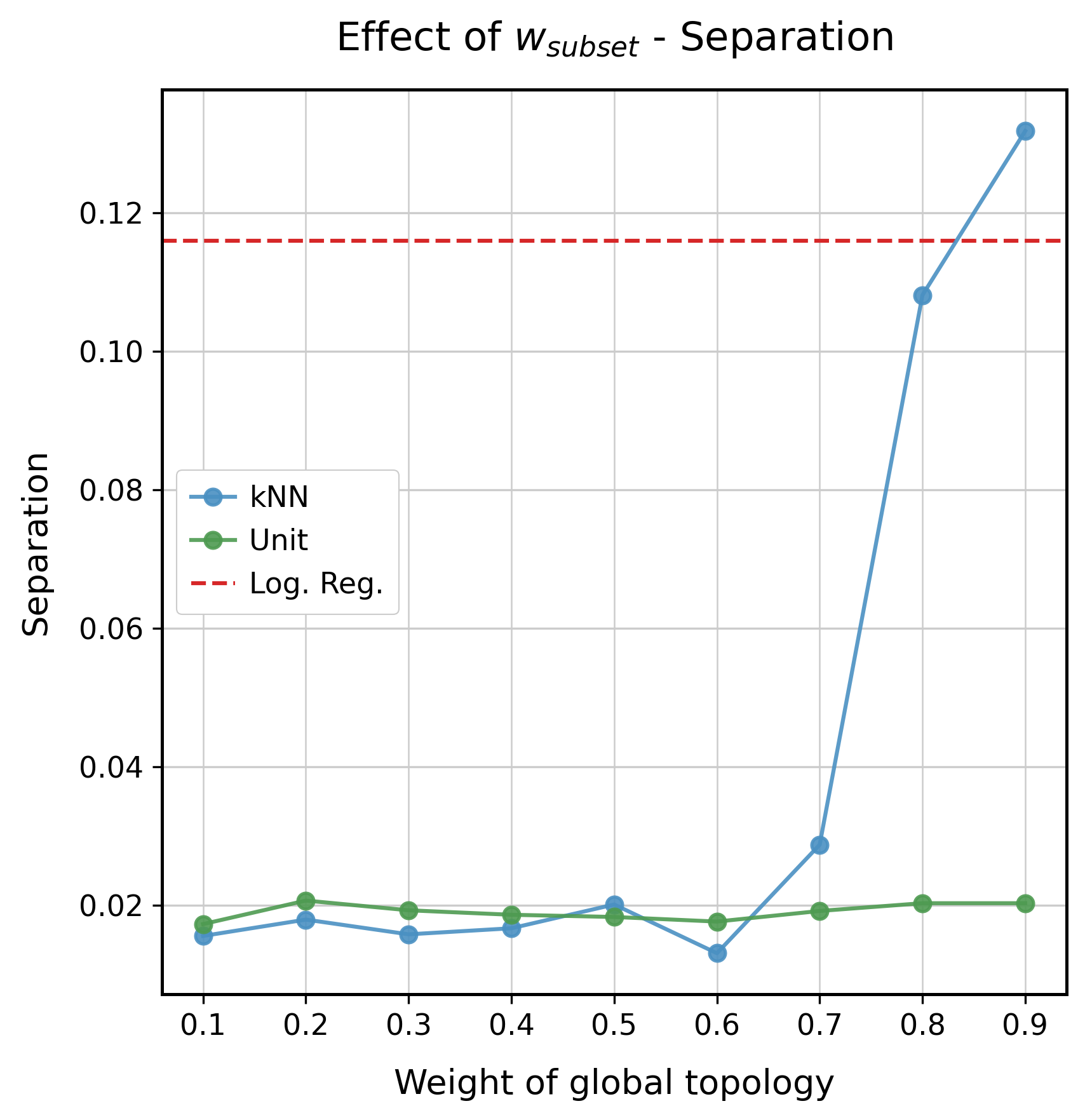}
  \includegraphics[width=.49\textwidth]{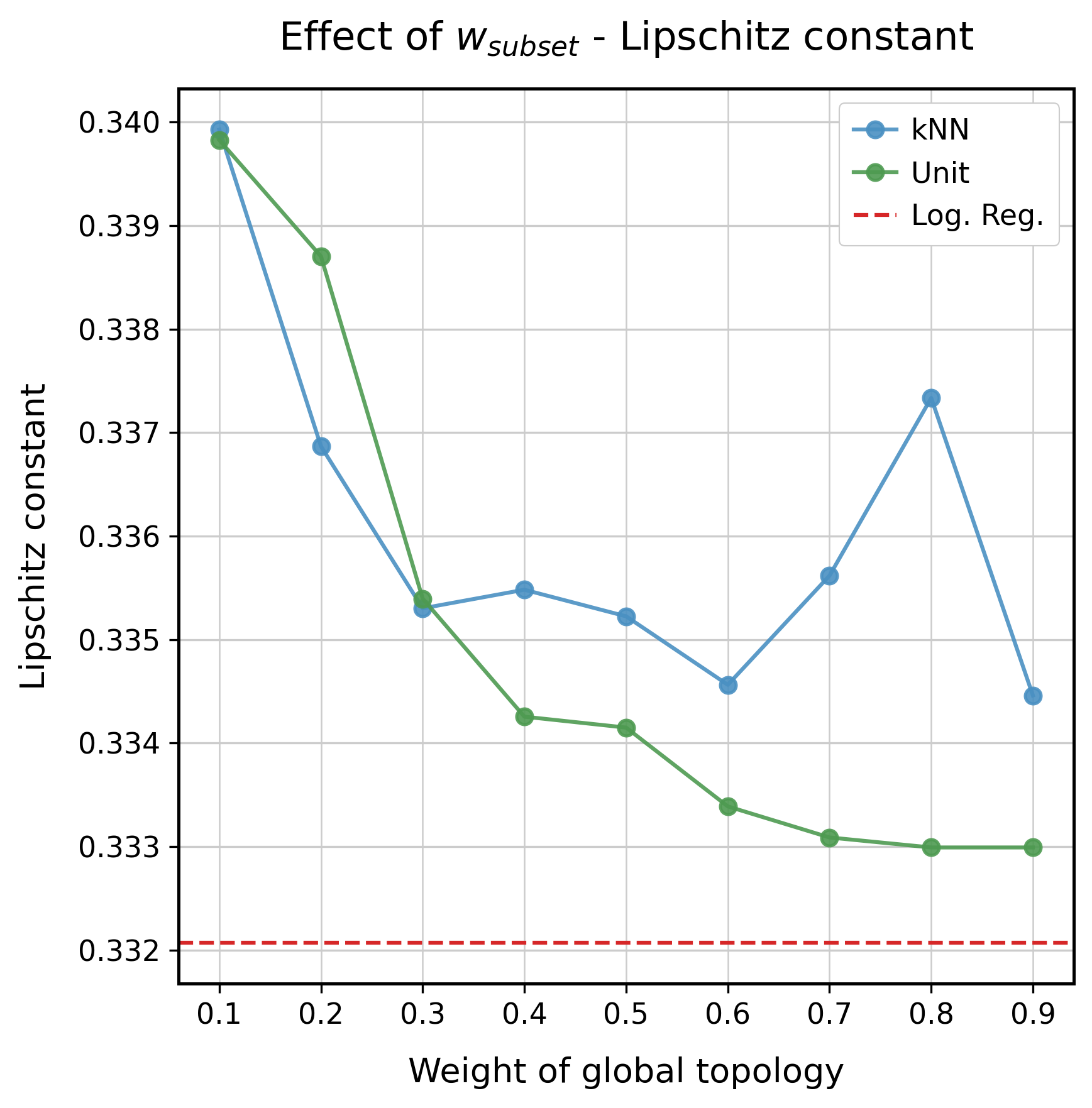}
    \caption{}
\end{figure}

\begin{figure}[!t]
  \centering
  \includegraphics[width=.49\textwidth]{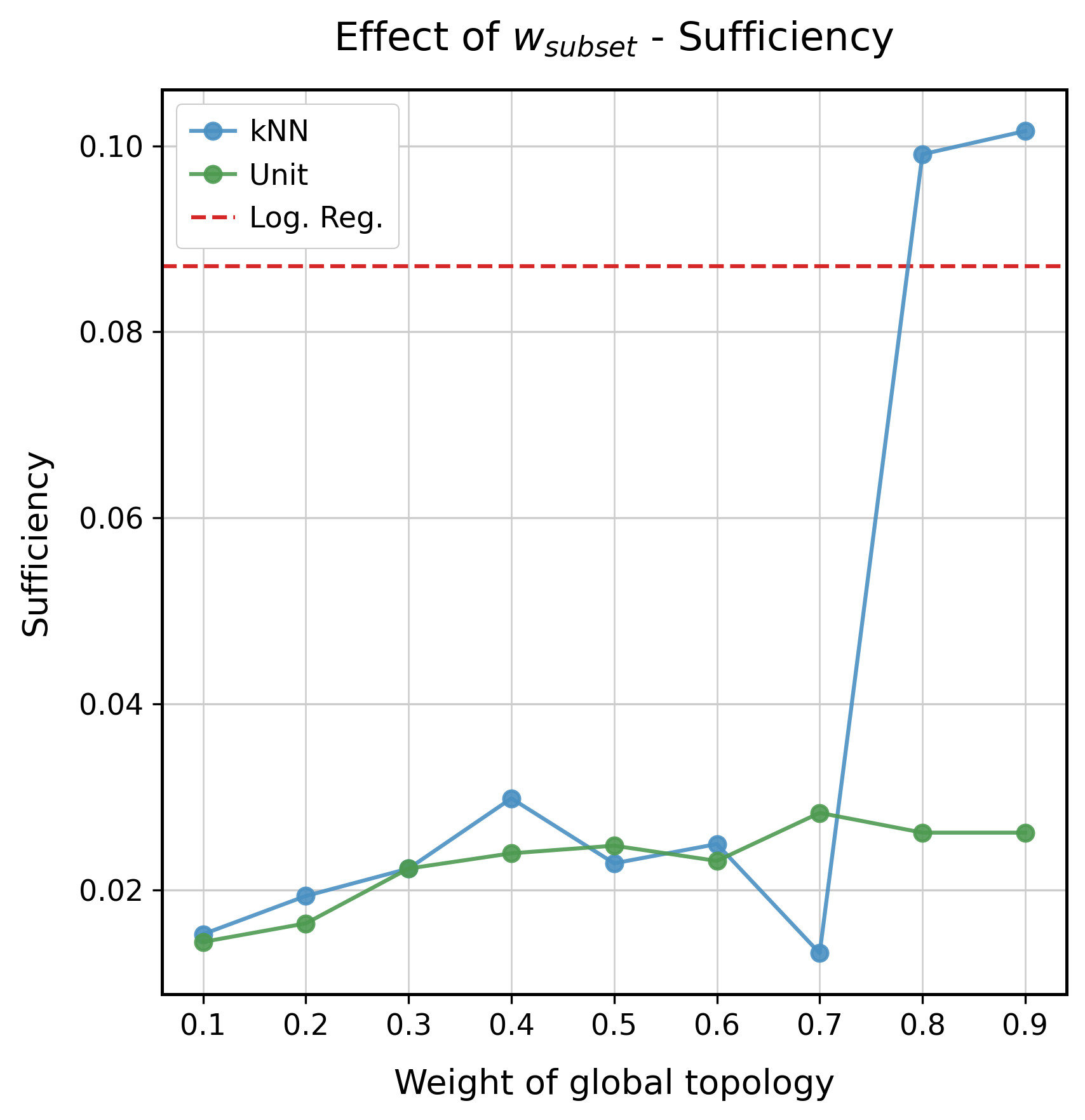}
  \includegraphics[width=.49\textwidth]{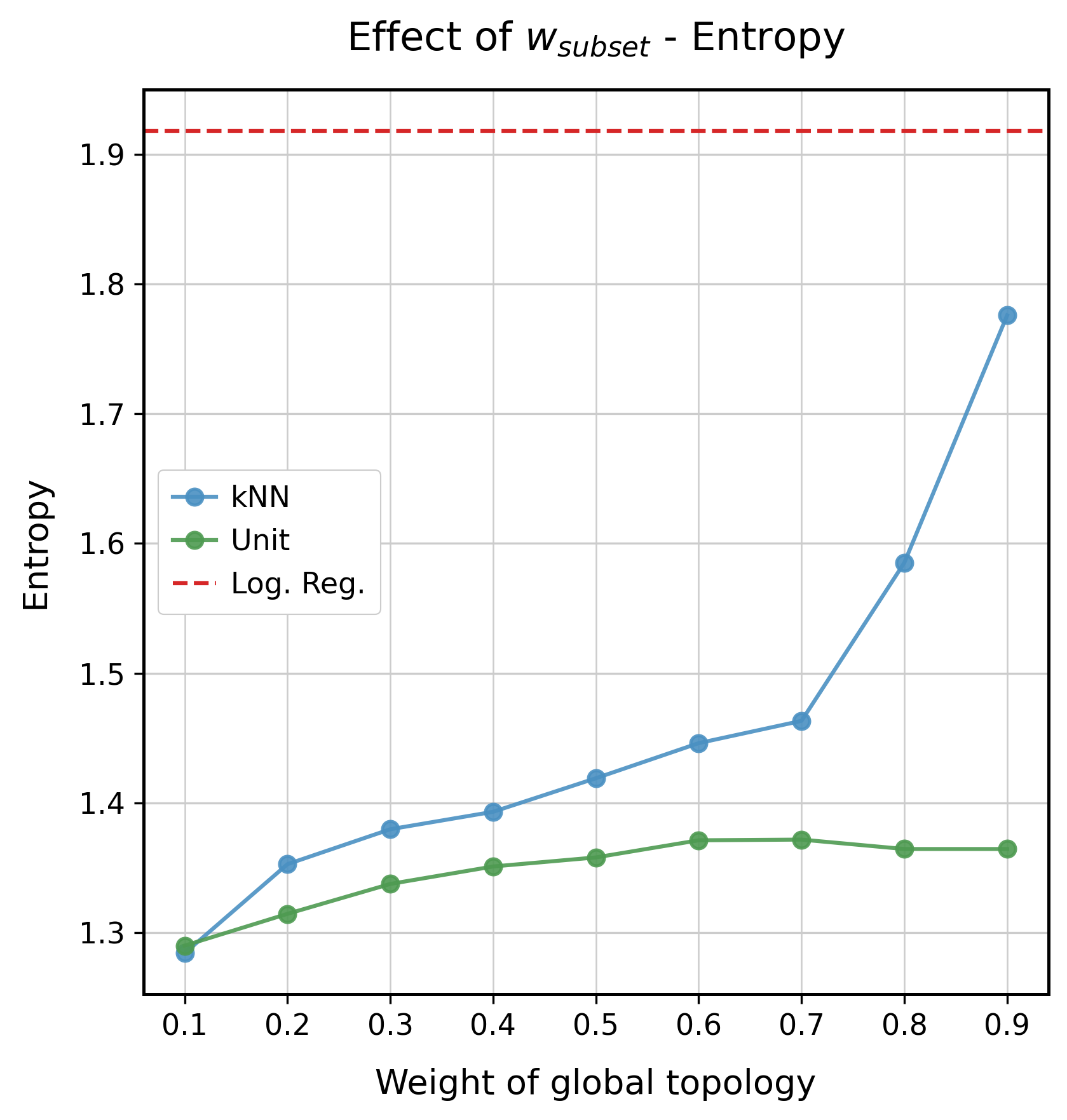}
    \caption{}
\end{figure}

\begin{figure}[!t]
  \centering
  \includegraphics[width=.49\textwidth]{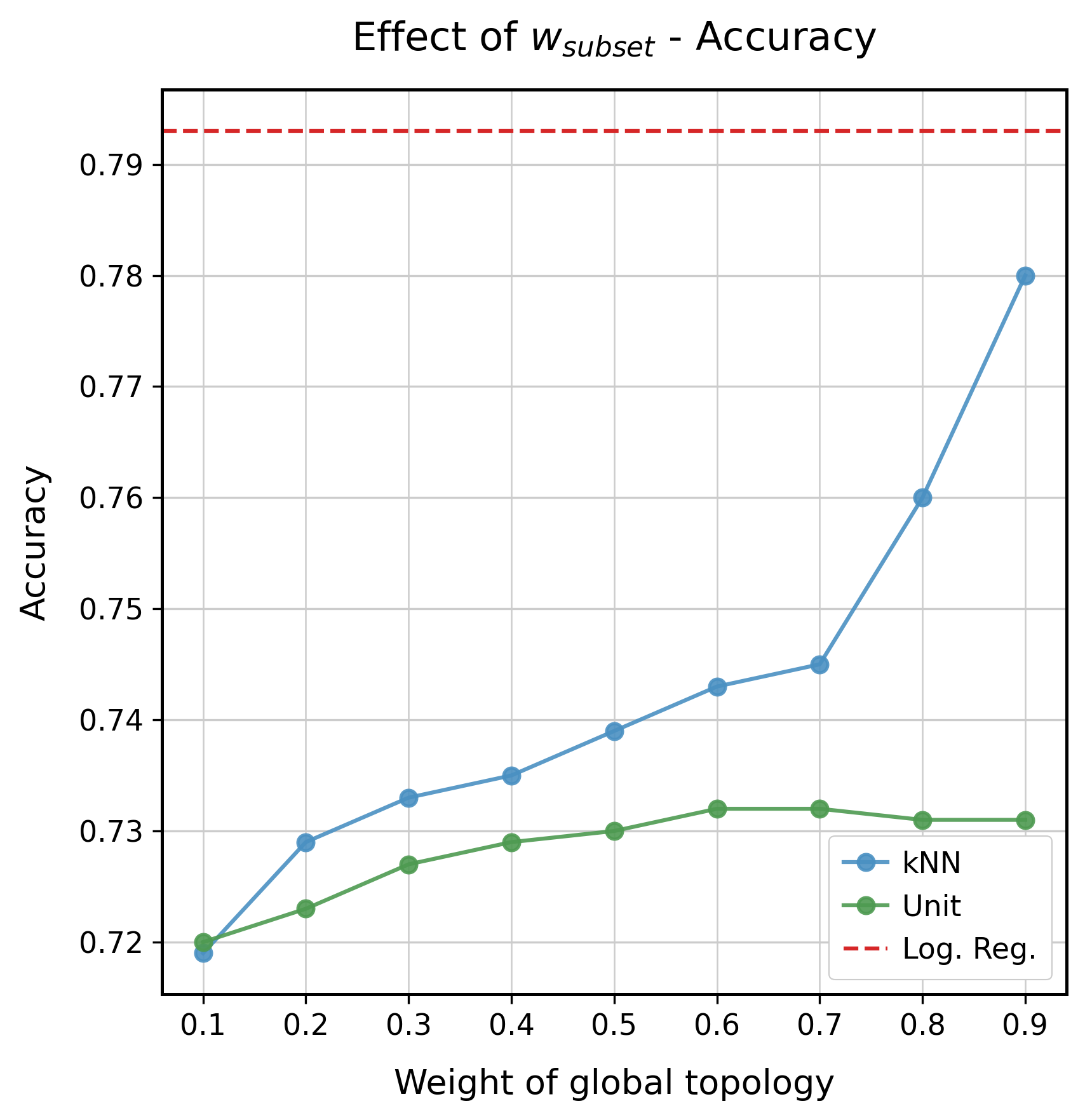}
    \caption{}
  \label{fig:LastWeight}
\end{figure}

The effect of this parameter on most metrics is non-trivial. On the one hand, accuracy, the lipschitz constant, consistency and entropy all seem to improve as the global topology is assigned more weight, although this behavior is more pronounced when using the kNN graph. The fact that a global configuration can improve individual metrics is a consequence of the more overarching regularization it imposes. Independence seems to increase in the mixed kNN configuration as the weight of the global topology gets higher, while it remains constant when using the unit ball topology. Sufficiency and separation, on the other hand, seem to stay stable until the weight reaches a certain threshold around $0.7$ for the kNN topology, although this might be a consequence of the aforementioned numerical instability.\\
All in all, it seems like the best results are obtained when giving equal weight to both topologies, where no topology dominates and results remain stable. Therefore, this will be our choice for the the next Sections. 

\subsection{The cost of fairness}
\label{subapp:cost}

We now delve into the Pareto frontiers hinted at in Section \ref{subsubsec:Cost}. Figures \ref{fig:Pareto2DGerman} to \ref{fig:Pareto2DAdult} show the Pareto fronts for independence and accuracy on the one hand, and for consistency and accuracy on the other hand. Starting with the German dataset, we can compromise accuracy by just $2\%$ while achieving an improvement of nearly $50\%$ in independence or $33\%$ in consistency. The situation in Compas is really different , where a high level of accuracy of more than $0.9$ can be achieved with less than $0.1$ independence or consitency. However, in order to achieve more fairness comparatively bigger sacrifices must be made. In order improve IND by $50\%$ we need to compromise accuracy by nearly $33\%$, while a similar improvement in consistency requires a decrease of $10\%$ in accuracy. Finally, the Adult dataset paints a more optimistic picture, where reducing independence or consistency by $50\%$ requires a decrease of just $2\%$ in accuracy.

\begin{figure}[!ht]
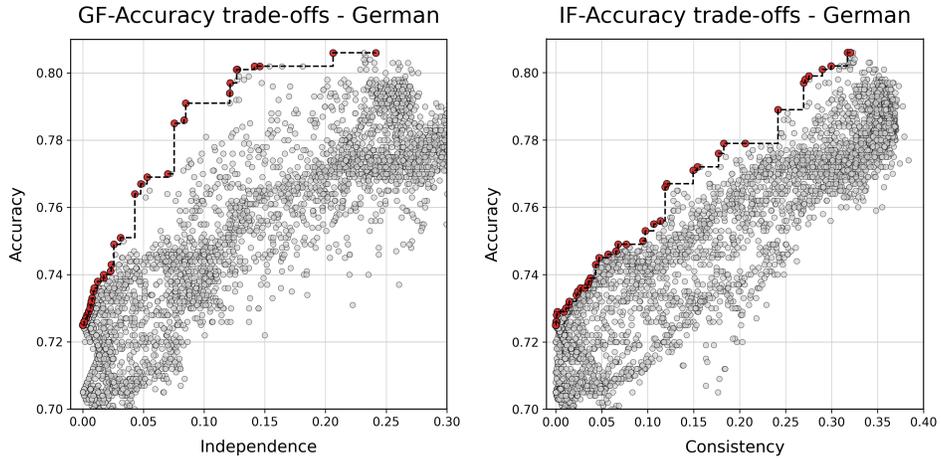

    \centering
    \includegraphics[width=0.49\textwidth]{germanParetoIndAcc.png}
    \includegraphics[width=0.49\textwidth]{germanParetoConAcc.png}
        \caption{Fairness-accuracy trade-offs for the German dataset.}
    \label{fig:Pareto2DGerman}
\end{figure}

\begin{figure}[!ht]
    \centering
    \includegraphics[width=0.49\textwidth]{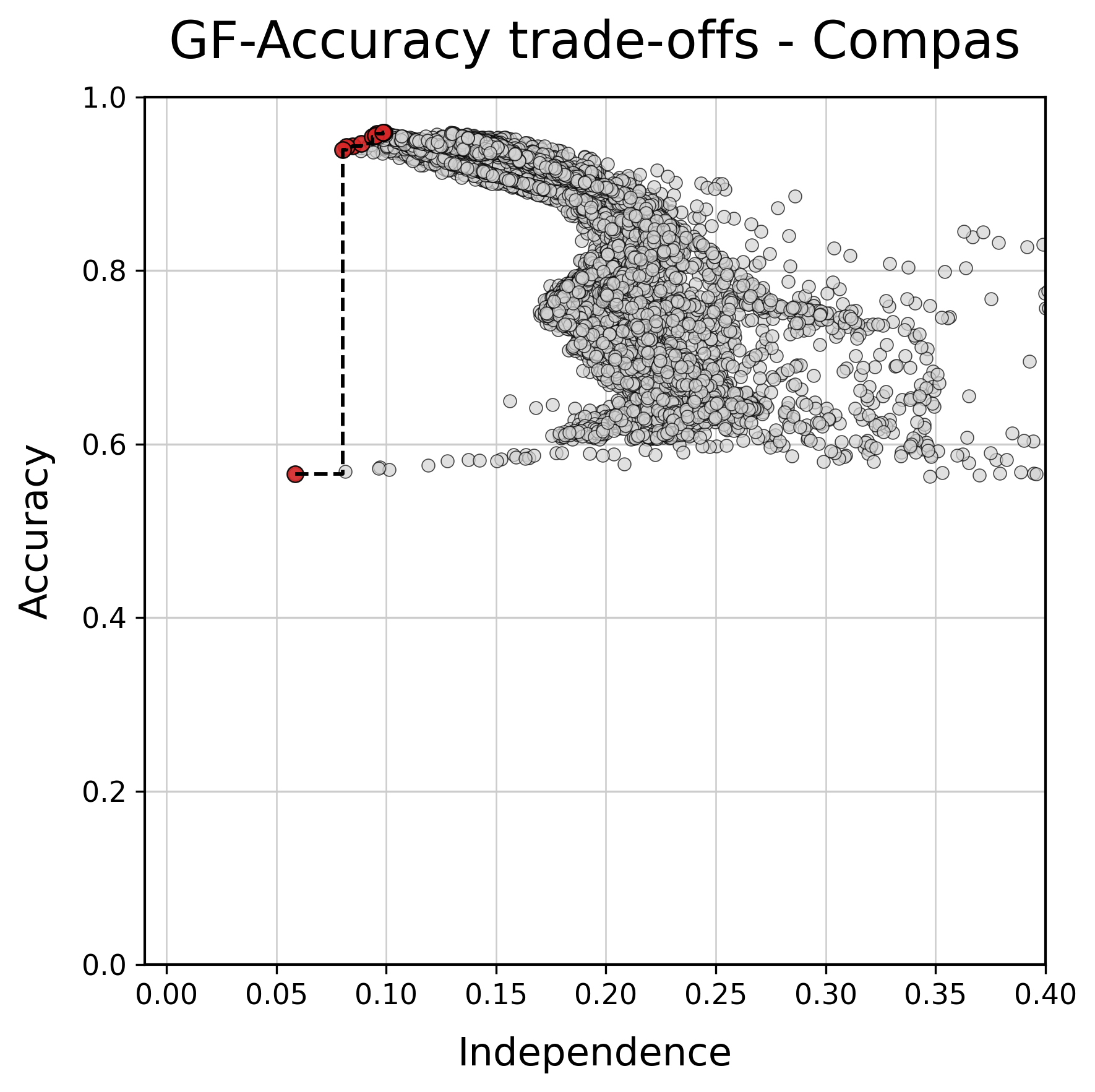}  
    \includegraphics[width=0.49\textwidth]{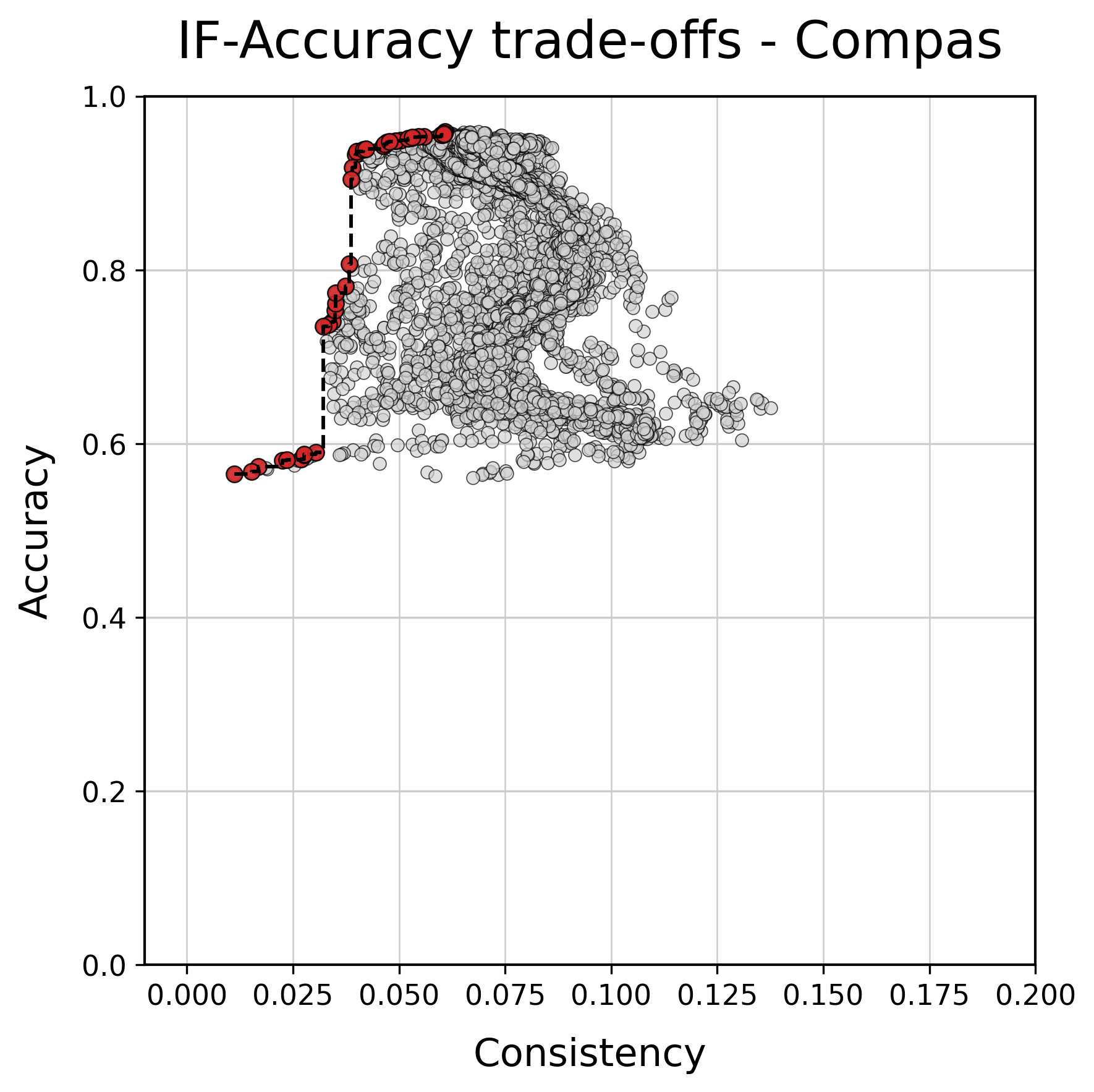}
    \caption{Fairness-accuracy trade-offs for the Compas dataset.}
    \label{fig:Pareto2DCompas}
\end{figure}

\begin{figure}[!ht]
    \centering
    \includegraphics[width=0.49\textwidth]{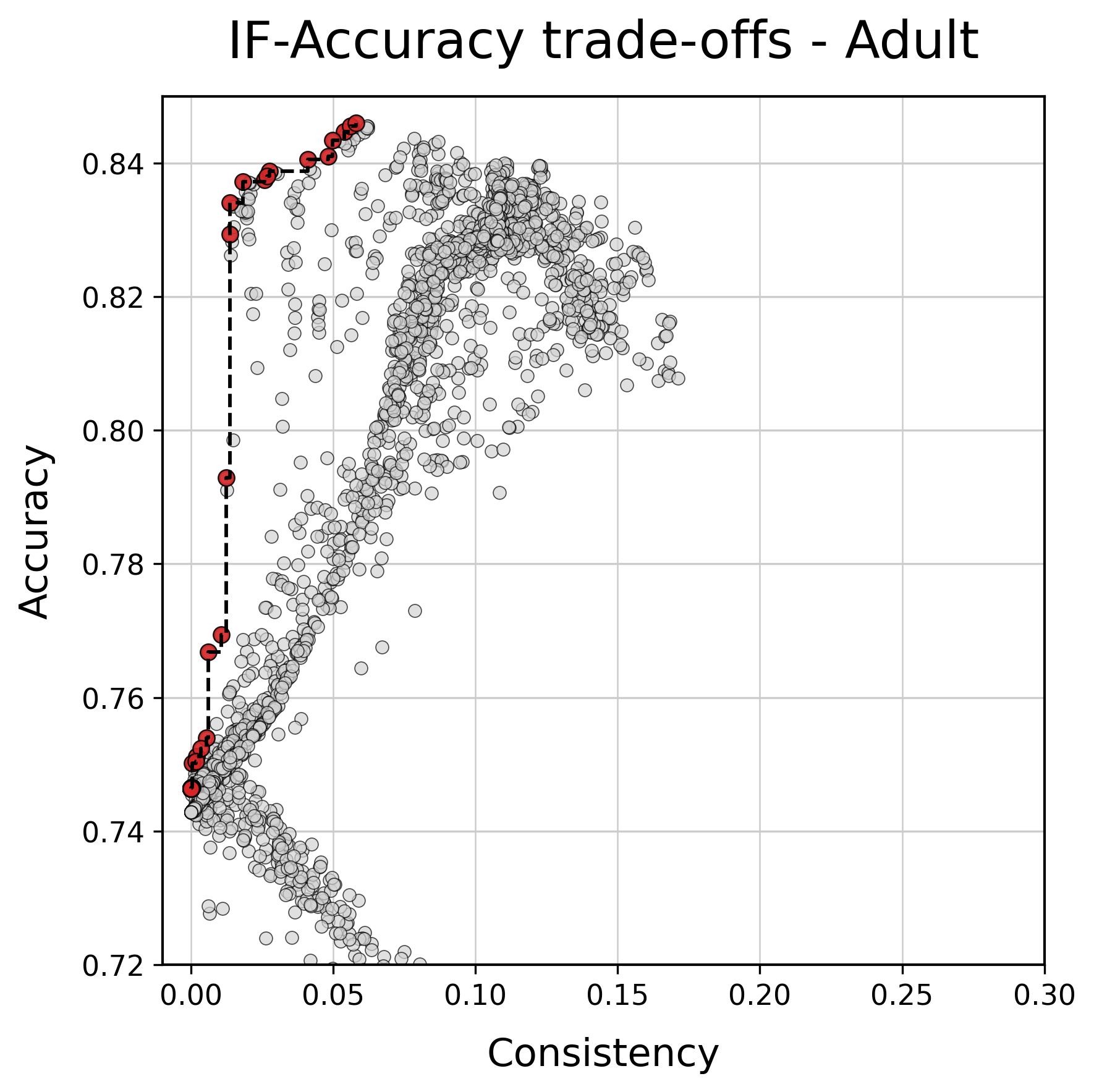}   
    \includegraphics[width=0.49\textwidth]{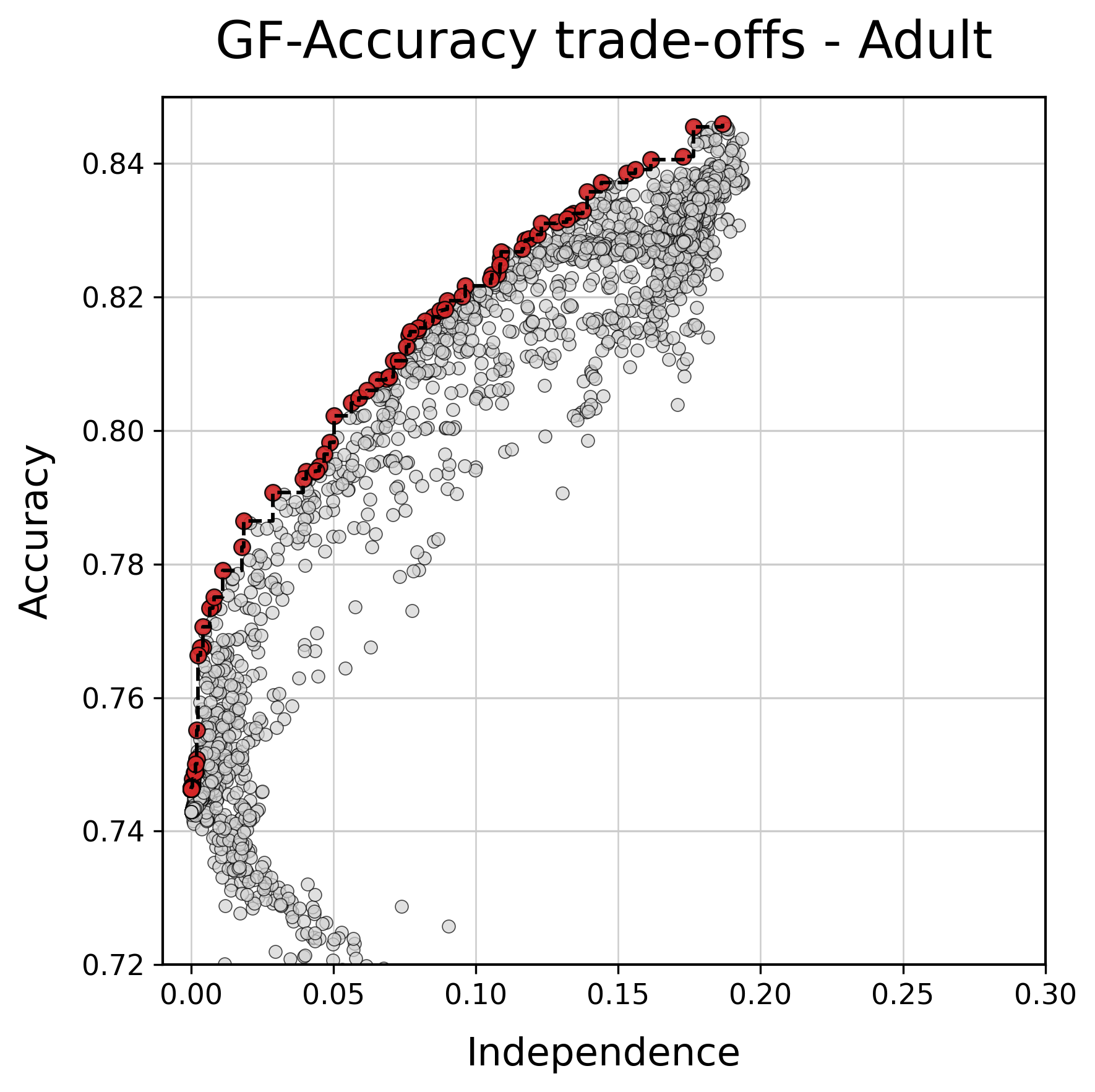}
    \caption{Fairness-accuracy trade-offs for the Adult dataset.}
    \label{fig:Pareto2DAdult}
\end{figure}

Next, the combined perspective of 3D Pareto frontiers is shown in Figure \ref{fig:Pareto3DApp}. Starting with the German dataset, a compromise of just $2\%$ in accuracy leads to a combined improvement of $33\%$ in both independence and consistency. Next, the Compas dataset shows a more grim reality in which small improvements in independence or consistency require step declines in accuracy of nearly $33\%$, although there is a set of solutions at the top of the surface which allow to greatly improve independence (up to $33\%$) while barely changing consistency and accuracy. Finally, the situation in the Adult dataset is very different, with a surface that seems aligned with the consistency axis. This suggests that it is possible to improve consistency while leaving independence and accuracy constant. However, in order to improve independence it is necessary to compromise accuracy while leaving consistency constant. For example, a sacrifice of $4\%$ in accuracy and $1\%$ in consistency allows to improve independence by $33\%$ This analysis sheds light into the exotic situations one might encounter when trying to simultaneously balance performance with IF and GF.

\begin{figure}[!t]
    \centering
    \includegraphics[width=0.49\textwidth]{germanPareto3D.png}
    \includegraphics[width=0.49\textwidth]{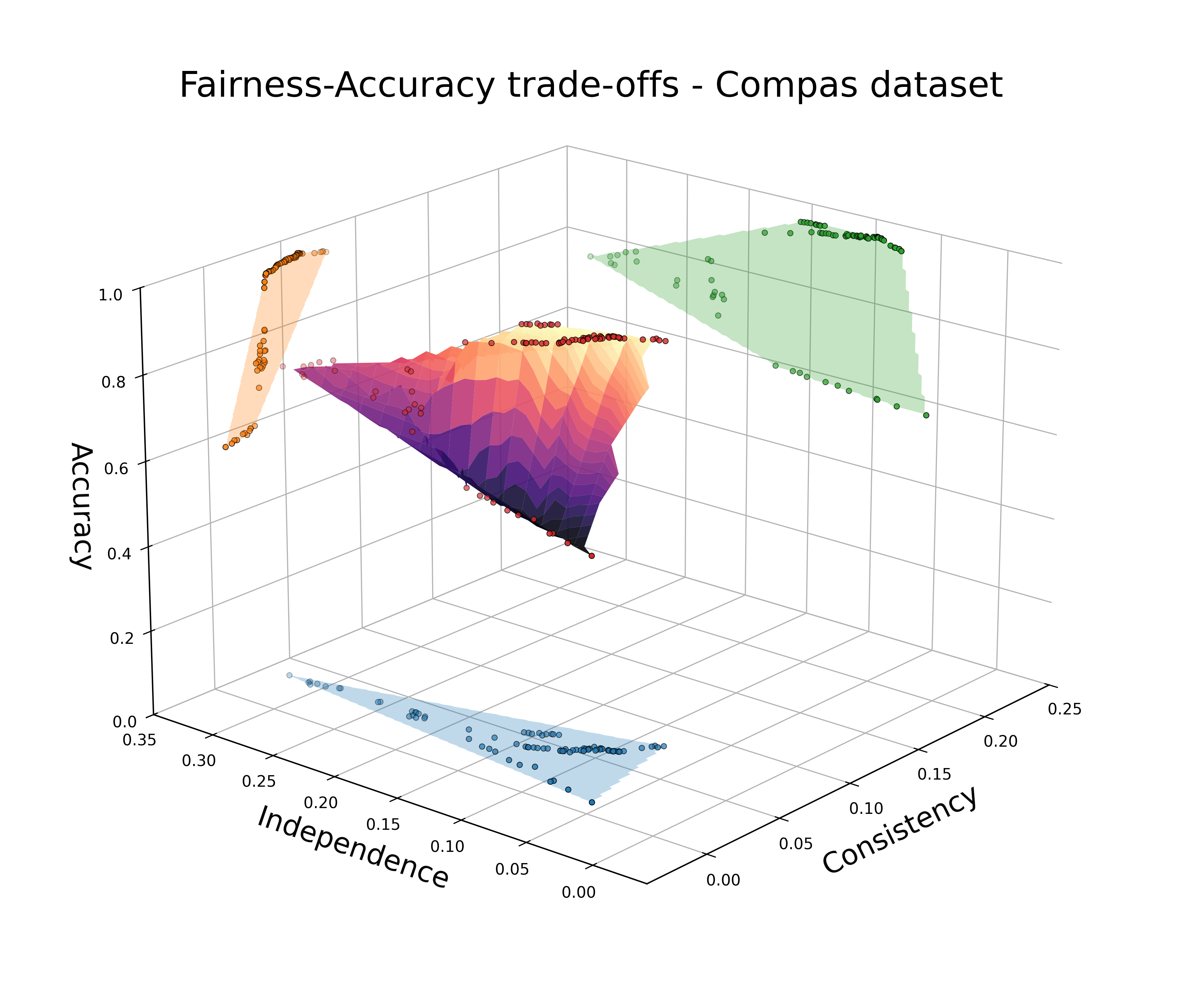}
    \includegraphics[width=0.49\textwidth]{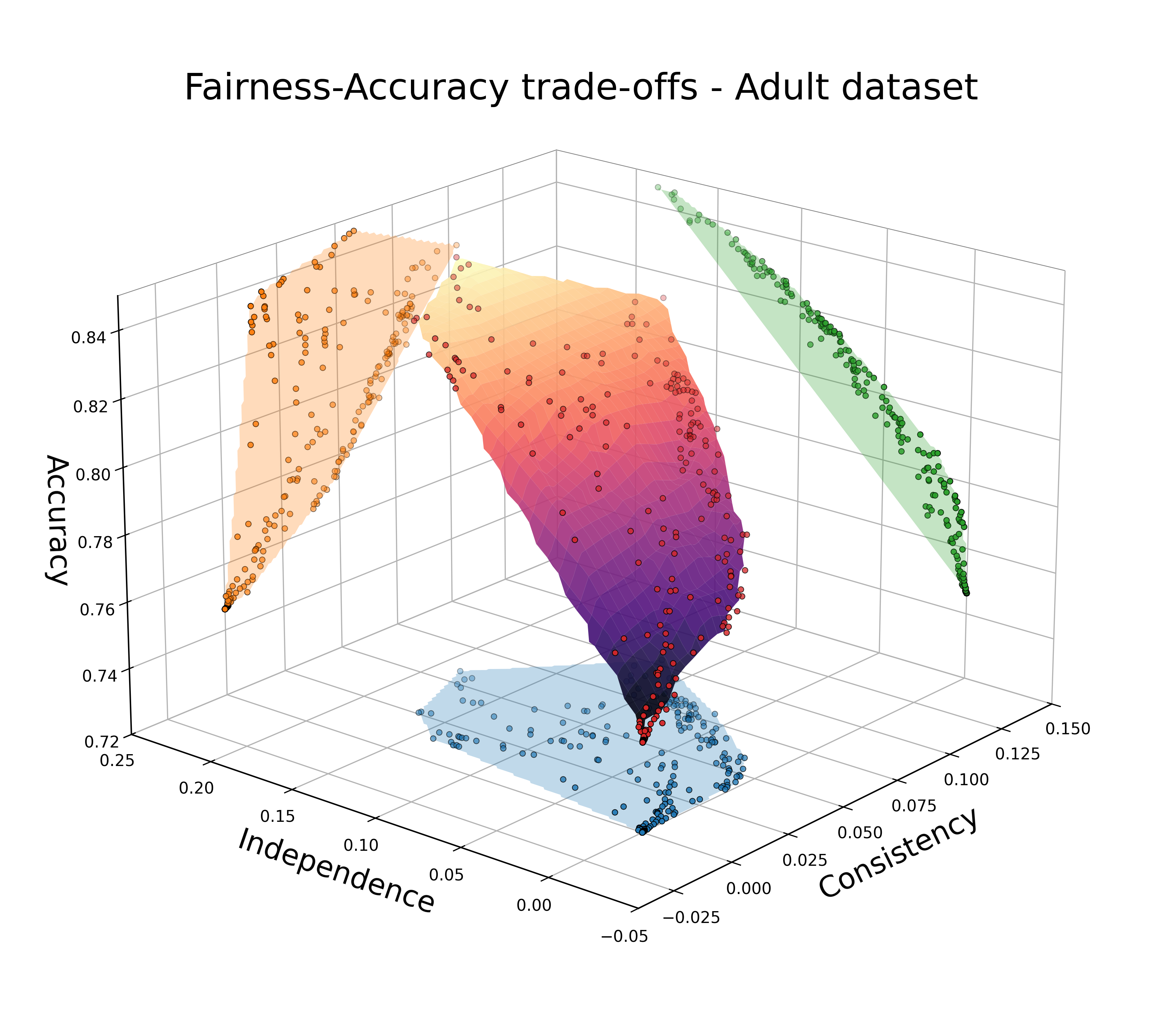}  
    \caption{Fairness-accuracy trade-offs in the Pareto frontier accounting for both independence and consistency at the same time.}
    \label{fig:Pareto3DApp}
\end{figure}

\subsection{Results of the grid search}
\label{subapp:results}

Results for the grid search on all three datasets are reported in Tables \ref{tab:GermanResults} to \ref{tab:AdultResults}. In general, the overarching analysis still holds, with logistic regression providing the most accurate model in general at the cost of group and individual fairness. The proposed topologies compromise performance in order to achieve better fairness metrics, with the kNN topology providing high improvements in said metrics with barely any cost in accuracy.

\begin{table}[htbp]
\centering
\resizebox{\linewidth}{!}{
\begin{tabular}{lccccccc}
\toprule
Model & ACC (\%) & IND (\%) & SUF (\%) & SEP (\%) & CON (\%) & LIP & ENT \\
\midrule
Log. reg. & \textbf{{\footnotesize 79.30}{\scriptsize $\pm$0.68}} & {\footnotesize 26.34}{\scriptsize $\pm$2.91} & {\footnotesize 8.71}{\scriptsize $\pm$6.53} & {\footnotesize 11.60}{\scriptsize $\pm$4.43} & {\footnotesize 34.92}{\scriptsize $\pm$1.64} & {\footnotesize 0.33}{\scriptsize $\pm$0.00} & {\footnotesize 1.92}{\scriptsize $\pm$0.08} \\
\hdashline
kNN & {\footnotesize 77.60}{\scriptsize $\pm$1.32} & \textbf{{\footnotesize 13.13}{\scriptsize $\pm$3.80}} & {\footnotesize 5.31}{\scriptsize $\pm$4.09} & \textbf{{\footnotesize 3.56}{\scriptsize $\pm$2.38}} & \textbf{{\footnotesize 30.44}{\scriptsize $\pm$1.85}} & \textbf{{\footnotesize 0.33}{\scriptsize $\pm$0.00}} & {\footnotesize 1.74}{\scriptsize $\pm$0.14} \\
Mixed kNN & {\footnotesize 77.70}{\scriptsize $\pm$1.03} & {\footnotesize 27.23}{\scriptsize $\pm$4.06} & {\footnotesize 8.04}{\scriptsize $\pm$3.84} & {\footnotesize 11.65}{\scriptsize $\pm$1.44} & {\footnotesize 32.22}{\scriptsize $\pm$2.73} & {\footnotesize 0.33}{\scriptsize $\pm$0.00} & {\footnotesize 1.75}{\scriptsize $\pm$0.11} \\
Unit & {\footnotesize 77.90}{\scriptsize $\pm$0.97} & {\footnotesize 20.57}{\scriptsize $\pm$3.15} & \textbf{{\footnotesize 4.81}{\scriptsize $\pm$3.33}} & {\footnotesize 4.46}{\scriptsize $\pm$3.30} & {\footnotesize 34.14}{\scriptsize $\pm$1.16} & {\footnotesize 0.33}{\scriptsize $\pm$0.00} & {\footnotesize 1.77}{\scriptsize $\pm$0.10} \\
Mixed unit & {\footnotesize 79.20}{\scriptsize $\pm$0.98} & {\footnotesize 32.69}{\scriptsize $\pm$3.95} & {\footnotesize 8.22}{\scriptsize $\pm$5.52} & {\footnotesize 14.14}{\scriptsize $\pm$4.28} & {\footnotesize 34.38}{\scriptsize $\pm$2.99} & {\footnotesize 0.33}{\scriptsize $\pm$0.00} & {\footnotesize 1.91}{\scriptsize $\pm$0.12} \\
Subset & {\footnotesize 76.60}{\scriptsize $\pm$0.86} & {\footnotesize 15.36}{\scriptsize $\pm$2.79} & {\footnotesize 10.52}{\scriptsize $\pm$3.65} & {\footnotesize 8.79}{\scriptsize $\pm$3.74} & {\footnotesize 33.24}{\scriptsize $\pm$7.18} & {\footnotesize 0.33}{\scriptsize $\pm$0.01} & \textbf{{\footnotesize 1.64}{\scriptsize $\pm$0.08}} \\
\bottomrule
\end{tabular}
}
\caption{Results for the german dataset. The best result for each metric is highlighted in black.}
\label{tab:GermanResults}
\end{table}

\begin{table}[htbp]
\centering
\resizebox{\linewidth}{!}{
\begin{tabular}{lccccccc}
\toprule
Model & ACC (\%) & IND (\%) & SUF (\%) & SEP (\%) & CON (\%) & LIP & ENT \\
\midrule
Log. reg. & {\footnotesize 95.54}{\scriptsize $\pm$0.21} & {\footnotesize 9.99}{\scriptsize $\pm$0.67} & {\footnotesize 0.76}{\scriptsize $\pm$0.26} & {\footnotesize 2.54}{\scriptsize $\pm$0.24} & {\footnotesize 6.18}{\scriptsize $\pm$0.15} & {\footnotesize 0.41}{\scriptsize $\pm$0.00} & {\footnotesize 10.74}{\scriptsize $\pm$0.54} \\
\hdashline
kNN & {\footnotesize 95.02}{\scriptsize $\pm$0.34} & {\footnotesize 10.57}{\scriptsize $\pm$0.68} & \textbf{{\footnotesize 0.66}{\scriptsize $\pm$0.61}} & {\footnotesize 1.90}{\scriptsize $\pm$0.58} & \textbf{{\footnotesize 5.99}{\scriptsize $\pm$0.20}} & \textbf{{\footnotesize 0.41}{\scriptsize $\pm$0.00}} & \textbf{{\footnotesize 9.60}{\scriptsize $\pm$0.69}} \\
Mixed kNN & {\footnotesize 95.72}{\scriptsize $\pm$0.23} & {\footnotesize 12.30}{\scriptsize $\pm$0.26} & {\footnotesize 2.58}{\scriptsize $\pm$0.25} & {\footnotesize 1.88}{\scriptsize $\pm$0.33} & {\footnotesize 7.02}{\scriptsize $\pm$0.21} & {\footnotesize 0.44}{\scriptsize $\pm$0.00} & {\footnotesize 11.22}{\scriptsize $\pm$0.62} \\
Unit & \textbf{{\footnotesize 95.79}{\scriptsize $\pm$0.10}} & \textbf{{\footnotesize 9.58}{\scriptsize $\pm$0.38}} & {\footnotesize 0.77}{\scriptsize $\pm$0.24} & {\footnotesize 2.83}{\scriptsize $\pm$0.24} & {\footnotesize 6.20}{\scriptsize $\pm$0.13} & {\footnotesize 0.41}{\scriptsize $\pm$0.00} & {\footnotesize 11.37}{\scriptsize $\pm$0.29} \\
Mixed unit & {\footnotesize 95.24}{\scriptsize $\pm$0.37} & {\footnotesize 10.89}{\scriptsize $\pm$0.80} & {\footnotesize 3.59}{\scriptsize $\pm$1.21} & {\footnotesize 1.84}{\scriptsize $\pm$0.56} & {\footnotesize 7.43}{\scriptsize $\pm$0.50} & {\footnotesize 0.43}{\scriptsize $\pm$0.00} & {\footnotesize 10.06}{\scriptsize $\pm$0.91} \\
Subset & {\footnotesize 95.49}{\scriptsize $\pm$0.47} & {\footnotesize 10.69}{\scriptsize $\pm$0.60} & {\footnotesize 3.76}{\scriptsize $\pm$0.95} & \textbf{{\footnotesize 1.72}{\scriptsize $\pm$0.53}} & {\footnotesize 7.41}{\scriptsize $\pm$0.48} & {\footnotesize 0.44}{\scriptsize $\pm$0.00} & {\footnotesize 10.74}{\scriptsize $\pm$1.39} \\
\bottomrule
\end{tabular}
}
\caption{Results for the Compas dataset. The best result for each metric is highlighted in black.}
\label{tab:CompassResults}
\end{table}

\begin{table}[htbp]
\centering
\resizebox{\linewidth}{!}{
\begin{tabular}{lccccccc}
\toprule
Model & ACC (\%) & IND (\%) & SUF (\%) & SEP (\%) & CON (\%) & LIP & ENT \\
\midrule
Log. reg. & {\footnotesize 80.52}{\scriptsize $\pm$0.46} & {\footnotesize 6.55}{\scriptsize $\pm$1.02} & {\footnotesize 2.24}{\scriptsize $\pm$1.00} & \textbf{{\footnotesize 5.11}{\scriptsize $\pm$2.21}} & {\footnotesize 7.27}{\scriptsize $\pm$0.33} & {\footnotesize 0.49}{\scriptsize $\pm$0.00} & {\footnotesize 2.07}{\scriptsize $\pm$0.06} \\
\hdashline
kNN & {\footnotesize 80.86}{\scriptsize $\pm$0.16} & {\footnotesize 14.05}{\scriptsize $\pm$0.42} & {\footnotesize 3.04}{\scriptsize $\pm$0.39} & {\footnotesize 11.79}{\scriptsize $\pm$0.80} & {\footnotesize 16.77}{\scriptsize $\pm$0.24} & {\footnotesize 0.57}{\scriptsize $\pm$0.00} & {\footnotesize 2.11}{\scriptsize $\pm$0.02} \\
Mixed kNN & \textbf{{\footnotesize 82.13}{\scriptsize $\pm$0.05}} & {\footnotesize 17.20}{\scriptsize $\pm$0.61} & {\footnotesize 3.08}{\scriptsize $\pm$0.22} & {\footnotesize 5.15}{\scriptsize $\pm$1.83} & {\footnotesize 14.86}{\scriptsize $\pm$0.51} & {\footnotesize 0.57}{\scriptsize $\pm$0.00} & {\footnotesize 2.30}{\scriptsize $\pm$0.01} \\
Unit & {\footnotesize 73.78}{\scriptsize $\pm$1.52} & {\footnotesize 1.89}{\scriptsize $\pm$1.96} & {\footnotesize 1.94}{\scriptsize $\pm$2.70} & {\footnotesize 48.63}{\scriptsize $\pm$28.49} & \textbf{{\footnotesize 3.43}{\scriptsize $\pm$5.20}} & \textbf{{\footnotesize 0.19}{\scriptsize $\pm$0.24}} & {\footnotesize 1.41}{\scriptsize $\pm$0.10} \\
Mixed unit & {\footnotesize 72.83}{\scriptsize $\pm$2.28} & {\footnotesize 3.46}{\scriptsize $\pm$4.02} & {\footnotesize 2.25}{\scriptsize $\pm$2.57} & {\footnotesize 36.36}{\scriptsize $\pm$11.80} & {\footnotesize 4.76}{\scriptsize $\pm$7.34} & {\footnotesize 0.20}{\scriptsize $\pm$0.25} & \textbf{{\footnotesize 1.35}{\scriptsize $\pm$0.14}} \\
Subset & {\footnotesize 73.41}{\scriptsize $\pm$1.61} & \textbf{{\footnotesize 1.70}{\scriptsize $\pm$2.01}} & \textbf{{\footnotesize 1.67}{\scriptsize $\pm$2.31}} & {\footnotesize 35.37}{\scriptsize $\pm$34.18} & {\footnotesize 3.54}{\scriptsize $\pm$5.59} & {\footnotesize 0.19}{\scriptsize $\pm$0.24} & {\footnotesize 1.39}{\scriptsize $\pm$0.10} \\
\bottomrule
\end{tabular}
}
\caption{Results for the Adult dataset. The best result for each metric is highlighted in black.}
\label{tab:AdultResults}
\end{table}

\subsection{Understanding fairness}
\label{subapp:understanding}

This Appendix concludes with the analysis of the SHAP variable influence for all datasets in Figures \ref{fig:InfluenceGerman} to \ref{fig:InfluenceAdult}, which show the normalized and absolute aggregated SHAP values for all predictions.\\
The results for the German dataset prove that the graph models use the same variables as logistic regression to make their predictions, although the importance is dampened depending on the underlying topology: this effect is less pronounced on sparse graphs like the unit ball configuration. On the other hand, results for both the Compas and Adult dataset exhibit that graph models can also exacerbate the effect of certain variables to encourage fairness. This is particularly striking in the case of the sensitive variable in the Compas dataset, where the relative importance of race is increased. However, this falls in line with the findings of \cite{ADVERSARIAL}, whose model also increases the influence of the sensitive variable in order to debias. Finally, we find that FSD models tend to exacerbate low-variance variables, which contributes to the reduction in individual bias seen in the main paper.

\begin{figure}[!t]
    \centering
    \includegraphics[width=0.49\textwidth]{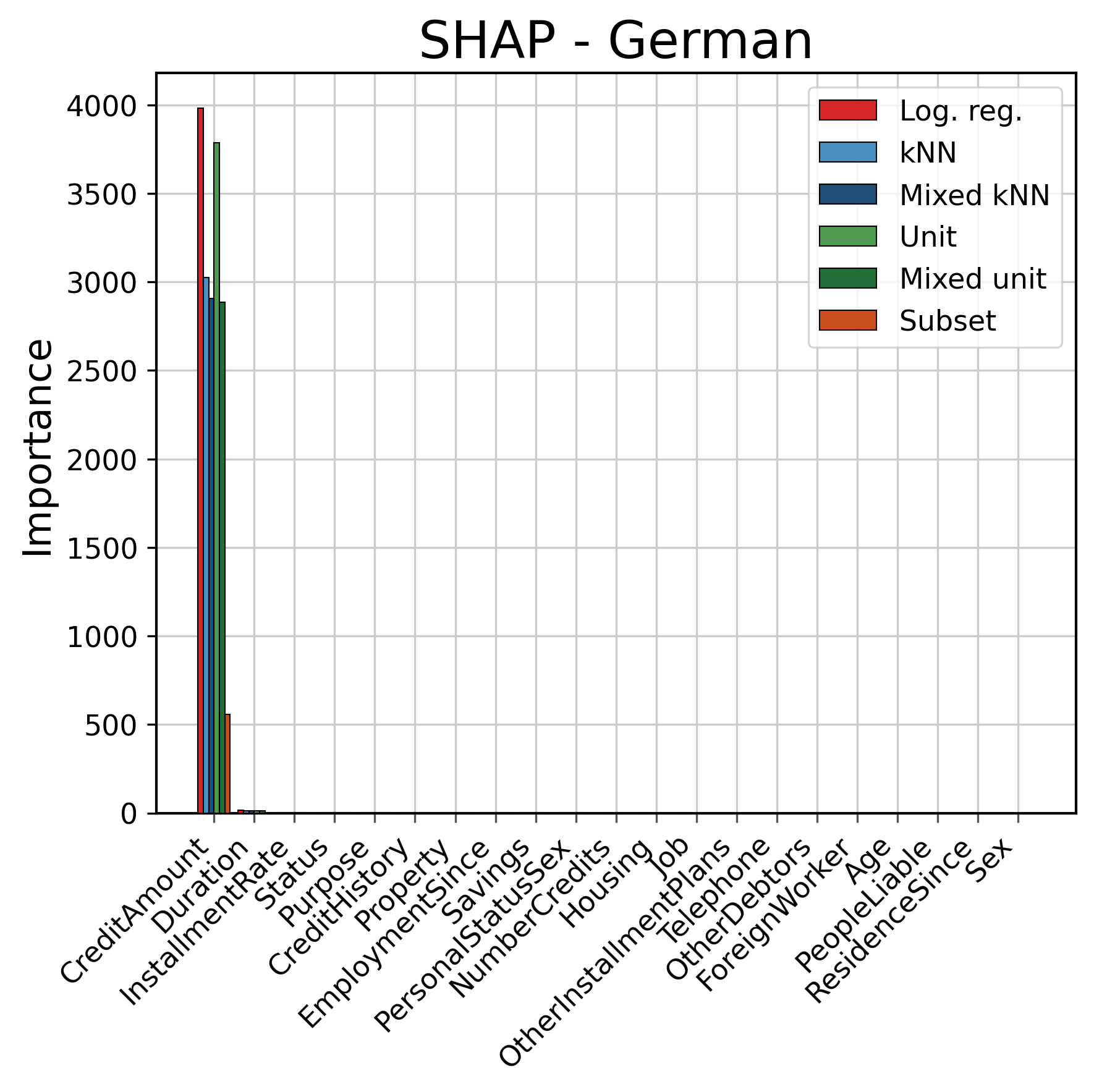}
    \includegraphics[width=0.49\textwidth]{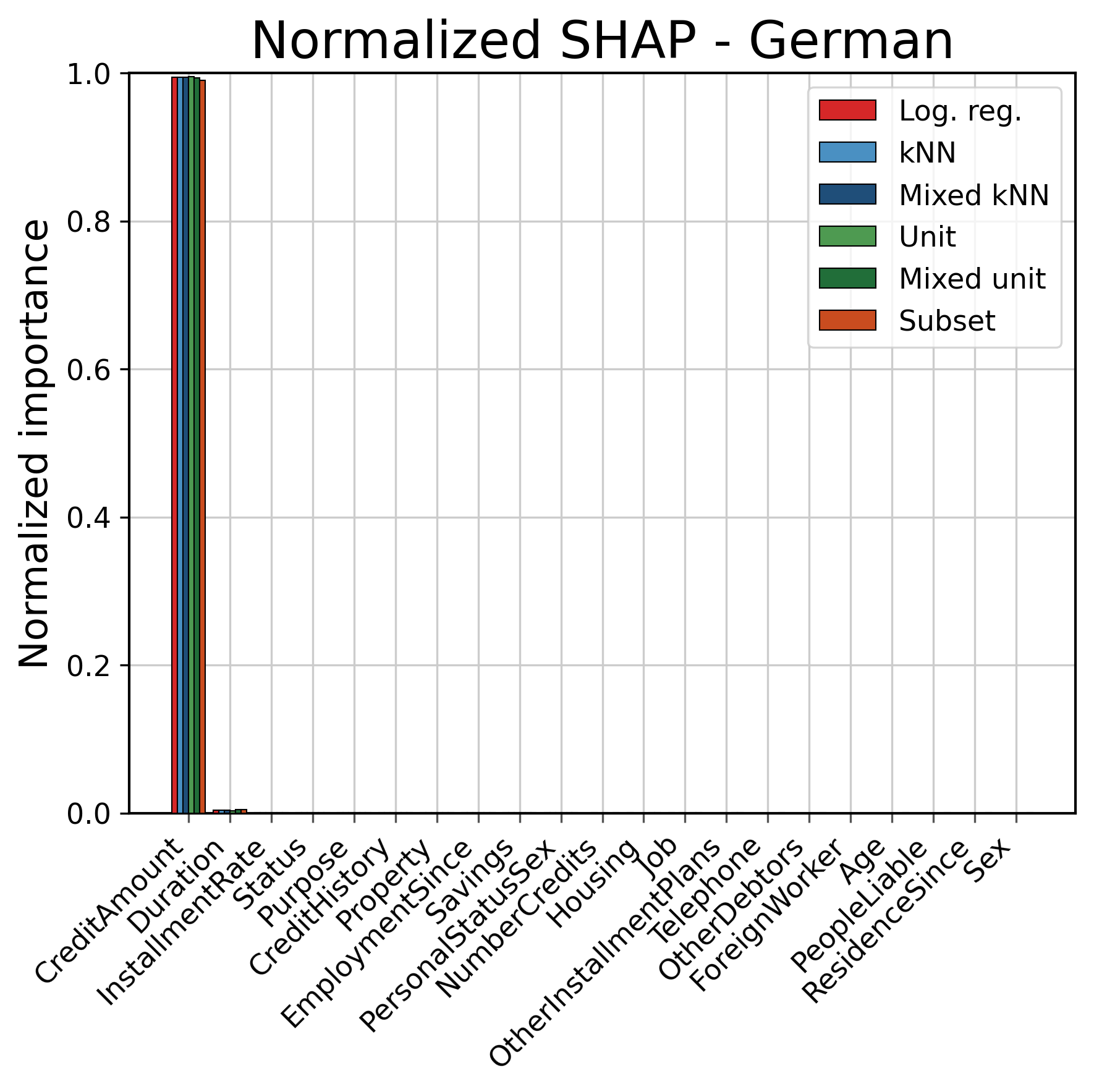} 
        \caption{SHAP variable influence for the German dataset. The left image shows average absolute SHAP importance, while the right image is normalized.}
    \label{fig:InfluenceGerman}
\end{figure}

\begin{figure}[!t]
\centering
    \includegraphics[width=0.49\textwidth]{FigCompassSHAPvaluesCompact.png}
    \includegraphics[width=0.49\textwidth]{FigCompassNormalizedSHAPvaluesCompact.png} 

        \caption{SHAP variable influence for the Compas dataset. The left image shows average absolute SHAP importance, while the right image is normalized.}
    \label{fig:InfluenceCompas}
\end{figure}

\begin{figure}[!t]
    \centering
    \includegraphics[width=0.49\textwidth]{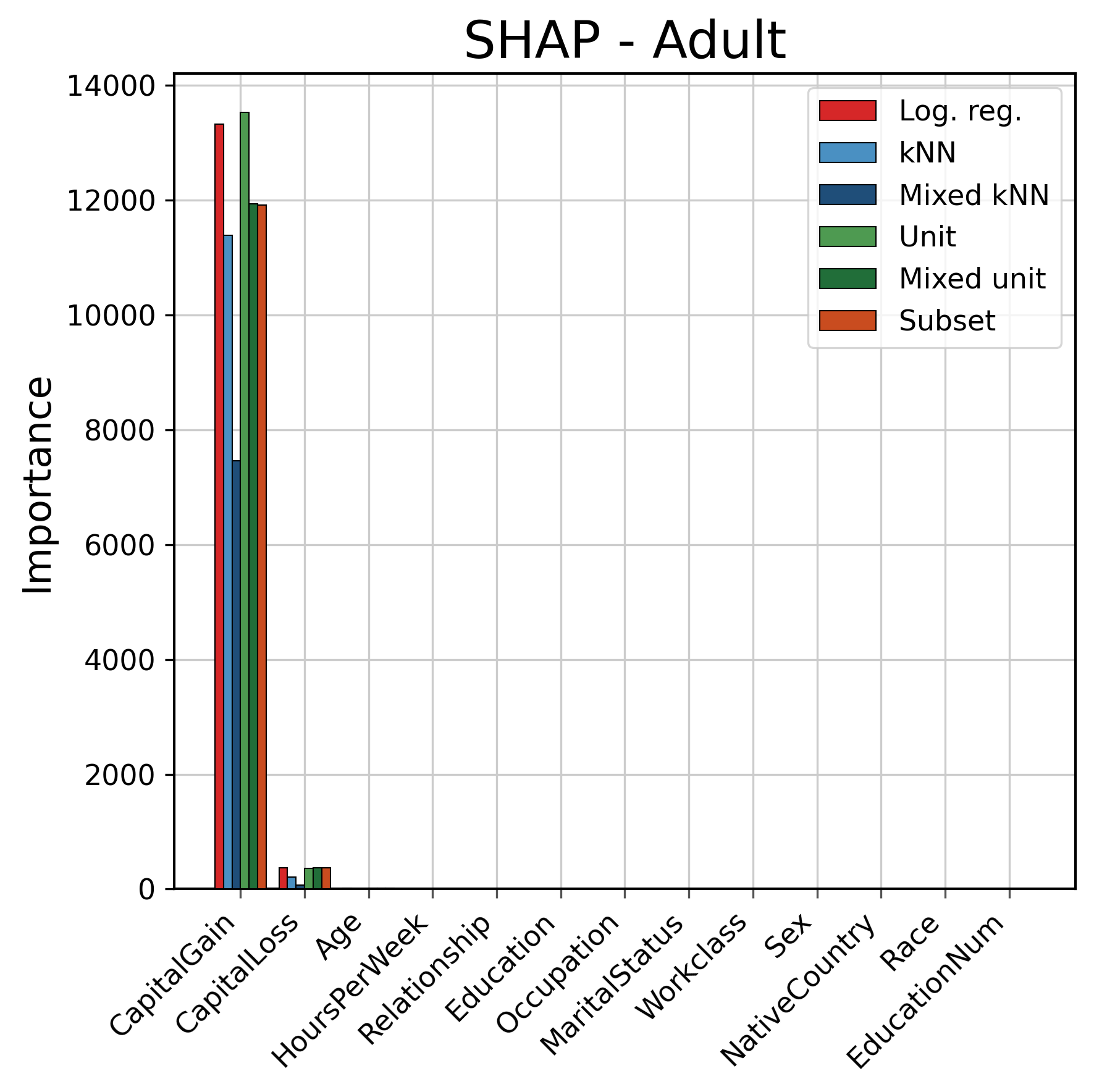}
    \includegraphics[width=0.49\textwidth]{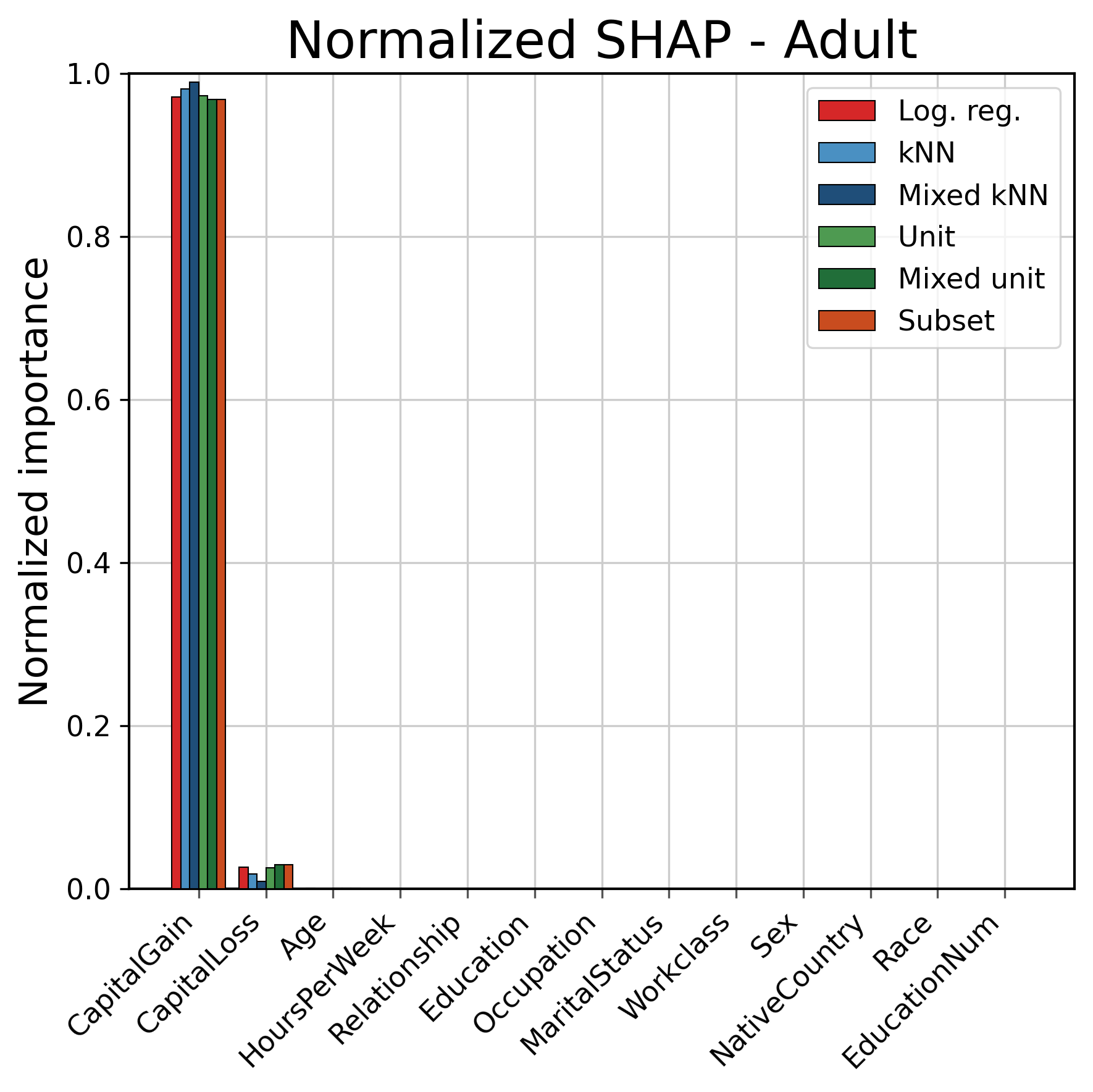}   
    \caption{SHAP variable influence for the Adult dataset. The left image shows average absolute SHAP importance, while the right image is normalized.}
    \label{fig:InfluenceAdult}
\end{figure}

%% else use the following coding to input the bibitems directly in the
%% TeX file.

% \begin{thebibliography}{00}

% %% \bibitem{label}
% %% Text of bibliographic item

% \bibitem{}

% \end{thebibliography}
\end{document}